\documentclass{article}

\PassOptionsToPackage{numbers, compress}{natbib}



    \usepackage[final]{neurips_2023}


\usepackage[utf8]{inputenc} 
\usepackage[T1]{fontenc}    
\usepackage{hyperref}       
\usepackage{url}            
\usepackage{booktabs}       
\usepackage{amsfonts}       
\usepackage{nicefrac}       
\usepackage{microtype}      
\usepackage{xcolor}         
\usepackage[normalem]{ulem}

\usepackage{multirow}
\usepackage{xfrac}
\usepackage{times}
\usepackage{url}
\usepackage[utf8]{inputenc}
\usepackage{graphicx}
\usepackage{amsmath}
\usepackage{amsthm}
\usepackage{booktabs}
\usepackage{algorithm}
\usepackage{algorithmic}

\usepackage{microtype}
\usepackage{amsmath}
\usepackage{amsfonts}
\usepackage{amssymb}
\usepackage{amsthm}
\usepackage{mathtools}
\usepackage{indentfirst}
\usepackage{array}
\usepackage{subfigure}
\usepackage{algorithm}
\usepackage{algorithmic}
\usepackage{enumitem}
\usepackage{amsopn}
\usepackage{bm}
\usepackage{url}
\usepackage{booktabs}
\usepackage{yhmath}
\usepackage{bbold} 
\usepackage{mathrsfs}
\usepackage{soul}
\usepackage{tikz}
\usepackage{threeparttable}
\usepackage[mathscr]{euscript}

\usepackage{caption}


\newcommand{\ren}[1]{\left ( #1 \right )}
\newcommand{\norm}[1]{\left\lVert#1\right\rVert}

\DeclareMathOperator\vol{Vol}

\newcommand{\uce}{\text{UCE}}

\newcommand{\relu}{\text{RELU}}
\newcommand{\logsigmoid}{\text{LogSigmoid}}

\newcommand{\h}[1]{\mathbf{#1}}
\newcommand{\te}[1]{\text{#1}}
\newcommand{\hs}[1]{\boldsymbol{#1}}

\newtheorem{theorem}{Theorem}
 
\newtheorem{corollary}[theorem]{Corollary} 

\newtheorem{example}[theorem]{Example}
\newtheorem{definition}[theorem]{Definition}

\newcommand {\feng}[1]{{\color{violet}#1}\normalfont}



\title{Improvements on Uncertainty Quantification for Node Classification via Distance-Based Regularization}

%
%

%

\author{%
    Russell Alan Hart\\
  The University of Texas at Dallas \\
  \texttt{rah150030@utdallas.edu} \\
  \And
  Linlin Yu\\
  The University of Texas at Dallas \\
  \texttt{linlin.yu@utdallas.edu} \\
  \AND
  Yifei Lou \\
  University of North Carolina at Chapel Hill \\
  \texttt{yflou@unc.edu} \\
  \And
  Feng Chen \thanks{corresponding author} \\
  The University of Texas at Dallas \\
  \texttt{feng.chen@utdallas.edu} \\
}
  
\begin{document}
\maketitle

\begin{abstract}
Deep neural networks have achieved significant success in the last decades, but they are not well-calibrated and often produce unreliable predictions. A large number of literature relies on uncertainty quantification to evaluate the reliability of a learning model, which is particularly important for applications of out-of-distribution (OOD) detection and misclassification detection. We are interested in uncertainty quantification for interdependent node-level classification. We start our analysis based on graph posterior networks (GPNs) that optimize the uncertainty cross-entropy (UCE)-based loss function. We describe the theoretical limitations of the widely-used UCE loss. To alleviate the identified drawbacks, we propose a distance-based regularization that encourages clustered OOD nodes to remain clustered in the latent space. We conduct extensive comparison experiments on eight standard datasets and demonstrate that the proposed regularization outperforms the state-of-the-art in both OOD detection and misclassification detection.

\end{abstract}

\section{Introduction}
In recent years, deep neural networks (DNNs) 
have been widely used in various fields \citep{filos2020can,roy2022does}.
However, some neural networks provide under-confident \citep{wang2021confident} or over-confident \citep{guo2017calibration} predictions,
limiting their practical applications in risk-constrained and safety-critical fields, such as drug discovery \citep{yu2022uncertainty}, autonomous driving \citep{shafaei2018uncertainty}, and medical diagnosis \citep{begoli2019need}. 
Take autonomous drug design for an example.
Uncertainty estimation on the reliability of model predictions helps to support molecular reasoning and experimental design by saving considerable time and resources \citep{mervin2021uncertainty}.
It is important to estimate the predictive uncertainty of a DNN, i.e.,  indicating when its predictions are likely incorrect.
There are two main types of uncertainty:  epistemic uncertainty (knowledge uncertainty) and aleatoric uncertainty (data uncertainty) \citep{der2009aleatory}.  Epistemic uncertainty is due to the lack of knowledge about unseen data.
Aleatoric uncertainty is 
caused by the inherent complexity of the data, which cannot be reduced by increasing the training data, including sources of noise such as homoscedastic or heteroscedastic noise \citep{kendall2017uncertainties}. These two uncertainty types are typically used for out-of-distribution (OOD) and misclassification detection, respectively.

Most models have been introduced for uncertainty estimation on i.i.d.  inputs, such as image and tabular data.  However, the uncertainty estimation for classifying interdependent nodes in attributed graph data, such as social networks and citation networks, is under-explored. 
This work focuses on the node classification tasks with great potential to generalize to others 
with interdependent inputs. 
Among various graph neural networks (GNNs)  for processing graph data structures~\citep{kipf2017semisupervised, veličković2018graph, gasteiger2018combining, de2020natural},
graph posterior network (GPN) has been developed for semi-supervised node classification tasks~\citep{stadler2021graph} that achieves state-of-the-art results in uncertainty estimation.



The major \textbf{contributions} of this paper are three-fold: (1) We 
theoretically analyze the limitations of GPN
at OOD detection when minimizing uncertainty cross-entropy (UCE), a widely used loss function for uncertainty estimation. (2) Motivated by the aforementioned limitations, we propose a distance-based regularization  that considers the prior knowledge that OOD-specific features are useful for learning representational space mappings.  (3) We conduct extensive experiments comparing our proposed model with five state-of-the-art baselines on eight graph datasets for two uncertainty quantification tasks: OOD detection and misclassification detection tasks. The results demonstrate that our proposed regularization can improve the quality of uncertainty quantification. 


\section{Related Work}
This section reviews existing uncertainty estimation methods for i.i.d data and graph data. 


{\bf Uncertainty Quantification for i.i.d inputs $-$}  There is plentiful research on uncertainty quantification on i.i.d. inputs   as discussed in a recent survey \citep{abdar2021review}. The first family quantifies the predictive uncertainty of a DNN via multiple forward passes, such as deep ensembles and dropout-based Bayesian neural networks (BNNs). Deep ensembles \citep{lakshminarayanan2017simple} intuitively sample multiple predictions by training an ensemble of deep neural networks and aggregate the results. Dropout-based methods \citep{gal2016dropout} utilize multiple stochastic forward passes implemented with different dropout initializations to approximate the posterior distribution of network weights. 
However, the substantial memory and computational demands required for training and testing make it impractical for real-time applications.
The second family quantifies uncertainty using deterministic single forward-pass neural networks, including density-based methods and distribution-based methods.  The density-based approaches typically fit a distribution (e.g., class-wise Gaussian distribution \citep{bilovs2019uncertainty,liu2020simple,van2020uncertainty})
in the representation space of a pre-trained or fine-tuned DNN, followed by the associated PDF function to quantify different uncertainty types. The distribution-based methods train a deterministic neural network that directly predicts the conjugate prior distribution of the class probabilities of the input feature vector, called Dirichlet distribution, for uncertainty quantification. The predicted Dirichlet distribution can be interpreted as an approximation of the posterior distribution of class probabilities conditioned on the input feature vector.
Popular distribution-based models are prior networks \citep{malinin2018predictive}, evidential networks \citep{sensoy2018evidential}, and posterior networks (PN)~\citep{charpentier2020posterior}, 
all using UCE as the loss function with different regularizations to improve the quality of uncertainty quantification.

{\bf Uncertainty Quantification for Graphs $-$} As pointed out in the survey \citep{abdar2021review},   
uncertainty quantification on GNNs and semi-supervised learning is under-explored.  Most existing models for uncertainty quantification on graphs are either dropout-based or BNN-based methods that typically drop or assign probabilities to edges. 
There are two approaches using deterministic single-pass GNNs to quantify uncertainty. One is called graph-based kernel Dirichlet distribution estimation (GKDE)~\citep{zhao2020uncertainty}, which consists of evidential GCN, graph-based kernel, teacher network, dropout, and loss regularization. Another method is the GPN model that combines PN and personalized page rank (PPR) message passing to disentangle uncertainty with and without network effects. In addition, a recent method \citep{wu2023energy} used standard classification
loss for OOD detection on graphs together with an energy function that is directly extracted from GNN, however, it is limited to OOD detection, not generally on the topic of uncertainty quantification.

\section{Preliminary}

We discuss the problem setting of uncertainty quantification on the task of semi-supervised node classification in Section~\ref{sect:setting}. In particular, we use a deep neural network to predict the multinomial uncertainty for each node and evaluate the aleatoric uncertainty and epistemic uncertainty by the prediction result. In Section \ref{GPN}, we give a brief review of the GPN model~\cite{stadler2021graph}, which serves as a fundamental framework for our analysis and motivation for the proposed approach.


\subsection{Problem Setting}\label{sect:setting}

We define a graph with attributed node-level features  $\mathcal{G}= (\mathcal{V}, \mathcal{E}, X, Y_\mathbb{L})$, where 
 $\mathcal{V}$ is a set of nodes on the graph with cardinality $N$ and $\mathcal{E} \subset \mathcal{V} \times \mathcal{V}$ denotes a set of graph edges that can be represented by an adjacency matrix $W$. A feature matrix is denoted by $X = [\h x_1, \dots, \h x_N]^T \in \mathbb{R}^{N \times d}$, in which each row  $\h x_i \in \mathbb{R}^d$ is a feature vector of node $i$ with dimension $d$. Under the semi-supervised learning setting, a set of labels is available, denoted by $Y_{\mathbb{L}}=\{y_i \mid i \in \mathbb{L}\}$, where $\mathbb{L} \subset \mathcal{V}$ and $y_i\in \{1, \dots, K\}$ for $K$ classes. 
 
Our goal
 is to design and learn a deterministic GNN based on $\mathcal{G}$ that takes the feature matrix ${X}$ and the adjacency matrix $W$  as INPUT 
 and predicts the parameters of a Dirichlet distribution for each node $i\in \mathcal{V}$ as OUTPUT, denoted as ${\bm \alpha}_i$, which is often referred to as the concentration parameters. Therefore, the network function $F_\Theta$ can be expressed as: $\mathcal{A} = F_\Theta({ X},W)$, where $\mathcal{A} := [{\bm \alpha}_i]_{i \in \mathcal{V}}$ is a matrix and $\Theta$ refers to  network parameters. 
The statistical relations between the class label ${\bf y}_i$, the vector of class probabilities ${\bf p}_i$, and the Dirichlet parameters ${\bm \alpha}_i$ can be represented as:
 \begin{eqnarray}
     {\bf y}_i | {\bf p}_i \sim \text{Cat}({\bf p}_i), \ {\bf p}_i | {\bm \alpha}_i \sim \text{Dir}({\bm \alpha}_i),\ \ [{\bm \alpha}_i]_{i \in \mathcal{V}} = F_\Theta({X}, W).
 \end{eqnarray}

Based on the predictions of $\mathcal{A}$, the expected vector of class probabilities $\bar{\bf p}_i := \mathbb{E}[{\bf p}_i | {\bm \alpha}_i] = [\alpha_{i1} /\alpha_{i0}, \cdots, \alpha_{iK} /\alpha_{i0}]^T$, where $\alpha_{i0} = \sum_{k=1}^K \alpha_{ik}$ is called the Dirichlet strength. The aleatoric  and epistemic uncertainties about the classification of each node $i$ can be calculated as:  
\begin{equation}\label{eq:uncertainty}
     u^{\te{alea}}_i = -\text{max} \{ \bar{p}_{i1}, \cdots, \bar{p}_{iK}\} \quad \mbox{and}\quad u^{\te{epis}}_i = -\alpha_{i0},
 \end{equation} 
 respectively. The aleatoric uncertainty is measured by the negative of the largest class probability in $\bar{\bf p}_i$. This uncertainty is higher when the largest class probability in $\bar{\bf p}_i$ is lower, which implies that the model is less confident and the probabilities are more evenly spread across classes. On the other hand, the epistemic uncertainty is measured by the negative of the Dirichlet strength $\alpha_{i0}$, whose value is higher when the Dirichlet strength is lower, meaning that the model is unfamiliar with the feature vector of node $i$ and the predicted Dirichlet distribution is less concentrated around a specific point or set of points on the probability simplex~\cite{malinin2018predictive}. We note that a high aleatoric uncertainty may not indicate a high epistemic uncertainty and vice versa. For example,  two evidence parameters ${\bm \alpha}_i = [1, \cdots, 1]$ and ${\bm \alpha}_j = [1000, \cdots, 1000]$ have the same aleatoric uncertainty: $-1/K$, since they have the same projected class probabilities; but their epistemic uncertainties differ drastically: $K$ versus $1000 K$. Please refer to~\cite{stadler2021graph, ulmer2023prior} for rationales of aleatoric and epistemic uncertainties in (\ref{eq:uncertainty}).

\subsection {Graph Posterior Network} \label{GPN}
Our framework is based on graph posterior network (GPN)~\cite{stadler2021graph}, which extends posterior network (PN) \cite{charpentier2020posterior} to semi-supervised node classification. GPN consists of three main steps. First, a feature encoder maps the original features   onto a low-dimensional latent space   with a simple two-layer multi-layer perception (MLP) encoder. Second, a radial normalizing flow \cite{rezende2015variational} estimates the density of the latent space per class.
Lastly, a personalized page rank message passing scheme \cite{gasteiger2018combining}  diffuses the pseudo counts (density multiplied by the number of training nodes) by taking the graph structure into account.  We summarize the three steps with notations  as follows,
\begin{enumerate}
    \item Multi-layer perceptron for representation learning: $\h z_i = f(\h x_i; {\bm \theta})$ or $f_{\bm \theta}$ in short.
    \item Normalizing flow for density estimation: $g_{\bm \phi}$ for short, and more specifically
    \begin{equation}
        g_{\bm \phi}(\h z_i)_k = N_k \cdot \mathbb{P}(\h z_i | k; {\bm \phi}),
    \end{equation}
    where $N_k$ is the number of training nodes belonging to the class $k$, $\h z_i$ is the embedding vector of node $i$ obtained via the first step,  $\mathbb{P}(\h z_i | k; {\bm \phi})$ is the conditional density per class $k$ estimated by a normalizing flow module, and ${\bm \phi}$ denotes the parameters of this module. GPN also includes the evidence computed prior to the graph aggregation, defined by
    \begin{equation}
    \bm\alpha_i^\te{feat} = g_{\bm \phi}(\h z_i) + \h 1.
\end{equation}
    \item Personalized page rank (PPR) for evidence diffusion: $\bm{\beta}^{\te{aggr}}_{i,k}= \sum_{j \in \mathcal{V}}{\bm \Pi}_{i,j}^{\te{ppr}}g_{\bm \phi}(\h z_j)_k,$ where ${\bm \Pi}_{i,j}^{\te{ppr}}$ refer to the dense PPR scores implicitly reflecting the importance of node $j$ from the perspective of node $i$.
    Then we can get the predicted concentrate parameters $\bm{\alpha}$ with a uniform prior $\h 1$ for a non-degenerated Dirichlet distribution, i.e.,
    \begin{equation}\bm{\alpha}_i = \bm{\beta}_i^{\te{aggr}} + \h 1.\end{equation}
\end{enumerate}
As opposed to GPN, PN is designed for uncertainty estimation for i.i.d. inputs, which only considers the first two steps to predict the Dirichlet distribution  $\text{Dir}(\bm{\alpha}_i^{\text{feat}})$. 

Given the labels of training nodes: $Y_{\mathbb{L}}=\{y_i \mid i \in \mathbb{L}\}$, GPN is trained by minimizing the following Bayesian loss:
\begin{eqnarray}
\mathcal{L} =  \text{UCE}(\mathcal{A}, Y) + \lambda \sum\nolimits_{i\in \mathbb{L}} \mathbb{H}[\text{Dir}(\bm{\alpha}_i)]. \label{eqn:GPN-loss}
\end{eqnarray}
The first term in \eqref{eqn:GPN-loss}, called uncertainty cross entropy (UCE)~\citep{DBLP:conf/nips/CharpentierBG19}, is defined by
\begin{align}\label{eq:uce}
    \text{UCE}(\mathcal{A}, Y) = \sum_{i \in \mathbb{L}} \mathbb{E}_{\bm{p}_i\sim \text{Dir}(\bm{\alpha}_i)} \left[-\log \mathbb{P}(y_i | {\bf p}_i)\right] = \sum_{i\in\mathbb L}\sum_{k \in [K]} y_{ik}(\Psi(\alpha_{i0}) - \Psi(\alpha_{ik})),
\end{align}
where $Y=[{\bf y}_i]_{i \in [N]}$,  ${\bf y}_i\in \{0, 1\}^K$ is the one-hot encoded ground-truth class of the node $i$, and $\Psi$ is the digamma function, in terms of the Gamma function by: $\Psi(x) = \frac{\Gamma^\prime (x)}{\Gamma(x)}$.
 Minimizing UCE is known to increase confidence in classifying observed data (training nodes in this context). The second term in \eqref{eqn:GPN-loss} is based on the entropy of each node-level Dirichlet distribution $\text{Dir}(\bm{\alpha}_i)$ that favors smooth distributions. 
 For more details on GPN, please refer to Appendix \ref{sec:proof}.




\section{Our Contributions}
This section provides a series of theoretical analyses relating to the UCE loss term and the GPN
 model for detecting the OOD nodes, followed by a partial remedy to derived issues via two distance-based regularizations. Specifically, we prove in Theorem~\ref{thm:Ball} that under certain conditions, UCE can be made arbitrarily small with the limiting case of UCE equal to zero in Corollary \ref{cor:UCE0}. Theorem \ref{thm:MappingToPointSetsInfiniteDensities} gives a construction to make the UCE to be zero. As UCE does not involve the OOD nodes, Theorem \ref{thm:FarOOD} and Corollary \ref{cor:NearOOD} elucidate scenarios for possibly detecting the OOD nodes. Lastly, Theorem \ref{theorm:graph-structure} presents a special situation where GPN fails to detect the OOD nodes.

\subsection{Theoretical analysis}\label{sect:theory}
The loss function plays a pivotal role in learning effective representation functions and density estimations. In this context,  we establish several theorems (Theorems \ref{thm:Ball} and \ref{thm:MappingToPointSetsInfiniteDensities}) to describe some demanding assumptions on $f_\theta$ and $g_\phi$ that achieve the minimum UCE loss.
We then describe a limitation of UCE in separating ID and OOD nodes in Theorem \ref{thm:FarOOD} and Corollary \ref{cor:NearOOD} for PN, which means that we only consider the first two steps in GPN. The main conclusion of our analysis is that the  UCE loss function alone is insufficient to learn a representation space that separates OOD from ID nodes. We take graph connectivity into account in Theorem \ref{theorm:graph-structure} to study some scenarios where GPN is ineffective for OOD detection. 
Although our theorems do not completely characterize graph learning, they provide some insights into the behavior of the network parameters in PN/GPN when minimizing the UCE loss. 

\begin{theorem}
\label{thm:Ball}
If the underlying distribution of feature vectors belonging to class $k,$ denoted by $\mathcal X_k$, is disjoint to each other and both the MLP module $(f_{\bm{\theta}})$ and the normalizing flow module $(h_{\bm{\phi}})$ can be arbitrarily complex,  then $\forall \epsilon > 0$ there exists a configuration of  $f_\theta$ and $g_\phi$ such that $\uce(\mathcal A, Y) < \epsilon$.
\end{theorem}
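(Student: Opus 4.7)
The plan is to exploit the fact that for a training node $i$ with one-hot label $y_i = e_k$ (where $k$ is the true class), the UCE contribution collapses to a single non-negative term $\Psi(\alpha_{i0}) - \Psi(\alpha_{ik})$, so it suffices to make each such term smaller than $\epsilon / |\mathbb{L}|$. Using the asymptotic expansion $\Psi(x) = \log x - \frac{1}{2x} + O(1/x^2)$, one has
\begin{equation}
\Psi(\alpha_{i0}) - \Psi(\alpha_{ik}) = \log\!\left(\frac{\alpha_{i0}}{\alpha_{ik}}\right) + o(1) \quad \text{as } \alpha_{ik} \to \infty,
\end{equation}
provided the other $\alpha_{ik'}$ (for $k' \neq k$) remain bounded. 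Thus the quantitative goal is to construct $f_{\bm\theta}$ and $g_{\bm\phi}$ so that for every training node $i$, the pseudo-count $g_{\bm\phi}(\h z_i)_{y_i}$ is arbitrarily large while $g_{\bm\phi}(\h z_i)_{k'}$ for $k' \neq y_i$ stays near zero.

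First I would use the disjointness of the class supports $\mathcal X_k$ together with the arbitrary-complexity assumption on $f_{\bm\theta}$ to choose well-separated target points $\h c_1, \ldots, \h c_K$ in the latent space and construct an MLP that maps $\mathcal X_k$ into an arbitrarily small ball $B_k$ around $\h c_k$, with the $B_k$ pairwise disjoint. (One can, e.g., invoke universal approximation of MLPs after picking a smooth function that collapses each $\mathcal X_k$ to $\h c_k$, since disjoint closed sets can be separated by continuous functions.) Next I would use the arbitrary-complexity assumption on the normalizing flow $g_{\bm\phi}$ to pick the class-conditional density $\mathbb P(\h z \mid k; \bm\phi)$ to be essentially a sharp peak supported near $\h c_k$: by shrinking the support to volume $\delta_k \to 0$, the normalization constraint forces the density value at $\h c_k$ to scale like $1/\delta_k \to \infty$, while outside this peak (and in particular over the other $B_{k'}$) the density can be made arbitrarily small. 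This yields $g_{\bm\phi}(\h z_i)_{y_i} \ge N_{y_i} \cdot M$ for some $M$ that can be driven to infinity, and $g_{\bm\phi}(\h z_i)_{k'} \le \eta$ for $k' \neq y_i$ with $\eta$ arbitrarily small.

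Feeding these into $\bm\alpha_i = g_{\bm\phi}(\h z_i) + \h 1$ gives $\alpha_{i,y_i} \ge N_{y_i} M + 1$ and $\alpha_{i,k'} \le 1 + \eta$ for $k' \neq y_i$, so $\alpha_{i0} / \alpha_{i,y_i} = 1 + O(1/M)$. By the asymptotic above, each per-node term is $O(1/M)$, and choosing $M$ large enough (depending on $\epsilon$ and $|\mathbb{L}|$) makes the total UCE less than $\epsilon$. For the GPN variant where evidence is further aggregated by PPR, the same choice works because the aggregation is a convex combination of non-negative evidences and each training node's own evidence still contributes with positive PPR weight, so $\alpha_{i,y_i}$ still grows without bound as $M \to \infty$.

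I expect the main obstacle to be the rigorous justification of the density-concentration step: normalizing flows are diffeomorphisms of a base density, and while they are known to be universal in the sense of distributional approximation, producing an arbitrarily tall, arbitrarily narrow peak at a prescribed point requires either citing a universality result for radial flows (as used in GPN) or exhibiting an explicit family of radial flow parameters whose induced density concentrates at $\h c_k$. A secondary issue is ensuring that the sharp peak for class $k$ does not leak mass into the regions $B_{k'}$ for $k' \neq k$, which is handled by the disjointness of the $B_k$ together with the exponential tail decay of a Gaussian base density under a contractive radial flow.
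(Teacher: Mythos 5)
Your proof is correct in its core construction and reaches the same conclusion as the paper, but the bounding technique and the treatment of the PPR layer differ enough to warrant comment.

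Both you and the paper map each class support $\mathcal X_k$ into a tiny ball around a well-separated center and concentrate the class-$k$ density on that ball. The difference is in how the resulting UCE is bounded. The paper writes UCE as an integral of $\Psi(\sum_m g_m(z)) - \Psi(g_k(z))$ over the latent support with Lebesgue measure, chooses the density to be uniform on the ball, and then uses the digamma recurrence $\Psi(x+1)=\Psi(x)+1/x$ to telescope $\Psi(K+a)-\Psi(1+a)=\sum_{m=1}^{K-1}(K-m+a)^{-1}$, with $a = N_k/\mathrm{Vol}(B)$; sending the ball radius to zero kills each term. You instead keep UCE as the literal finite sum over training nodes and invoke the asymptotic $\Psi(x)=\log x - 1/(2x)+O(1/x^2)$ to show each per-node term is $\log(\alpha_{i0}/\alpha_{i,y_i}) + o(1)\to 0$ when $\alpha_{i,y_i}\to\infty$ and the off-class parameters stay bounded. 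Your version is arguably more faithful to the definition of UCE in the paper as a finite sum over $\mathbb L$, and it avoids the slightly informal passage to a continuum integral. Your worry about whether a radial normalizing flow can actually realize an arbitrarily tall narrow peak is legitimate, but the paper has the same gap: it posits a density function directly (even a discontinuous one, uniform on a ball, which no diffeomorphism produces) under the heading of ``arbitrarily complex.''

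One concrete flaw: your closing paragraph about the PPR-aggregated case does not go through. You argue that since each node's own evidence enters with positive PPR weight, $\alpha_{i,y_i}\to\infty$, but that alone is not enough; what UCE needs is $\alpha_{i0}/\alpha_{i,y_i}\to 1$. If a training node $i$ of class $k$ has nonzero PPR weight $\Pi_{i,j}$ to a node $j$ whose feature evidence for some other class $k'$ is also being driven to $\infty$ at rate $M$, then $\alpha_{i,k'}$ grows at the same rate as $\alpha_{i,y_i}$, the ratio $\alpha_{i0}/\alpha_{i,y_i}$ converges to a constant strictly greater than $1$, and the per-node UCE term does not vanish. The paper sidesteps this entirely by scoping the result to the PN setting: the proof section explicitly says it ignores the graph component $h_\gamma$ until the later graph-structure theorem. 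So the PPR remark should be dropped or restricted to the (unrealistic) case where the PPR matrix does not mix distinct ID classes.
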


For the proofs, please refer to Appendix \ref{sec:GPN}. Here, we elaborate on the ideal configuration that satisfies the conclusion of Theorem~\ref{thm:Ball}. We assume the MLP function $f_\theta$ is arbitrarily complex such that it maps $\h x_i \in \mathcal{X}_k$  into a bounded ball in the representational space, i.e., $$\{f_\theta({\bf x}_i)|i\in[N] \te{ and }{\bf x}_i\in \mathcal{X}_k\} \subset B({\bf z}_k, r_k),$$ where each ball $B(\h z_k, r_k)$ is centered at a point $\h z_k$ 
and bounded in size with $ r_k<r$ for a positive value $r$. Furthermore, we choose the normalizing flow $g_\phi$ to be 
    \begin{align}
        g_\phi(\h z;  k) &= \begin{cases} 
                \frac{1}{\te{Vol}(B(\h z_k, r_k))} &\te{ if } d(\h z, \h z_k) < r_k \\
                 0 &\te{ otherwise},
        \end{cases}
    \end{align} 
where Vol$(\cdot)$ refers to the volume of the ball. The conclusion in Theorem \ref{thm:Ball} states that for every $\epsilon,$ there exists a suitable upper bound $r$ of all the balls such that $\uce(\mathcal A, Y) < \epsilon$. 

An implication of Theorem~\ref{thm:Ball} is that UCE is not sufficient to separate OOD from ID nodes.  Example~\ref{ex:lost} illustrates a scenario that OOD nodes can be close to ID nodes  
 in the learned representation space even though they can be separated in the feature space based on OOD-specific features, which are unfortunately discarded. In other words, the learned representation space by GPN based on the UCE loss is not guaranteed to preserve the distance between OOD and ID nodes in its representation learning step.

\begin{example}[Lost Features] \label{ex:lost} Suppose two ID classes in a citation network contain bags of words for SVM and neural networks papers respectively. Additionally, the OOD nodes contain bags of words from reinforcement learning papers. Note that frequencies of keywords are used to discriminate different classes. Then the keywords,  ``actor critic'' and ``policy network'' are able to separate OOD nodes from ID nodes, but are irrelevant features for discriminating between the two ID classes. UCE, as a discriminatory loss, is only applied on ID nodes, and hence it is almost impossible to learn representations that respect the OOD-specific features such as ``actor-critic'' and ``policy networks''. 
\end{example}


{\bf Limitations and discussions $-$}  
The first assumption in Theorem \ref{thm:Ball} regarding the distinct class-specific distributions of feature vectors might not be realistic in practice since certain ID classes may not be clearly distinguishable due to noise in features or class labels. Nevertheless, our intuition suggests that if the UCE loss is inadequate for separating OOD from ID nodes in situations where they are separable, it is even more likely to falter in the more complex, non-separable cases. As for the second assumption in Theorem \ref{thm:Ball},
it is true that arbitrary complexity of $f_\theta$ and $g_\phi$ does not fully respect the inductive bias \citep{mitchell1980need} of the network design, such as the MLP layers with ReLU for the feature encoder $f({\bf x}; {\bm \theta})$, our analysis remains insightful and informative about the structures these networks are likely to exhibit. For example, we may expect from Theorem \ref{thm:Ball} that the representation of each class may favor an embedded space that compresses OOD-specific features, while density estimation $g_\phi$ tends to have higher peaks over smaller volumes as the model consolidates the representation space.  

We note that in \cite[Theorem 1]{charpentier2020posterior} and \cite[Theorem]{stadler2021graph}, the authors demonstrated that PN/GPN is able to achieve reasonable
uncertainty estimation when the feature encoder is a ReLU network and PPR  diffusion is removed to disregard network effects. Unfortunately, the analysis is based on extreme node features, specifically as $\delta\rightarrow \infty,$
$$\mathbb{P}(f(\delta \cdot {\bf x}_i; {\bm \theta}) | k; {\bm \phi}) \rightarrow 0, \quad \mbox{and} \quad \beta_{i,k}^{\te{agg}} \rightarrow 0,$$
for any node $i$ with a high probability. Moreover, Theorem 1 in the GPN paper \cite{stadler2021graph} holds even when the supports of disjoint ID and OOD classes in the latent space overlap.
In other words, this result does not prevent distant nodes from having similar representations. 
In summary, the analysis based on extreme node features may provide limited insights about the issues of GPN studied in this section. 
See Appendix \ref{sec:adddet} for detailed discussions. 

By taking $\epsilon\rightarrow 0$, Theorem \ref{thm:Ball} reduces to Corollary \ref{cor:UCE0} where UCE is equal to 0.
\begin{corollary}\label{cor:UCE0}
In the ideal case, where the representation function $f_\theta$ maps the support of each class, $\mathcal{X}_k$, to a countable set (with measure zero), $\mathcal{Z}_k$ and there exists a normalizing flow that has infinite density on the point set for every class, one achieves $\uce(\mathcal A, Y) = 0$. 
\end{corollary}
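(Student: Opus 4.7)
The plan is to obtain Corollary \ref{cor:UCE0} as a direct limit of the construction used in Theorem \ref{thm:Ball}. Starting from the ball construction, I would parameterize the radii by $r_k\to 0$ and let $\h z_k$ converge to a designated point (or countable collection of points) in the representation space. For every $k$, replace the uniform density $1/\vol(B(\h z_k,r_k))$ by its degenerate limit supported on $\mathcal Z_k$; by assumption the sets $\mathcal Z_k$ are disjoint and of measure zero, so the latent supports for different classes remain disjoint, and the conditional density $\mathbb P(\h z\mid k;\bm\phi)$ attains arbitrarily large values on $\mathcal Z_k$ while vanishing off it. Hence, for any training node $i$ with ground-truth class $y_i$, one has $g_{\bm\phi}(\h z_i)_{y_i}\to\infty$ and $g_{\bm\phi}(\h z_i)_k\to 0$ for $k\neq y_i$ in this limit.

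Next I would translate these density limits into limits on the Dirichlet concentration parameters. Because $\bm\alpha_i^{\text{feat}}=g_{\bm\phi}(\h z_i)+\h 1$ (and the same holds without PPR in the PN restriction used in Theorem \ref{thm:Ball}), the above convergence gives $\alpha_{i,y_i}\to\infty$ while $\alpha_{i,k}\to 1$ for every $k\neq y_i$. Consequently $\alpha_{i0}=\alpha_{i,y_i}+(K-1)$ diverges together with $\alpha_{i,y_i}$, with a constant offset $K-1$. Plugging into the one-hot collapse of \eqref{eq:uce},
\begin{equation}
\uce(\mathcal A,Y)=\sum_{i\in\mathbb L}\bigl(\Psi(\alpha_{i0})-\Psi(\alpha_{i,y_i})\bigr)=\sum_{i\in\mathbb L}\bigl(\Psi(\alpha_{i,y_i}+K-1)-\Psi(\alpha_{i,y_i})\bigr).
\end{equation}

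Finally, I would invoke the asymptotic expansion $\Psi(x+a)-\Psi(x)=a/x+O(1/x^2)$ as $x\to\infty$, so each summand tends to $0$ as $\alpha_{i,y_i}\to\infty$. Since $|\mathbb L|$ is finite, the whole sum converges to $0$, giving $\uce(\mathcal A,Y)=0$ in the described idealized limit. The main delicate point is that a genuine normalizing flow cannot literally place infinite mass on a null set, so the statement has to be interpreted as the limit of the admissible configurations furnished by Theorem \ref{thm:Ball}; I would make this explicit by viewing Corollary \ref{cor:UCE0} as the uniform $\epsilon\to 0$ limit of that theorem, with the pointwise digamma convergence above ensuring that the UCE bound passes to zero along the chosen sequence.
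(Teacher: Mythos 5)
Your proposal is correct and follows essentially the same route as the paper: both take the $r\to 0$ limit of the ball construction from Theorem~\ref{thm:Ball}, observe that the evidence for the true class diverges while the others stay at the prior value $1$, and conclude via the digamma difference. The only cosmetic differences are that you phrase the final step as a finite sum over training nodes using the asymptotic $\Psi(x+a)-\Psi(x)=a/x+O(1/x^2)$, whereas the paper integrates over the latent support and iterates the recurrence $\Psi(x+1)=\Psi(x)+1/x$ exactly $K-1$ times; these give the same bound, and your explicit caveat that the "infinite density" case must be read as a limit of admissible flows is a point the paper handles implicitly.
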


Next, we aim for the construction of a specific case to make UCE equal to zero. 
As it is challenging to analyze the joint minimization on $\theta$ and $\phi$, we assume that the normalizing flow can be chosen optimally. For this purpose, we consider a simplified problem where the true density function is assumed to be known for a given $\theta$ and hence it can be used to replace the learning of the normalizing flow. As a result, the problem reduces to the learning of the representation network $\theta$. 
\begin{theorem}
\label{thm:MappingToPointSetsInfiniteDensities} Let $\mathcal{X}_k$ be the true distributions for class-$k$ in the original feature space. Suppose the normalizing flow module $g_\phi$ obtains the true analytic solution. If the true distributions $\{\mathcal{X}_k\}$ are disjoint, then the $f_{\hat{\theta}}$ that minimizes the UCE loss projects the support of each class in the original space to a disjoint point set $\mathcal{Z}_k$, where $\mathcal Z_k$ is defined by the projection of $\mathcal X_k$ to the representation space, i.e., $\mathcal{Z}_k=f_\theta(\mathcal{X}_k).$
\end{theorem}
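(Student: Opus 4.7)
Since the normalizing flow $g_\phi$ realizes the true analytic class-conditional density of the pushforward of $\mathcal X_k$ under $f_\theta$, we have $\alpha_{ik}=N_k\,p(f_\theta(\h x_i)\mid k)+1$ for any choice of $\theta$, so the only optimization variable is $\theta$. The plan is to analyze when the UCE loss in \eqref{eq:uce} is driven to its infimum, and then characterize the representation maps that achieve it.

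First, I would observe that every summand of UCE equals $\Psi(\alpha_{i0})-\Psi(\alpha_{iy_i})\ge 0$, because $\Psi$ is strictly increasing and $\alpha_{i0}=\sum_k\alpha_{ik}\ge\alpha_{iy_i}$; hence $\uce\ge 0$, with equality only reachable in a limit. Using the asymptotic identity $\Psi(x+c)-\Psi(x)=c/x+O(1/x^2)$ as $x\to\infty$, the $i$-th summand vanishes exactly in the limit where $\alpha_{iy_i}\to\infty$ while $\sum_{j\ne y_i}\alpha_{ij}$ stays bounded. Substituting the flow identity, this is equivalent to the pair of conditions $p(f_\theta(\h x_i)\mid y_i)\to\infty$ together with $p(f_\theta(\h x_i)\mid j)$ bounded for every $j\ne y_i$.

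Next, I would argue that an infinite density value at a point $\h z$ forces the pushforward of $\mathcal X_{y_i}$ under $f_\theta$ to concentrate positive mass on a Lebesgue-null neighborhood of $\h z$. Requiring this at every training node $\h x_i\in\mathcal X_{y_i}$ implies that the image $f_\theta(\mathcal X_{y_i})$ must support the entire class-$k$ pushforward on a set of Lebesgue measure zero, i.e.\ a point set $\mathcal Z_k=f_\theta(\mathcal X_k)$ as claimed. For the remaining $\alpha_{ij}$ with $j\ne y_i$ to stay bounded, $f_\theta(\h x_i)$ must avoid the singular support $\mathcal Z_j$ of every other class, forcing the sets $\{\mathcal Z_k\}$ to be pairwise disjoint. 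The hypothesis that the original-space supports $\mathcal X_k$ are themselves disjoint is exactly what makes such a function realizable: if two classes shared a point $\h x$, then $f_\theta(\h x)$ would have to belong simultaneously to two supposedly disjoint $\mathcal Z_k$.

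The main obstacle I anticipate is the degenerate nature of the limit: a bona fide normalizing flow on a fixed representation space cannot attain an infinite density, so strictly speaking the infimum is not attained but only approached along a sequence $\{f_{\theta^{(n)}}\}$ whose images contract to point sets. I would make this rigorous by working with minimizing sequences and invoking the same limiting interpretation that appears in Corollary \ref{cor:UCE0}. A secondary subtlety is verifying necessity, not merely sufficiency, of pairwise disjointness: if $\mathcal Z_k$ and $\mathcal Z_j$ overlapped at some $\h z$, then any node $i$ with $f_\theta(\h x_i)=\h z$ would experience a divergent $\alpha_{ij}$ as well, which would prevent $\Psi(\alpha_{i0})-\Psi(\alpha_{iy_i})$ from vanishing and thereby obstruct the infimum.
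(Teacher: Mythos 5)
Your argument reaches the same conclusion as the paper but by a genuinely different route. The paper works with the integral (population) form $\sum_k \int_{\mathcal Z_k}(\Psi(\sum_m g_m(\h z)) - \Psi(g_k(\h z)))\,d\mu$ and proves the non-zero-measure case by contradiction: if $\mathcal Z_k$ has positive measure $\delta_1$, the density on it is bounded by some $\epsilon < \infty$, so the integrand is bounded below by $\delta_2 := \Psi(K + N_k\epsilon) - \Psi(1 + N_k\epsilon) > 0$ and the UCE is at least $\delta_1 \delta_2 > 0$. You instead analyze each summand pointwise via the digamma asymptotic $\Psi(x+c) - \Psi(x) = c/x + O(1/x^2)$, conclude that the infimum requires $\alpha_{iy_i} \to \infty$ with the other concentrations comparatively small, and then translate this into $f_\theta$ concentrating the class-$k$ pushforward on a Lebesgue-null set. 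Both routes establish that positive-measure images are incompatible with zero UCE; the paper's is cleaner for the joint over $\theta$ (it bounds the whole integral at once), while yours gives finer information about the shape of the limiting configuration and why the singular supports must avoid each other.

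Two small points worth tightening. First, you require $\sum_{j\ne y_i}\alpha_{ij}$ to \emph{stay bounded}; that is slightly stronger than needed. The term $\Psi(\alpha_{i0}) - \Psi(\alpha_{iy_i}) = \Psi(\alpha_{iy_i} + S_i) - \Psi(\alpha_{iy_i})$ with $S_i = \sum_{j\ne y_i}\alpha_{ij}$ vanishes as soon as $S_i/\alpha_{iy_i} \to 0$, so the wrong-class evidence may diverge, just more slowly. Your disjointness argument survives this correction (shared singular support would force $S_i$ to grow at the same order as $\alpha_{iy_i}$, keeping the term bounded away from zero), but the argument should be stated in terms of the ratio. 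Second, the jump from ``$f_\theta(\mathcal X_k)$ has Lebesgue measure zero'' to ``$f_\theta(\mathcal X_k)$ is a point set'' is not automatic: there are null sets of infinite cardinality. The paper handles this by stipulating (in a footnote) that they restrict the range of $f_\theta$ to be either of positive measure or finite; if you want your pointwise argument to be self-contained you should state the analogous restriction explicitly rather than inferring finiteness from nullity.
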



Notice that the true analytical solution $\hat \phi$ in Theorem~\ref{thm:MappingToPointSetsInfiniteDensities}
 is a function of $\theta$, i.e., $\hat \phi = \phi(\theta)$. 
It is possible to know an analytical form of $\phi(\theta)$. For example, if the data points belonging to each class are sampled from a known Gaussian distribution in the original feature space and the representation network is a linear projection function, then the true density of the projected data points belonging to each class can be derived based on any configuration of the known Gaussian distribution in the original feature space. 


Before discussing OOD detection, we start with the definition of OOD nodes.
\begin{definition}
    We define out-of-distribution (OOD) nodes to be the nodes that do not belong to any of the $K$ in-distribution (ID) classes. We denote all the ID distribution supports by $\mathcal{X}_{[K]} = \bigcup_{k=1}^K \mathcal{X}_k$.
\end{definition}

\begin{figure}
    \centering
        \resizebox{80mm}{!}{
\tikzset{every picture/.style={line width=0.75pt}} 
\begin{tikzpicture}[x=0.75pt,y=0.75pt,yscale=-1,xscale=1, every node/.style={scale=1.62}]

\draw  [fill={rgb, 255:red, 227; green, 227; blue, 227 }  ,fill opacity=1 ][line width=0.75]  (454.27,182.73) .. controls (454.27,128.56) and (500.99,84.63) .. (558.63,84.63) .. controls (616.27,84.63) and (663,128.56) .. (663,182.73) .. controls (663,236.91) and (616.27,280.83) .. (558.63,280.83) .. controls (500.99,280.83) and (454.27,236.91) .. (454.27,182.73) -- cycle ;
\draw  [fill={rgb, 255:red, 164; green, 164; blue, 164 }  ,fill opacity=1 ][line width=0.75]  (539.1,154.01) .. controls (519.19,130) and (482.37,147.91) .. (482.87,193.7) .. controls (483.36,239.49) and (503.27,288.46) .. (543.08,252.45) .. controls (582.89,216.43) and (544.16,302.87) .. (593.92,242.84) .. controls (643.68,182.82) and (625.42,170.27) .. (579.79,195.3) .. controls (534.16,220.32) and (559,178.02) .. (539.1,154.01) -- cycle ;
\draw  [fill={rgb, 255:red, 231; green, 231; blue, 231 }  ,fill opacity=1 ][line width=0.75]  (57.24,174.88) .. controls (31.59,180.63) and (5.94,195.99) .. (13.27,261.23) .. controls (20.6,326.48) and (79.23,368.7) .. (115.87,339.91) .. controls (152.51,311.13) and (284.42,293.86) .. (343.05,215.18) .. controls (401.68,136.5) and (84.72,21.35) .. (77.39,40.54) .. controls (70.07,59.74) and (97.55,123.06) .. (108.54,146.09) .. controls (119.53,169.12) and (82.89,169.12) .. (57.24,174.88) -- cycle ;
\draw  [fill={rgb, 255:red, 164; green, 164; blue, 164 }  ,fill opacity=1 ][line width=0.75]  (108.54,209.42) .. controls (90.22,190.23) and (56.33,204.55) .. (56.78,241.15) .. controls (57.24,277.75) and (75.56,316.88) .. (112.2,288.1) .. controls (148.85,259.31) and (223.96,257.39) .. (269.77,209.42) .. controls (315.57,161.44) and (88.39,52.06) .. (143.35,113.47) .. controls (198.31,174.88) and (126.86,228.61) .. (108.54,209.42) -- cycle ;
\draw [line width=0.75]    (57.24,265.07) -- (21.82,281.01) ;
\draw [shift={(20,281.83)}, rotate = 335.77] [color={rgb, 255:red, 0; green, 0; blue, 0 }  ][line width=0.75]    (10.93,-3.29) .. controls (6.95,-1.4) and (3.31,-0.3) .. (0,0) .. controls (3.31,0.3) and (6.95,1.4) .. (10.93,3.29)   ;
\draw [line width=0.75]    (187.32,257.39) -- (203.11,299.66) ;
\draw [shift={(203.81,301.53)}, rotate = 249.52] [color={rgb, 255:red, 0; green, 0; blue, 0 }  ][line width=0.75]    (10.93,-3.29) .. controls (6.95,-1.4) and (3.31,-0.3) .. (0,0) .. controls (3.31,0.3) and (6.95,1.4) .. (10.93,3.29)   ;
\draw [line width=0.75]    (555.22,178.8) -- (657.58,178.8) ;
\draw [shift={(659.58,178.8)}, rotate = 180] [color={rgb, 255:red, 0; green, 0; blue, 0 }  ][line width=0.75]    (10.93,-3.29) .. controls (6.95,-1.4) and (3.31,-0.3) .. (0,0) .. controls (3.31,0.3) and (6.95,1.4) .. (10.93,3.29)   ;
\draw  [fill={rgb, 255:red, 0; green, 0; blue, 0 }  ,fill opacity=1 ][line width=0.75]  (555.22,178.8) .. controls (555.22,176.63) and (556.75,174.87) .. (558.63,174.87) .. controls (560.52,174.87) and (562.05,176.63) .. (562.05,178.8) .. controls (562.05,180.97) and (560.52,182.73) .. (558.63,182.73) .. controls (556.75,182.73) and (555.22,180.97) .. (555.22,178.8) -- cycle ;
\draw  [draw opacity=0][fill={rgb, 255:red, 186; green, 186; blue, 186 }  ,fill opacity=1 ][line width=0.75]  (214.31,87.24) .. controls (214.31,43.93) and (257.32,8.83) .. (310.38,8.83) .. controls (363.43,8.83) and (406.44,43.93) .. (406.44,87.24) .. controls (406.44,130.54) and (363.43,165.64) .. (310.38,165.64) .. controls (257.32,165.64) and (214.31,130.54) .. (214.31,87.24) -- cycle ;
\draw [line width=0.75]  [dash pattern={on 0.84pt off 2.51pt}]  (420.5,2.5) -- (419.5,358.17)(417.5,2.5) -- (416.5,358.16) ;
\draw [line width=0.75]    (213,126.17) -- (238.73,94.71) ;
\draw [shift={(240,93.17)}, rotate = 129.29] [color={rgb, 255:red, 0; green, 0; blue, 0 }  ][line width=0.75]    (10.93,-3.29) .. controls (6.95,-1.4) and (3.31,-0.3) .. (0,0) .. controls (3.31,0.3) and (6.95,1.4) .. (10.93,3.29)   ;
\draw  [line width=0.75]  (57.24,174.88) .. controls (31.59,180.63) and (5.94,195.99) .. (13.27,261.23) .. controls (20.6,326.48) and (79.23,368.7) .. (115.87,339.91) .. controls (152.51,311.13) and (284.42,293.86) .. (343.05,215.18) .. controls (401.68,136.5) and (84.72,21.35) .. (77.39,40.54) .. controls (70.07,59.74) and (97.55,123.06) .. (108.54,146.09) .. controls (119.53,169.12) and (82.89,169.12) .. (57.24,174.88) -- cycle ;

\draw (28.95,245.8) node [anchor=north west][inner sep=0.75pt]    {$\delta $};
\draw (207.88,265.96) node [anchor=north west][inner sep=0.75pt]    {$\delta $};
\draw (518.77,210.85) node    {$f_{\theta }(\mathcal{X}_{k})$};
\draw (178.74,204.13) node [anchor=north west][inner sep=0.75pt]    {$\mathcal{X}_{k}$};
\draw (136.02,79.12) node    {$f_{\theta }^{-1}$};
\draw (572.55,151.34) node [anchor=north west][inner sep=0.75pt]    {$B( z_{k} ,r_{k})$};
\draw (323.19,14.41) node [anchor=north west][inner sep=0.75pt]  [rotate=-27.31] [align=left] {far-OOD};
\draw (243.9,106.91) node [anchor=north west][inner sep=0.75pt]  [rotate=-31.52] [align=left] {Near-OOD};
\draw (213.31,91.64) node [anchor=north west][inner sep=0.75pt]    {$\delta $};
\draw (68,33.67) node   [align=left] {\begin{minipage}[lt]{68pt}\setlength\topsep{0pt}
Feature \\Space\\
\end{minipage}};
\draw (461.5,38.67) node   [align=left] {\begin{minipage}[lt]{34.68pt}\setlength\topsep{0pt}
Latent \\Space\\
\end{minipage}};
\draw [line width=0.75]    (202.74,207.96) .. controls (296.26,173.19) and (391.45,171.19) .. (488.31,201.95) ;
\draw [shift={(489.77,202.42)}, rotate = 197.79] [color={rgb, 255:red, 0; green, 0; blue, 0 }  ][line width=0.75]    (10.93,-3.29) .. controls (6.95,-1.4) and (3.31,-0.3) .. (0,0) .. controls (3.31,0.3) and (6.95,1.4) .. (10.93,3.29)   ;
\draw [line width=0.75]    (569.55,148.02) .. controls (444.43,100.48) and (317.41,77.62) .. (188.47,79.44) ;
\draw [shift={(186.52,79.47)}, rotate = 359.08] [color={rgb, 255:red, 0; green, 0; blue, 0 }  ][line width=0.75]    (10.93,-3.29) .. controls (6.95,-1.4) and (3.31,-0.3) .. (0,0) .. controls (3.31,0.3) and (6.95,1.4) .. (10.93,3.29)   ;

\end{tikzpicture}
}
    \caption{Illustration of the representation mapping in  Theorem \ref{thm:Ball} with the conditions to detect far OOD nodes (Theorem  \ref{thm:FarOOD}) and near OOD nodes (Corollary \ref{cor:NearOOD}).}
    \label{fig:diagram}
\end{figure}
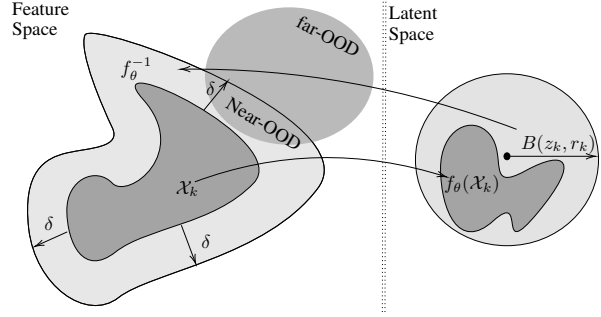

In order to detect the OOD nodes, we need a good representation, e.g., $f_\theta(\mathcal{X}_{[K]}) \bigcap f_\theta(\mathcal{X} - \mathcal{X}_{[K]}) = \varnothing$. However, it is possible that $\cup_{k=1}^K f_\theta^{-1}(\mathcal{Z}_k) = \mathcal{X}$, which implies that any OOD node in the feature space is mapped to the same distribution of an ID class in the representation space.
To this end, we require the preimage, $f_\theta^{-1}$, to be well-behaved in the sense that $f_\theta^{-1}(\mathcal{Z}_k)$ must be contained within a bounded region near the support of the true distribution $\mathcal X_k$. Explicitly, we require there exists a constant $\delta>0$ such that  $d(\h x, \hat{\h x}) < \delta,  \forall \hat{\h x} \in f_\theta^{-1}(\mathcal{Z}_k)$ and $\forall \h x \in \mathcal{X}_{k}$ for each $k$. The choice of $\delta$ depends on additional information about the dataset. 
An excessively large $\delta$ causes overlap between classes, thus increasing the likelihood of improperly identifying the OOD nodes as ID. On the other hand, a relatively small $\delta$ forces the model to overfit, which labels the non-training nodes as OOD.

We characterize in Theorem~\ref{thm:FarOOD} that under certain conditions, OOD can be detected correctly if they are far away from the ID distribution. One such condition is that the function $f_\theta$ is well-fit, meaning that it maps the support of $\mathcal{X}_k$ inside $B(\h z_k, r_k)$ for every $k$ in $[K]$. We denote a set of all well-fit functions by $\bm{\Gamma}$. Please refer to Figure \ref{fig:diagram} for a geometric illustration of these relevant quantities. 

\begin{theorem}    [Far OOD]
\label{thm:FarOOD}
   Under the same assumptions in Theorem \ref{thm:MappingToPointSetsInfiniteDensities} and two additional assumptions: (i) $\mathcal{X}_k$ is bounded and (ii) for any $\hat {\h x} \in f_\theta^{-1}(\h z)$ with $\h z \in \mathcal{Z}_k,$ there exists $\h x_k \in \mathcal{X}_k$ such that $d(\h x_k, \hat {\h x}) < \delta$, if the true distribution of OOD nodes does not overlap with the region $\cup_{\theta \in \bm{\Gamma}} \cup_k \cup_{z_k \in \mathcal{Z}_k}\{f^{-1}_{\theta}({z_k})\}$, then minimizing UCE can learn a representation projection of $f_\theta$ that detect all the OOD points farther than $\delta$ from $\mathcal{X}_k$.
\end{theorem}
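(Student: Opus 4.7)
The plan is to translate OOD detection in the feature space into a set-membership question about the representation space, using Theorem~\ref{thm:MappingToPointSetsInfiniteDensities} to pin down the structure of any UCE minimizer, and then to pull that membership condition back to the feature space via assumption~(ii).

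First I would fix a UCE-minimizing pair $(\hat\theta, \hat\phi)$. Theorem~\ref{thm:MappingToPointSetsInfiniteDensities} tells me that $f_{\hat\theta}$ collapses each class support $\mathcal{X}_k$ onto a disjoint, measure-zero point set $\mathcal{Z}_k$, while the analytically optimal flow places infinite density precisely on $\bigcup_k \mathcal{Z}_k$ and vanishes off it (the regime of Corollary~\ref{cor:UCE0}). Since the PN evidence in GPN is built from $g_\phi(f_\theta(\h x))_k + 1$, the Dirichlet strength $\alpha_{i0}$, and hence the epistemic uncertainty $u_i^{\te{epis}} = -\alpha_{i0}$ from (\ref{eq:uncertainty}), collapses to its minimum-evidence baseline exactly when $f_{\hat\theta}(\h x) \notin \bigcup_k \mathcal{Z}_k$. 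So "detected as OOD" is equivalent to "the embedding avoids all $\mathcal{Z}_k$''.

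Next I would argue the contrapositive at the feature level. Suppose some $\h x_{\te{ood}}$ is not detected; then $f_{\hat\theta}(\h x_{\te{ood}}) \in \mathcal{Z}_k$ for some $k$, i.e. $\h x_{\te{ood}} \in f_{\hat\theta}^{-1}(\mathcal{Z}_k)$. Applying assumption~(ii) to this preimage point produces some $\h x_k \in \mathcal{X}_k$ with $d(\h x_k, \h x_{\te{ood}}) < \delta$. Contrapositively, any OOD point obeying $d(\h x_{\te{ood}}, \mathcal{X}_k) \ge \delta$ for every $k \in [K]$ is flagged. Assumption~(i) that each $\mathcal{X}_k$ is bounded ensures the $\delta$-tubes are proper subsets of the feature space, so the "far'' region is non-trivial.

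The non-overlap hypothesis handles the non-uniqueness of UCE minimizers: since only membership in $\bm{\Gamma}$ is determined by the training loss, requiring that the OOD distribution miss $\bigcup_{\theta \in \bm{\Gamma}} \bigcup_k f_\theta^{-1}(\mathcal{Z}_k)$ upgrades the pointwise argument above to a statement that is uniform over whichever specific minimizer is returned. The main obstacle I anticipate is being clean about the idealized "infinite density on a measure-zero set'' from Corollary~\ref{cor:UCE0}: the conclusion that $g_{\hat\phi}$ vanishes off $\bigcup_k \mathcal{Z}_k$ must be phrased as a limit along the sequence of admissible flows driving $\uce \to 0$, so that the chain "embedding avoids $\mathcal{Z}_k$ $\Rightarrow$ evidence vanishes $\Rightarrow$ detected'' is a limit statement consistent with the framing used in the two preceding theorems, rather than a literal equality about a singular density.
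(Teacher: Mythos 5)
Your argument is correct and is essentially the paper's own proof stated in its contrapositive form: you invoke the collapse structure from Theorem~\ref{thm:MappingToPointSetsInfiniteDensities}, note that evidence is nonzero only when the embedding lands in $\bigcup_k \mathcal{Z}_k$, and then use assumption~(ii) on the preimage to conclude that any undetected point lies within $\delta$ of some $\mathcal{X}_k$, which matches the paper's direct argument (pick a point farther than $\delta$, observe it cannot be mapped to any $\mathcal{Z}_k$ without contradicting (ii), hence it receives the baseline evidence). The remarks about the non-overlap hypothesis and the limiting nature of the infinite-density regime are sensible framing but add nothing beyond what the paper's terser proof already implicitly assumes.
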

\begin{corollary}[Near OOD]
\label{cor:NearOOD}
Following Theorem~\ref{thm:FarOOD}, if $\hat {\h x} \in \cup_{\theta \in \bm{\Gamma}} \cup_k \cup_{z_k \in \mathcal{Z}_k}\{f^{-1}_{\theta}({z_k})\}$ follows the distribution of OOD nodes, then the classification of $\hat {\h x}$ depends on the choice of $\theta \in \bm{\Gamma}$.
\end{corollary}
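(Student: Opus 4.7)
The plan is to exhibit two distinct well-fit representations $\theta_1,\theta_2 \in \bm{\Gamma}$ that produce different classifications of the same near-OOD point $\hat{\h x}$, which directly establishes that the classification depends on the choice of $\theta$.

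First, the hypothesis $\hat{\h x} \in \cup_{\theta \in \bm{\Gamma}} \cup_k \cup_{z_k \in \mathcal{Z}_k}\{f_\theta^{-1}(z_k)\}$ guarantees some $\theta_1 \in \bm{\Gamma}$ and class index $k_1 \in [K]$ with $f_{\theta_1}(\hat{\h x}) \in \mathcal{Z}_{k_1}$. Applying Theorem~\ref{thm:MappingToPointSetsInfiniteDensities}, the associated analytical normalizing flow places infinite density on $\mathcal{Z}_{k_1}$, so the concentration vector $\bm{\alpha}$ diverges in the $k_1$-coordinate while the remaining coordinates stay at the uniform prior. Hence $\hat{\h x}$ would be classified as class $k_1$ with negligible epistemic uncertainty under $\theta_1$.

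Second, I would construct a different $\theta_2 \in \bm{\Gamma}$ for which $f_{\theta_2}(\hat{\h x}) \notin \cup_k \mathcal{Z}_k$. The key observation is that well-fittedness only constrains the image of the ID supports $\mathcal{X}_{[K]}$, and since $\hat{\h x}$ follows the OOD distribution we have $\hat{\h x} \notin \mathcal{X}_{[K]}$. Because the MLP may be arbitrarily complex (Theorem~\ref{thm:Ball}), I can perturb $\theta_1$ via a bump-function-style modification localized in a ball around $\hat{\h x}$ of radius strictly less than $\min_k d(\hat{\h x}, \mathcal{X}_k)$. Such a perturbation leaves $f_\theta$ unchanged on $\mathcal{X}_{[K]}$ so that $\mathcal{Z}_k$ and well-fittedness are preserved, and it lets me redirect $f_{\theta_2}(\hat{\h x})$ to an essentially arbitrary latent location. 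Because each $\mathcal{Z}_k$ is countable and thus has measure zero in the latent space, almost every target location lies outside $\cup_k \mathcal{Z}_k$; at such a location the analytical flow vanishes, $\bm{\alpha}$ collapses to the uniform prior, and the model flags $\hat{\h x}$ as OOD.

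The main obstacle is formalizing the local perturbation $\theta_2$ so that both requirements hold simultaneously: it must remain in $\bm{\Gamma}$ (preserving the images of all $\mathcal{X}_k$) while steering $f_{\theta_2}(\hat{\h x})$ off the union of the countable sets $\mathcal{Z}_k$. Restricting the support of the perturbation to a ball disjoint from $\mathcal{X}_{[K]}$ handles the first requirement, and a generic choice of the perturbed latent target handles the second by the measure-zero argument above. Since $\theta_1$ classifies $\hat{\h x}$ as class $k_1$ while $\theta_2$ classifies it as OOD, the classification of $\hat{\h x}$ is indeed dictated by the particular $\theta \in \bm{\Gamma}$, completing the proof.
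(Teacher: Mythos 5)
Your proof is correct and takes essentially the same approach as the paper: both exhibit two well-fit representations $\theta_1,\theta_2 \in \bm{\Gamma}$ that classify the same near-OOD point differently, one mapping it onto the point set $\mathcal{Z}_{k_1}$ (infinite density, ID) and the other mapping it off the point set (zero density, OOD). The paper realizes these two representations via two explicit piecewise definitions of $f$ (one mapping only $\mathcal{X}_k$ to $\h z_k$ and everything else to $0$; the other mapping the entire $\delta$-neighborhood of $\mathcal{X}_k$ to $\h z_k$), whereas you obtain $\theta_2$ by a bump-function perturbation of $\theta_1$ localized away from $\mathcal{X}_{[K]}$ together with a measure-zero argument — a slightly more abstract but equivalent route to the same dichotomy.
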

The main purpose of Theorem \ref{thm:FarOOD} is to show a desired behavior for OOD detection is induced by point-wise effects. In practice, we suggest the careful construction of $f_\theta$'s topology coupled with proper regularization terms to achieve the point-wise effect.

Lastly, we consider the setting of GPN to use a graph layer after the feature-wise evidence predictions. 

\begin{theorem}
    Under the following conditions: (1) The features of some class 0 and OOD nodes belong to $S$; (2) The OOD nodes are only connected to class 0 and themselves; (3) Other nodes belong to other regions i.e. $\h x \in X - S$ if $\h x \notin \mathcal{X}_0 \bigcup \mathcal{X}_\te{OOD}$; (4) Other nodes' features are non-degenerate in their associated region; and (5) the endowed graph neural network layer must produce evidence for each node between the highest and lowest feature evidence found among its neighbors. 
    A graph with arbitrary homophily can achieve a global minimum on UCE with the associated OOD nodes achieving arbitrarily large evidence, while simultaneously having perfect accuracy. 
\label{theorm:graph-structure}
\end{theorem}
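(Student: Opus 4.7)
The plan is to lift the zero-UCE construction of Corollary~\ref{cor:UCE0} into the graph regime, exhibiting explicit choices of $(f_\theta, g_\phi)$ for which the aggregation constrained by condition~(5) inflates the class-$0$ evidence of every OOD node while preserving both $\uce \to 0$ and ID accuracy. By condition~(3), the supports $\mathcal X_1, \dots, \mathcal X_K$ lie outside $S$ and are therefore disjoint from $\mathcal X_0 \cup \mathcal X_\te{OOD} \subset S$. I would first choose $f_\theta$ to collapse all of $\mathcal X_0 \cup \mathcal X_\te{OOD}$ onto a single latent point $\h z_0$, and each $\mathcal X_k$ ($k \ge 1$) onto its own distinct point $\h z_k$, and then mirror Corollary~\ref{cor:UCE0} by taking $g_\phi$ to concentrate class-$k$ density at $\h z_k$ with tunable mass $M$. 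The per-feature evidence then satisfies $g_\phi(\h z_i)_k = N_k M$ whenever $i$'s feature lies in the class-$k$ support (for $k=0$, when it lies in $S$) and vanishes otherwise.

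The OOD conclusion follows directly: by condition~(2), every neighbor $j$ of an OOD node $i$ lies in $S$, so $g_\phi(\h z_j)_0 = N_0 M$ and $g_\phi(\h z_j)_{k'} = 0$ for $k' \ne 0$ at every neighbor. Condition~(5) sandwiches each class-wise aggregated evidence between equal min and max, forcing $\beta^{\te{aggr}}_{i,0} = N_0 M$ and $\beta^{\te{aggr}}_{i,k'} = 0$; hence $\alpha_{i,0} = N_0 M + 1$ and the Dirichlet strength $N_0 M + K$ are both arbitrarily large as $M \to \infty$, so the OOD node is confidently assigned class~$0$ with vanishing epistemic uncertainty. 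For ID accuracy under arbitrary homophily I would exploit the self-term built into GPN's PPR aggregation: the teleport probability yields a topology-independent lower bound on $\Pi^{\te{ppr}}_{ii}$. For an ID node $i$ of class $k$ this gives $\beta^{\te{aggr}}_{i,k} \ge \Pi^{\te{ppr}}_{ii} N_k M$, whereas for any $k' \ne k$ only cross-class neighbors contribute to the class-$k'$ channel, so $\beta^{\te{aggr}}_{i,k'} \le (1 - \Pi^{\te{ppr}}_{ii}) N_{k'} M$. Taking the teleport parameter large enough (relative to $\max_{k'} N_{k'}/(N_k+N_{k'})$) forces the correct class to dominate at every ID node regardless of neighbor composition; and because UCE sums only over labeled ID nodes whose correct-class evidence grows with $M$, Corollary~\ref{cor:UCE0} drives $\uce \to 0$.

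The main obstacle is reconciling the permissiveness of condition~(5) with the need for correct ID predictions under arbitrary homophily: condition~(5) alone does not force the correct class to win at a heterophilic ID node whose per-class neighbor evidences span $[0, N_k M]$. I resolve this by observing that condition~(5) constrains only the output range, so the self-weight (equivalently, the PPR teleport parameter) remains a free lever that, together with the $M \to \infty$ scaling of $g_\phi$, simultaneously secures the three required conclusions: vanishing UCE, unbounded OOD Dirichlet strength, and perfect ID accuracy independent of the ID subgraph's homophily.
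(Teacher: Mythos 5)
Your construction (collapsing $\mathcal X_0 \cup \mathcal X_\te{OOD}$ to a single latent point while each remaining $\mathcal X_k$ gets its own point, with tunable flow mass $M$) and your treatment of the OOD nodes via the sandwiching property of condition~(5) follow the same route as the paper. The paper's own proof is far more cursory: it simply observes that both ID and OOD nodes are assigned infinite feature-level evidence at the points of $\mathcal Z_k$ and that a graph layer ``can only help separate nodes by pulling them towards the center of their own classes,'' without exhibiting an aggregation operator or verifying the accuracy and UCE claims for the post-aggregation $\bm\alpha$. You are right to flag that neither condition~(5) nor the feature-level construction alone delivers perfect accuracy under arbitrary homophily, and your move to use the self-weight of the aggregation as the extra lever is the right instinct.

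However, the threshold you state, $\gamma > \max_{k'} N_{k'}/(N_k+N_{k'})$, closes the accuracy gap but not the UCE gap. It forces $\alpha_{ik} > \alpha_{ik'}$, hence correct predictions, but it does not drive the per-node UCE term $\Psi(\alpha_{i0}) - \Psi(\alpha_{ik})$ to zero. For a heterophilic ID node $i$ of class $k$ that has any PPR-reachable neighbor of class $k' \neq k$, the aggregated evidences $\alpha_{ik}$ and $\alpha_{ik'}$ both scale linearly in $M$, at rates fixed by the PPR weights; so $\alpha_{ik}/\alpha_{i0}$ converges as $M \to \infty$ to a constant strictly less than one, and the UCE term for node $i$ tends to $\log\bigl(\alpha_{i0}/\alpha_{ik}\bigr) > 0$ rather than zero. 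To obtain a global minimum of UCE under arbitrary homophily you must additionally send $\gamma \to 1$ in tandem with $M \to \infty$, i.e.\ let the graph layer degenerate to the identity — the implicit choice in the paper's one-line argument — which still satisfies condition~(5) because each node is its own PPR neighbor. Your closing sentence gestures at coupling $\gamma$ and $M$, but the fixed threshold you give for $\gamma$ is not enough; the argument needs the teleport parameter to approach one so that the cross-class evidence stays $o(M)$.
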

{\bf Limitations and Discussions.} In Theorem \ref{theorm:graph-structure}, we provide a special situation where the GPN architecture may fail to detect OOD nodes by predicting large evidence values for belonging to class 0. 
In addition, we show in Appendix \ref{sec:GPN} that if GPN has bad initial feature predictions, even ideal graph construction coupled with an ideal
graph neural network (with a homophily degree 1.0) fails to prevent the OOD nodes from
being misclassified as ID nodes. For homophily 
graphs with degrees less than 1.0, the majority of nodes 
for each class have similar evidence values after graph
diffusion layers, except that the nodes between some pairs
of classes may have different evidence values.



\subsection{Distance-Based Regularization}
\label{sect:distance-based-regularization}
As discussed in Section \ref{sect:theory}, the UCE loss function alone is insufficient to learn a representation space that separates OOD from ID nodes using the GPN model.
We propose a heuristic remedy that enforces distance minimization on the graph. Ideally, we should design a distance formula that can preserve the distance relationship among all the feature vectors. However, distance preservation likely increases variation in the latent space as we cannot compress the classes' support in the representation space to be arbitrarily small, while simultaneously preserving distances.

Instead, we consider distance minimization, as it helps prevent the model from discarding relevant features while decreasing variation between nodes in the representation space. 
Here we give two formulations directly. The simpler, theorem-motivated term, is the distance regularization on the latent space, 
\begin{equation}
    \te{R}_D(f_\theta(X); \mathcal{G})=\sum_{i,j\in \mathcal{E}} \norm{f_\theta(\h x_i) - f_\theta(\h x_j)}^2. \label{R-D regularization}
\end{equation}
The regularization encourages nearby points in the graph representation to remain nearby in the latent space. In other words, this regularization \eqref{R-D regularization} discourages the overlap $f_\theta(\mathcal{X}_{[K]}) \bigcap f_\theta(\mathcal{X} - \mathcal{X}_{[K]})$. 

We also  minimize the "distance" on the produced evidence through a divergence-based regularization,
\begin{equation}
    \te{R}_\alpha (\bm\alpha^\te{feat}; \mathcal{G})=\sum_{(i,j)\in \mathcal{E}} \te{div}_\te{kl}(\bm\alpha^\te{feat}_i, \bm\alpha^\te{feat}_j) + \te{div}_\te{kl}(\bm\alpha^\te{feat}_j, \bm \alpha^\te{feat}_i), \label{R-alpha regularization}
\end{equation}
where $\te{div}_\te{kl}$ refers to the Kullback–Leibler divergence and two symmetric terms are considered. This divergence-based formulation likely decreases the variation in evidence between neighboring nodes, because high variation in the latent space between neighboring nodes need not be mapped to similar evidence. 

In summary, we augment the GPN mode with either one of the proposed regularization terms in (\ref{R-D regularization}) and (\ref{R-alpha regularization}), thus leading to the objective function as follows,
\begin{eqnarray}\label{eq:proposed-model}
    \mathcal{L} = \uce(\mathcal A, Y) - \lambda_1 \underbrace {\sum\nolimits_{i \in \mathbb{L}}\mathbb{H}(\text{Dir}(\bm{\alpha}_i))}_{\text{entropy regularizer}} + \lambda_2 \underbrace {\te{R} (f_\theta(X); \mathcal{G})}_{\text{proposed regularizer}},
\end{eqnarray}
with two positive parameters $\lambda_1, \lambda_2.$
The first term is the standard UCE loss function. The second term is regarded by GPN as an entropy regularizer. The last term, $\te{R}$, is chosen to be either $\te{R}_D$ or $\te{R}_\alpha$, a decision implemented through hyperparameter tuning. We have included a theoretical result in Appendix \ref{sec:GPN} that provides a rationale for the proposed distance regularizations.

\section{Experiments}
In this section, we conduct extensive experiments on two tasks of OOD detection and misclassification detection. We compare the proposed framework \eqref{eq:proposed-model} for uncertainty estimation of semi-supervised node classification using 8 datasets with a comparison to 5 baseline methods. The code is available at \href{https://github.com/neoques/Graph-Posterior-Network}{https://github.com/neoques/Graph-Posterior-Network}.


\subsection{Experiment Setup}
\paragraph{Datasets} We use three citation networks (i.e. CoraML, CiteSeer, Pubmed) \citep{bojchevski2017deep}, two co-purchase datasets \citep{shchur2018pitfalls} (i.e. AmazonComputers, AmazonPhotos), two coauthor datasets \citep{shchur2018pitfalls} (i.e. CoauthorCS and CoauthorPhysics) and a large dataset OGBN Arxiv \citep{hu2020open}. A detailed description of these datasets is in Appendix \ref{sec:addres}. We show the result of three citation datasets in the main paper and the remaining results in Appendix F. 
\vspace{-2mm}
\paragraph{Baselines} 
We present the results for uncertainty estimation using five baseline methods. Among these, two  evidence collection models, namely graph kernel density estimation (GKDE) \cite{zhao2020uncertainty} and label propagation (LP) \cite{stadler2021graph}, assuming that OOD nodes are located far away from the training nodes, while easily misclassified nodes reside near the boundaries between classes. We compare to a modified GCN model, referred to as VGCN-Energy~\cite{liu2020energy}, a Bayesian-based model, called GKDE-GCN \cite{zhao2020uncertainty},  and GPN \cite{stadler2021graph} as baselines in our evaluation. 
We also introduce a Graph Neural Network called APPNP \cite{gasteiger2018predict} as one baseline for the misclassification detection task and report the ROC score. 
Details of these baselines can be found in Appendix \ref{sec:adddet}.
\vspace{-2mm}
\paragraph{Metrics} To assess the classification performance of ID   nodes, we rely on the metric \textbf{ID-ACC}, which calculates the fraction of correct predictions among all predictions. As for evaluating uncertainty estimation, we employ the metrics \textbf{AUC-ROC} and \textbf{AUC-PR} as evaluation measures. The rankings are based on the scores of epistemic or aleatoric uncertainty. OOD detection is treated as a binary classification task, where the positive class corresponds to OOD  nodes and the negative class pertains to ID nodes. Please refer to \eqref{eq:uncertainty} for the calculation of aleatoric uncertainty  and epistemic uncertainty. For the Dirichlet-based models, the epistemic uncertainty has a similar practical interpretation to vacuity in the belief theory, assessed using AUC scores. On the other hand, in VGCN-Energy, the calculation of epistemic uncertainty is based on the energy value and is represented as $u^{\te{epis}}_i =  \te{energy}$. The misclassification detection task is also a binary classification problem, where the positive cases correspond to wrongly classified nodes and the negative cases represent correctly classified nodes. The calculation of uncertainty for misclassification detection is performed in the same manner as OOD detection except that $u^{\te{epis}}_i = - \mathop{\text{max}}  _{k} \alpha_i^k $. 
Prior studies \citep{stadler2021graph, zhao2020uncertainty} has indicated that aleatoric uncertainty is generally more effective for identifying misclassifications, whereas epistemic uncertainty is more appropriate for detecting out-of-distribution instances.

\paragraph{Model Setup}
For all the baseline methods, we maintain consistency by employing the same set of model hyperparameters as provided by GPN. Specifically for some model hyperparameters such as latent dimension and weight decay, we adopt the same settings as GPN.
 Inspired by \cite{ramachandran2017searching}, we explore multiple choices of activation functions in the representation networks. In addition to the default ReLU used in GPN, we experiment with Sigmoid and GELU activation functions. Through empirical evaluation, we discover that the choice of activation function significantly impacts the performance of certain datasets, which is demonstrated in Appendix E.4.
 Besides,  hyperparameters that we tune include entropy regularization weight, distance-based regularization format (whether $\te R_D$ or $\te R_\alpha$), and weighting parameters ($\lambda_1,\lambda_2$), which are optimized based on the validation cross-entropy for each specific dataset.  For a comprehensive overview of the hyperparameter configuration and ablation study, please refer to Appendix \ref{sec:adddet}.

\subsection{Results}

\paragraph{OOD Detection}
 OOD detection aims to detect whether an input example is OOD given the predicted uncertainty estimation. For the semi-supervised node classification, we adopt the \textbf{Left-Out-Classes} setting where we assume several categories as OOD (details can be found in Appendix \ref{sec:adddet}),  as considered in \citep{charpentier2020posterior,zhao2020uncertainty}. Different from the independent input setting, we retain the OOD nodes in the graph but exclude their labels from the training and validation sets. This implies that the loss function does not involve OOD labels, but the model has encountered the OOD node features during the training phase. Similarly to \citep{stadler2021graph}, we also remove the last graph propagation layer for comparison as ``w/o network'' where the final result only depends on the node features and no graph structure involved. 
 This configuration, referred to as the ``w/o network'' setting, results in a final output that solely relies on the node features, with no involvement of the graph structure.

The results on CoraML, Citeseer, and PubMed are presented in Table \ref{tab:ood} and the results on the other 5 datasets are shown in Table \ref{tab:ood_cont} in Appendix \ref{sec:addres}. 
Our model achieves the best ID accuracy for four datasets and demonstrates comparable performance to GPN for the remaining four datasets 
Furthermore, we observe an improvement in the ROC (Receiver Operating Characteristic) rankings based on predicted epistemic uncertainty with propagation, ranging from +1\% to +8\% compared to GPN. Consistent with previous studies \citep{stadler2021graph}, our results demonstrate that prediction models incorporating evidence propagation consistently outperform those without propagation across all datasets. This observation highlights the significant impact of graph structure on uncertainty estimation. Moreover, when comparing aleatoric uncertainty and epistemic uncertainty as ranking scores, we find that epistemic uncertainty outperforms aleatoric uncertainty in the OOD detection task. This finding aligns with literature \citep{zhao2020uncertainty, stadler2021graph} and emphasizes the superiority of epistemic uncertainty for OOD detection.

\begin{table*}[ht!]
\scriptsize
\centering
\caption{AUROC and AUPR for the OOD Detection}
\begin{tabular}{c|c|c|ccc|ccc}
\hline
\multirow{2}{*}{Data} & \multirow{2}{*}{Model} & \multirow{2}{*}{ID-ACC}  & \multicolumn{3}{c|}{AUROC} & \multicolumn{3}{c}{AUPR} \\
                                            &              &               & Alea w/    & Epi w/   & Epi w/o  & Alea w/    & Epi w/   & Epi w/o  \\
\hline
\multirow{8}{*}{CoraML}  & LP           & 86.40                & 83.78          & 80.86          & n.a.           & 74.80          & 71.15          & n.a.           \\
                        & GKDE         & 83.02                & 74.46          & 71.86          & n.a.           & 66.19          & 64.05          & n.a.           \\
                  & VGCN-Energy  & 89.66                & 81.70          & 83.15          & n.a.           & 75.67          & 78.44          & n.a.           \\
                       & GKDE-GCN     & 89.33                & 82.23          & 82.09          & n.a.           & 75.88          & 77.03          & n.a.           \\
                   & GPN & 88.51                & 83.25          & 86.28          & \textbf{80.95}          & 75.79          & 79.97          & \textbf{72.81}          \\
                      & Ours   & \textbf{90.06}       & \textbf{83.94} & \textbf{87.20} & 76.12          & \textbf{76.26}          & \textbf{80.36} & 63.32          \\
\hline
\multirow{7}{*}{Citeseer}                   & LP           & 57.34                & 65.99          & 67.54          & n.a.           & 48.12          & 48.59          & n.a.           \\
                                            & GKDE         & 49.62                & 63.75          & 63.91          & n.a.           & \textbf{56.74}          & 56.79          & n.a.           \\
                                            & VGCN-Energy  & 70.79                & 72.16          & 76.08          & n.a.           & 53.71          & 58.35          & n.a.           \\
                                            & GKDE-GCN     & 70.76                & 73.34          & 76.19          & n.a.           & 54.25          & \textbf{59.07}          & n.a.           \\
                                            & GPN          & 69.79                & 72.46          & 70.74          & 66.65          & 55.14          & 50.52          & 44.93          \\
                                            & Ours   & \textbf{72.51}       & \textbf{75.22}          & \textbf{78.98} & \textbf{73.21} & 62.30          & 58.63          & \textbf{52.73}         \\
\hline
\multirow{7}{*}{PubMed}                     & LP           & 89.18                & \textbf{80.32} & 79.64          & n.a.           & \textbf{71.01} & 72.98          & n.a.           \\
                                            & GKDE         & 88.16                & 69.66          & 68.47          & n.a.           & 55.81          & 54.33          & n.a.           \\
                                            & VGCN-Energy  & \textbf{94.77}       & 72.58          & 72.63          & n.a.           & 60.54          & 60.63          & n.a.           \\
                                            & GKDE-GCN     & 94.66                & 73.53          & 74.47          & n.a.           & 61.36          & 61.96          & n.a.           \\
                                            & GPN & 94.08                & 71.84          & 73.91          & 71.2           & 57.92          & 67.19          & 59.72          \\
                                            & Ours   & 93.84                & 75.23          & \textbf{81.76} & \textbf{77.79} & 60.75          & \textbf{78.16} & \textbf{69.19} \\

\hline
\end{tabular}
\begin{tablenotes}\scriptsize
\centering
\item[*] Alea: Aleatoric, Epi.: Epistemic, w/: with propagation, w/o: without propagation 
\end{tablenotes}
\begin{tablenotes}\scriptsize
\item
AUROC and AUPR for the OOD Detection: ID-ACC means the accuracy on ID nodes. AUROC and AUPR scores are given for OOD detection based on different uncertainty metrics, where [Alea w/] is the aleatoric score with propagation layer, [Epi w/] is the epistemic score with propagation layer and [Epi w/o] is the epistemic score without propagation.
n.a. means either model or metric not applicable.
\end{tablenotes}
\label{tab:ood}
\end{table*}

\paragraph{Misclassification Detection} In addition to OOD detection, we   conduct misclassification detection on the clean graph for evaluating the predictive uncertainty estimation.  Table \ref{tab:miscla} presents the results for three datasets, while Table \ref{tab:miscla_cont} in Appendix \ref{sec:adddet} is for the other 5 datasets. We observe a significant improvement ranging from +12\% to +50\% in our model's AUC-PR scores.
While it is true that our method performs worse than the best of the baselines in terms of AUROC, the differences are within approximately 3\% for six of the eight datasets.: Amazon Computers, Amazon Photos, Coauthor CS, Coauthor Physics, and ODBG Arxiv, and PubMed.   Despite having a lower AUROC compared to the best of the baselines, our method exhibits a higher AUPR. This suggests that our method may excel at identifying true positives among the top-ranked nodes when compared to GPN, while the best baselines may be more effective at distinguishing between true positives and negatives among the lower-ranked nodes.


\begin{table*}[h!]
\scriptsize
\centering
\caption{AUROC and AUPR for the Misclassification Detection}
\begin{tabular}{c|c|cc|cc}
\hline
\multirow{2}{*}{Data} & \multirow{2}{*}{Model}   & \multicolumn{2}{c|}{AUROC} & \multicolumn{2}{c}{AUPR} \\
                                    &              & Alea w/    & Epi w/     & Alea w/    & Epi w/    \\
\hline
                                            
\multirow{5}{*}{CoraML}                                                     & APPNP        & \textbf{83.64}     & n.a           & 48.39     & n.a     \\
                                                                            & VGCN-Energy & 81.02    & n.a         & 48.30    & n.a       \\
                                                                            & GKDE-GCN     & 80.80     & 76.83         & 49.61     & 45.87         \\
                                                                            & GPN & 81.19     & \textbf{78.10}         & 49.51     & 44.42         \\
                                                                            & Ours  & 75.8	 & 69.85	& \textbf{89.95} &	\textbf{88.20}       \\
\hline
\multirow{5}{*}{CiteSeer}                                                   & APPNP        & 73.55     & n.a.     & 51.70     & n.a.     \\
                                                                            & VGCN-Energy & 74.64     & n.a         & 48.30     & n.a.         \\
                                                                            & GKDE-GCN     & 75.45     & 73.83         & 54.78     & 53.57         \\
                                                                            & GPN & \textbf{75.89}     & \textbf{74.16}         & 60.78     & 59.32         \\
                                                                            & Ours  & 69.15	 & 68.62	 & \textbf{72.67} &	\textbf{72.36}	         \\
\hline
\multirow{5}{*}{PubMed}                                                     & APPNP        & 80.98     & n.a.          & 37.79     & n.a.          \\
                                                                            & VGCN-Energy & \textbf{81.16}     & n.a         & 38.24     & n.a         \\
                                                                            & GKDE-GCN     & 80.95     & 73.99         & 39.64     & 33.19         \\
                                                                            & GPN & 80.46     & \textbf{75.38}         & 40.74     & 35.11         \\
                                                                            & Ours   & 80.13	& 72.87	& \textbf{95.41}	& \textbf{92.79}         \\
\hline
\end{tabular}
\begin{tablenotes}\scriptsize
\centering
\item[*] Alea: Aleatoric, Epi.: Epistemic, w/: with propagation, w/o: without propagation 
\end{tablenotes}
\vspace{2mm}
\begin{tablenotes}\scriptsize
\item
AUROC and AUPR for misclassification detection: AUROC and AUPR scores are given for misclassification detection based on different uncertainty metrics, where [Alea w/] is the aleatoric score with propagation layer, [Epi w/] is the epistemic score with propagation layer and [Epi w/o] is the epistemic score without propagation.
n.a. means either model or metric not applicable
\end{tablenotes}
\label{tab:miscla}
\end{table*}
\subsection{Ablation Study}
Our proposed model differs from the GPN model in three main aspects. First, we use validation cross entropy (CE) instead of hold-out datasets to select hyperparameters. Second, we consider the activation function for the MLP layers as one of the hyperparameters for selection. We expect feature value rescaling through non-linear activation of feature values to affect the density predictions. Third, we incorporate one of the proposed distance-based regularization terms to the loss function used in GPN.
We conduct an ablation study to demonstrate the contribution of these three components.
 The results for CiteSeer and PubMed  are shown in Table \ref{tab:ablation} and the remaining datasets are included in Appendix \ref{sec:addres}.  
 Hyperparameter tuning using validation cross-entropy improves  GPN's performance, especially in cases where the choice of activation function has a significant impact on specific datasets. Additionally, we consistently observe performance enhancements from the distance-based regularization in both datasets, demonstrating the effectiveness of the proposed distance awareness regularization term.

\begin{table*}[th!]
\scriptsize
\centering
\caption{Ablation Study with OOD Detection task}
\begin{tabular}{c|c|c|ccc|ccc}
\hline
\multirow{2}{*}{Data} & \multirow{2}{*}{Model} & \multirow{2}{*}{ID-ACC}  & \multicolumn{3}{c|}{AUROC} & \multicolumn{3}{c}{AUPR} \\
                                            &              &               & Alea w/    & Epi w/   & Epi w/o  & Alea w/    & Epi w/   & Epi w/o  \\
\hline
\multirow{4}{*}{Citeseer}
                                            & GPN & 69.79                & 72.46          & 70.74          & 66.65          & 55.14          & 50.52          & 44.93          \\
                                            & GPN-CE  & 70.98 &	74.20	&73.75	&68.41&	58.12	&53.55	& 46.60 \\
                                            & GPN-CE-ACT  & 71.96                & 74.72          & 77.97          & 72.28          & 60.41          & 56.04          & 50.73          \\
                                            & GPN-CE-GD       & \textbf{72.51}       & \textbf{75.22}          & \textbf{78.98} & \textbf{73.21} & \textbf{62.30}          & \textbf{58.63}          & \textbf{52.73}          \\
\hline
\multirow{4}{*}{PubMed}                    & GPN & \textbf{94.08}                & 71.84          & 73.91          & 71.2           & 57.92          & 67.19          & 59.72          \\
                                            & GPN-CE  &  93.84                & 74.19          & 78.32          & 74.50          & 59.85          & 74.11          & 64.55          \\
                                            & GPN-CE-ACT  & 93.84                & 74.19          & 78.32          & 74.50          & 59.85          & 74.11          & 64.55          \\
                                            & GPN-CE-GD       & 93.84                & \textbf{75.23 }         & \textbf{81.76} & \textbf{77.79} & \textbf{60.75}          & \textbf{78.16} & \textbf{69.19} \\
 \hline
\end{tabular}
\begin{tablenotes}\scriptsize
\centering
\item[*] Alea: Aleatoric, Epi.: Epistemic, w/: with propagation\\
\end{tablenotes}
\begin{tablenotes}\scriptsize
\item
GPN refers to the original GPN paper with its default hyperparameters and ReLU as the middle activation function, GPN-CE is the original GPN model with re-tuned Dirichlet entropy regularization weight based on validation cross-entropy; 
GPN-CE-ACT is the original GPN model with re-tuned entropy regularization weight and activation function based on cross-entropy; 
GPN-CE-GD/(Ours) adds the distance-based regularization term while tuning the two weights and activation function.
\end{tablenotes}
\label{tab:ablation}
\end{table*}

\section{Limitations}
Our theoretical analyses mainly study the limitations of the UCE loss function when separating OOD from ID nodes in the learned representation space. If we include the entropy-based regularization term in Equation~(\ref{eqn:GPN-loss}) with a sufficiently large weight $\lambda$, some of our theoretical findings may not remain applicable. However, the entropy-based regularization term is designed to favor smooth Dirichlet distributions but not to preserve the distance between OOD and ID nodes. We conjecture that the resulting loss function is still insufficient to learn a representation space that separates OOD from ID nodes, even though they are separable in the original feature space. In addition,  our proposed regularization terms in Section~\ref{sect:distance-based-regularization} are more effective for homophily graphs than heterophily graphs, as neighboring nodes are less likely to belong to the same class in a heterophily graph than those in a homophily graph.


\section{Conclusion}


This paper contributed to a better understanding of uncertainty quantification for node classification. We investigated the limitations of the widely used UCE loss function. Motivated by the theoretical analysis, we proposed a distance-based regularization that helps learn a representation network that is more effective for the uncertainty quantification task. Experimentally, we demonstrated our approach outperforms the state-of-the-art in two specific applications of uncertainty quantification for node classification: OOD detection and misclassification detection.  

\section{Acknowledgments}

This work is supported by the
National Science Foundation (NSF) under Grant No \#2220574, \#2107449, \#1846690, and \#1750911. The work of Feng Chen is also supported by the Intelligence Advanced Research Projects Activity (IARPA) via Department of Interior/Interior Business Center (DOI/IBC) contract number 140D0423C0026. The US Government is authorized to reproduce and distribute reprints for Governmental purposes notwithstanding any copyright annotation thereon. Disclaimer: The views and conclusions contained herein are those of the authors and should not be interpreted as necessarily representing the official policies or endorsements, either expressed or implied, of IARPA DOI/IBC or the U.S. Government.

\newpage
\small
\medskip
\bibliographystyle{abbrv}
\bibliography{neurips_2023}

\newpage
\appendix

The supplementary materials are organized as follows. Appendix \textbf{A} provides the background knowledge on the Dirichlet distribution. In  Appendix \textbf{B} we review the architecture of the graph posterior network (GPN) \cite{stadler2021graph} together with our discussion on oversight of \cite[Theorem 1]{stadler2021graph}. Appendix \textbf{C} details the proofs of all the theorems and corollaries discussed in the main paper. We provide detailed descriptions of baseline models, datasets, and hyperparameter tuning for the experiments in Appendix \textbf{D}. Lastly, Appendix \textbf{E} includes more experimental results that we are unable to fit into the paper. 

\section{Dirichlet Distribution} \label{sec:dir}
A non-degenerate Dirichlet distribution, denoted by $\text{Dir}(\bm{\alpha})$, is parameterized by the concentration parameters $\bm \alpha=[\alpha_1, \cdots, \alpha_K]^\intercal $ with $\alpha_k > 1$ for $k \in [K]$. 
More specifically, the Dirichlet distribution with parameters $\alpha_1, \cdots, \alpha_K$ has a probability density function (pdf) given by 
\begin{equation}
    \mbox{pdf}(\h p; \bm \alpha) = \frac 1 {B(\bm\alpha)} \prod\limits_{k=1}^K p_k^{\alpha_k-1},
\end{equation}
where $\{p_k\}_{k=1}^K$ belongs to the standard probability simplex, thus $\sum_{k=1}^K p_k = 1$ and $p_k\in[0,1], \forall k\in[K]$, and the normalizing constant $B(\bm\alpha)$ is expressed in terms of the Gamma function $\Gamma(\cdot)$, i.e.,
\begin{equation}
    B(\hs \alpha) = \frac{\prod_k \Gamma( \alpha_{k})}{\Gamma \left( \Sigma_k  \alpha_k \right)}.
\end{equation}


Under the semi-supervised learning setting, a set of labels is available, denoted by $\mathbb{L} \subset \mathcal{V}.$ For $i\in\mathbb L$, the class label $y_i\in \{1, \dots, K\}$ can be converted into a one-hot vector $\h y_i$ with $y_{ik}=1$ if the sample belongs to the $k$-th class and $y_{ij}=0$ for $j\neq k.$ 
By arranging $\bm\alpha_i$ and $\h y_i$ into matrices $\mathcal{A} := [{\bm \alpha}_i]_{i \in \mathbb L}$  and $Y:=[\h y_i]_{i \in \mathbb L}$,   the UCE loss function is defined as:
\begin{align}\label{eq:uce1}
    \text{UCE}(\mathcal{A}, Y) =\sum_{i\in\mathbb L}\sum_{k \in [K]} y_{ik}(\Psi(\alpha_{i0}) - \Psi(\alpha_{ik})),
\end{align}
where $\alpha_{i0} = \sum_{k=1}^K \alpha_{ik}$ and $\Psi$ is the digamma function given by
\[
 \Psi(x) = \frac{\Gamma ' (x)}{\Gamma(x)}.
\]


 
 \section{Detailed Framework of GPN} \label{sec:GPN}
In this section, we review the  architecture of GPN, followed by two examples that reveal a limitation of Theorem 1 in the GPN paper \cite{stadler2021graph}.

\paragraph{Multi-layer Perceptron (MLP).}
Instead of deep convolution layers used in many neural networks designed for image classification task \cite{charpentier2020posterior}, GPN utilizes two simple perceptron layers with ReLU activation function as the encoding network, which maps high dimensional data to a latent space with a much smaller dimension, avoiding the curse of dimensionality for the density estimation on a (mapped) latent representation \cite{nalisnick2019detecting}. As each node is independent of the others in this step, the encoding map only considers the node features without any graph structure involved. Mathematically, the mapping can be expressed by
    $$\bm{z}_i = f(\bm{x}_i; \bm{\theta}) = W_2\sigma(W_1\bm{x}_i + \h 1^\intercal \h b_1)+\h 1^\intercal \h b_2,$$ 
    where ${\theta}:= \{W_1, W_2, \bm{b}_1, \bm{b}_2\}$ denotes a set of learning parameters. For simplicity we use the notation $\h z_i = f_\theta (\h x_i)$. 
    
\paragraph{Normalizing Flow.}
Normalizing flow is used to estimate the density $\mathbb{P}(\bm{z}_i| k; \bm \phi)$ for $k\in [K]$ and learning parameters $\bm{\phi}$  as an invertible transformation $q(\cdot;k)$ of a base distribution, e.g. Normal distribution,
which denotes the distribution of class $k$ in the latent space. The default flow in GPN is the radial flow \citep{rezende2015variational}, given by
$$q(\h{z}; k)= \h{z} + \frac{\beta (\h{z} - \h{z}_0)}{\gamma + \norm{\h{z}-\h{z}_0}}$$
$$\mathbb{P}(\h{z}_i| k; \phi) = p_z(q^{-1}(\h {z}_i;k))|\text{det}\frac{\partial q^{-1}(\cdot;k)}{\partial \h{z}}|.$$
where $\h{z}_0$ is a reference point and $p_z(\cdot) \sim \mathcal{N}(0,1)$. After estimating the density of the node $i$ belonging to a specific class $k$, the pseudo evidence counts are scaled to the probability, i.e., $\beta_i^k \propto \mathbb{P}(\bm{z}_i| k; \bm \phi)$, GPN sets
$$ \beta_i^k :=  g_{\bm \phi}(\bm{z}_i)_k= N_k \cdot \mathbb{P}(\bm{z}_i| k; \bm \phi),$$
where $N_k$ is the number of training nodes that belong to the class $k$. 

\paragraph{Personalized Page Rank.} 
GPN applies a personalized page rank (PPR) module to diffuse the evidence among neighboring nodes. It is motivated by the work of Approximate Personalized Propagation of Neural Predictions (APPNP) \citep{gasteiger2018predict} that is designed to decouple the prediction (only based on node features) with any encoding network and propagate with a personalized page rank (PPR) module (only based on edge information).
In particular,  PPR provides a personalized influence score matrix for each node that considers $L$ hop
of neighbors without involving any new parameters to learn and $L$ is a hyperparameter:
$$\bm{\beta}^{(l+1)} = (1-\gamma) \hat{A}\bm{\beta}^{(l)} + \gamma \bm{\beta}^{(0)},$$
where $\gamma$ is a hyper-parameter relating to the teleport probability, $ \hat{A}$ denotes the symmetrically normalized graph adjacency matrix with added self-loops (i.e., $\hat A :=D^{-1/2}AD^{-1/2}$ with the standard adjacency matrix $A$), and $l$ denotes the layer index with $\bm\beta^{(0)}$ obtained after the normalizing flow. The output of PPR is a set of concentration parameters, denoted by $\bm\alpha = h_\gamma(\bm \beta^{(0)}).$ 

Collectively for MLP, normalizing flow, and PPR, the network in GPN can be expressed by
\begin{equation}
   \bm\alpha_i = 1 + h_{\gamma}(g_\phi(f_\theta(\h x_i))) ,
\end{equation}
for each node $i$, where the addition of 1 guarantees that the concentration parameter is strictly positive. In addition, an entropy regularization was considered by GPN defined by,
\begin{equation}
  \mathbb{H}(\text{Dir}(\bm{\alpha}_i)) = \log B(\bm{\alpha}_i) + (\alpha_{i0} - K) \Psi(\alpha_{i0}) - \sum_{k=1}^K(\alpha_{ik} - 1)\Psi(\alpha_{ik})  ,
\end{equation}
 where  $\alpha_{i0} = \sum_{k=1}^K \alpha_{ik}$.

 Next, we provide two examples to describe oversight of \cite[Theorem 1]{charpentier2022natural} and \cite[Theorem 1]{stadler2021graph} in the sense that both theorems assume an impossibility. Particularly the assumption is that a two-layer ReLU network can be represented by a set of affine mappings, each being full rank,  from a finite set of regions to the latent space. However, we construct Example \ref{example1} and Example \ref{example2} to show this assumption is impossible.

\begin{example}\label{example1}
We start with a simple case where a two-layer ReLU network with input, hidden layer, and output of a scalar (1-dimensional) is considered for an easier interpretation of the results.  One simple example of a two-layer ReLU network is expressed by
\begin{equation}\label{network4example}
    z = f_\theta(x) = 1\cdot \sigma_\te{ReLU}(1\cdot x+0)+0.
\end{equation}
Following  \cite{hein2019relu}, we split the latent space into two affine regions, i.e., 
\begin{align}
    z = 
    \begin{cases}
        x &\te{ if } x\in[0, \infty) \\
        0 &\te{ if } x\in(-\infty, 0], \\
    \end{cases}
\end{align}
labeled by $Q^{(0)} = [0, \infty)$ and $Q^{(1)} = (-\infty, 0]$. We see 
the associated $V^{(0)}=1$ and $V^{(1)}=0$ in the affine representation \eqref{network4example} that certainly do not have independent rows, as required by \cite[Theorem 1]{charpentier2022natural}.  
\label{example:simple_counter_example}
\end{example}


Example \ref{example2} extends the 1D case in Example \ref{example1}
into a higher $d$-dimension, showing
 that there is always at least one affine region that produces a single value, i.e. $f_\theta(Q^{(l)^*}) = \{\h v\}$ 
 when mapped into a ReLU network $f_\theta$.

\begin{example}\label{example2}
We consider the ReLU network,
\begin{equation}
    f_\theta(\h x) = C \sigma_\te{ReLU} (B\h x),
\end{equation}
where $B, C \in \mathbb{R}^{d \times d}$ 
are matrices of full rank. Denote $\h x_j$ to be the solution to the equation,
\begin{equation}
    -\h e_j=B \h x,
\end{equation}
where $\h e_j$ is the $j$th Euclidean standard  basis. As $B$ is assumed to be full rank, there is the unique solution of the corresponding $\h x_j.$

Notice that the polytope,
\begin{equation}\label{eq:polytope}
    S = \left\{\sum_{j=1}^d a_j \h x_j \Bigg |  a_j\geq 0\right\},
\end{equation}
 has non-zero measure in $\mathbb{R}^d$. Note that the ReLU network is constant by construction, as $\sigma_{\te{ReLU}}(-\h e_j)=\h 0.$ In other words, we have for $x\in S$ that
\begin{align}
    \h z = f_\theta(\h x) = C \sigma_\te{ReLU} (B\h x) = C \hs 0 = \hs 0.
\end{align}
As in the previous example, the existence of $S$ means that there exists some $V^{(\cdot)}=\mathbb{0}_{d, d}$ which contradicts the assumption that all $V$s have independent rows. Under this setting, the density does not approach zero, which is the conclusion of Theorem 1 in \cite{stadler2021graph}.
\end{example}




\section{Proofs} \label{sec:proof}
In this section, we provide the proof of all the theorems and corollaries. Note that 
until Theorem \ref{theorm:graph-structure}, We ignore the graph component $h_\gamma$ and focus solely on the representational layer $f_\theta$ and normalizing flow layer $g_\phi$. 

\begin{proof}[Proof of Theorem \ref{thm:Ball} and Corollary \ref{cor:UCE0}]
     As $f_\theta$ is arbitrary by assumption, we choose it in such a way that  it maps a point in $\mathcal{X}_k$ to a point inside the ball centered at $\h z_k$ with radius $r_k$, denoted by $B(\h z_k, r_k)$, 
    \begin{equation}\label{eq:map2ball}
        f_\theta: \mathcal{X}_k\rightarrow \mathcal Z_k \subset B(\h z_k, r_k),
    \end{equation}
    where $\h z_k\in \mathcal{Z}$ with a minimal distance $R$ between  any two of them, i.e., $d(\h z_k, \h z_m) > R, \forall k,m \in[K],$ and we define $r > r_k , \forall k \in [K]$. We then choose the normalizing flow to be,
    \begin{equation}
        g_\phi(\h z;k) = 1 + N_k \cdot\begin{cases}
            \frac{1}{\vol(B(\h z_k, r_k))}, \text{ if } \h z\in B(\h z_k, r_k),\\
            0, \text{ otherwise}.
        \end{cases}
    \end{equation}
    We add the value of 1 in the normalizing flow to produce valid evidence measures. We also assume that  $N_k = \mu(\mathcal{X}_k) > 0$, where $\mu$ is the Lebesgue measure function. 
    
    The global minimum of UCE  occurs when UCE is equal to 0 for every class. Recall that 
    \begin{align}
\sum_{{k\in[K]}}\text{UCE}\left(g(\mathcal{Z}_k), Y\right) &= \sum_{k \in [K]}\int_{\mathcal{Z}_k} \left(\Psi\left(\sum_{m \in [K]}g_m(\h z)\right) - \Psi(g_k(\h z))\right) d\mu.
    \end{align}
 Using \eqref{eq:map2ball}, we consider an upper bound of the right-hand side by integrating over the larger region, that is,
    \begin{align}
         & \sum_{k \in [K]}\int_{B(\h z_k, r_k)} \left(\Psi\left(\sum_{m\in[K]}g_m(\h z)\right) - \Psi(g_k(\h z))\right) d\mu, \\
         =& \sum_{k \in [K]} \vol(B(\h z_k, r_k)) \cdot \left(\Psi \left(K+\frac{N_k}{\vol(B(\h z_k, r_k))}\right)-\Psi \left(1+\frac{N_k}{\vol(B(\h z_k, r_k))}\right)\right).
    \end{align}
    According the recurrence relation of the digamma function: $\Psi(x+1)=\Psi(x) + 1/x$, we  readily derive that,
    \begin{align}
       \sum_{{k\in[K]}}\text{UCE}\left(g(\mathcal{Z}_k), Y\right)  \leq \sum_{k \in [K]} \vol(B(\h z_k, r_k)) \cdot \sum_{m=1}^{K-1}\left(K-m+\frac{N_k}{\vol(B(\h z_k, r_k))}\right)^{-1}.
    \end{align}
Taking the limit of the right-hand side yields 
    \begin{align}
         & \lim_{r \rightarrow 0}\sum_{k \in [K]} \vol(B(\h z_k, r_k)) \cdot \sum_{m=1}^{K-1}\left(K-m+\frac{N_k}{\vol(B(\h z_k, r_k))}\right)^{-1}, \\
         =& \sum_{k \in [K]} \lim_{r_k \rightarrow 0}\vol(B(\h z_k, r_k)) \cdot \lim_{r_k \rightarrow 0}\sum_{m=1}^{K-1}\left(K-m+\frac{N_k}{\vol(B(\h z_k, r_k))}\right)^{-1}.
    \end{align}
   It is straightforward for the following two limits to hold,
    \begin{align}
        \lim_{r_k \rightarrow 0}& \vol(B(\h z_k, r_k))=0, \\
        \lim_{r_k \rightarrow 0}& \sum_{m=1}^{K-1}\left(K-m+\frac{N_k}{\vol(B(\h z_k, r_k))}\right)^{-1}=0,
    \end{align}
thus leading to
    \begin{align}
          \lim_{r \rightarrow 0}\sum_{k \in [K]} \vol(B(\h z_k, r_k)) \cdot \sum_{m=1}^{K-1}\left(K-m+\frac{N_k}{\vol(B(\h z_k, r_k))}\right)^{-1} =0.
    \end{align}
On the other hand, as $\Psi(\sum_{k \in [K]}g_k(\h z)) - \Psi(g_k(\h z)) \geq 0$, we have
    \begin{equation}
        0 \leq  \sum_{k \in [K]}\int_{\mathcal{Z}_k} (\Psi\ren{\sum_{m \in [K]}g_m(\h z)} - \Psi(g_k(\h z)) d\mu = \sum_{k\in[K]} \text{UCE}\left( g(\mathcal{Z}_k), Y\right),
    \end{equation}
 which implies that $\text{UCE} \rightarrow 0$ as $r \rightarrow 0$.
 For $r=0,$ 
 UCE is equal to zero, which leads to Corollary \ref{cor:UCE0}. 
\end{proof}

\begin{proof}[Proof of Theorem \ref{thm:MappingToPointSetsInfiniteDensities}]
  We denote the parameters of $g_\phi$ that represent the true analytic solution $\hat \phi=\phi(\theta)$. Note that this is a function with respect to the choice of $\theta$, that is, the true distribution is dependent on the representational mapping. In this proof, we focus on finding a value of $\theta$ s.t.,
    \begin{align}
        \hat \theta = \arg \min_\theta \uce(\alpha(\theta, \hat\phi), Y)= \arg \min_\theta \uce(\alpha(\theta, \phi(\theta)), Y).
    \end{align}
    As the true distribution is dependent on the representational mapping, we should consider a joint minimization problem with respect to both the representation map and density distribution.

We will separate the proof into two cases. Specifically, 
    we prove Case 1 by contradiction, showing that if the set \feng{$\mathcal{Z}_k$} mapped to  by $f_\theta$ \feng{from $\mathcal{X}_k$} has a non-zero measure, then the global minimizer $\uce=0$ can not be achieved. We then prove Case 2, under the assumption that a true analytical solution may achieve density evidence at a point, by showing that we may achieve the global minimizer on a point set. 
    
\paragraph{Case 1: Non-Zero Measure.}
   Suppose the true distribution on this set is a non-degenerate distribution. As the natural definitions of a probability distribution $1=\int_{\mathcal{Z}_k}d\mu,$
   the UCE loss can be expressed by
    \begin{equation}
        \sum_{k\in[K]} \uce\left(g(\mathcal{Z}_k), Y\right) = \sum_{k \in [K]}\int_{\mathcal{Z}_k} \left(\Psi\left(\sum_{m \in [K]}g_m(\h z)\right) - \Psi(g_k(\h z))\right) d\mu (\h z).
    \end{equation}
In order for the measure of $\mathcal{Z}_k$ to have a density of $\epsilon>0$,  there exists a  subset of $\mathcal{Z}_k$ with non-zero measure $\delta_k>0$, denoted $\mathcal{Z}_k^*$. Using similar techniques as the  proof of Theorem 1  in reverse, we obtain,
    \begin{align*}
    \sum_{k\in[K]} \uce\left(g(\mathcal{Z}_k), Y\right)    &\geq \sum_{k \in [K]}\int_{\mathcal{Z}_k} \left(\Psi\left(K + N_k \epsilon \right) - \Psi(1 + N_k \epsilon)\right) d\mu, 
    \end{align*}
     then for some $k\in [K]$ there exists some $\mathcal{Z}_k^*$, 
    \begin{align*}
       \sum_{k\in[K]} \uce\left(g(\mathcal{Z}_k), Y\right)   &\geq \int_{\mathcal{Z}_k^*} \left(\Psi\left(K + N_k \epsilon \right) - \Psi(1 + N_k \epsilon)\right) d\mu, \\
        &=  \delta_1 \cdot \left(\Psi\left(K + N_k \epsilon \right) - \Psi(1 + N_k \epsilon)\right).
    \end{align*}
    As $\Psi$ is strictly increasing, then $\delta_2 = \Psi\left(K + N_k \epsilon \right) - \Psi(1 + N_k \epsilon) > 0$, which implies that
    \begin{align*}
        \sum_{k\in[K]} \uce\left(g(\mathcal{Z}_k), Y\right) \geq \sum_{k \in [K]} \delta_1 \cdot \delta_2 > 0.
    \end{align*}
    Therefore, we prove that if $f_\theta$ maps to a measurable set, the UCE loss is necessarily non-zero.
    
    \paragraph{Case 2: Zero Measure Sets.}
    Corollary 3 shows
    that the zero UCE is achievable. 
    If Case 1 fails, then we can conclude that only on a \feng{disjoint} set \feng{$\mathcal{Z}_k$} with measure 0 \feng{for each $k$} is permissible to achieve the UCE to be 0. The exact choice of this set depends on the precise definitions of the probability distributions on a point set and their ability to achieve infinite densities. Here we constrain these possibilities by requiring the range of $f_\theta$ to have non-zero measure or to be a point set if having zero measure\footnote{We choose that the cardinality of the zero-measure set $f_\theta(\mathcal{X}_k)$ to be finite (rather than countably infinite) as we do not want to detail precise topological arguments (like compactness and boundedness) about the pointsets and their respective preimages.}.  
\end{proof}

\begin{proof}[Proof of Theorem \ref{thm:FarOOD}]
        Pick $\h x \in \mathcal{X}$ s.t. $d(\h x, \h x_k) > \delta$ for $x_k\in \mathcal{X}_k$ see that for any $f_\theta$ where $\theta \in \Gamma$ we have that $\h x$ is necessarily not mapped to $\h z_k \in \mathcal{Z}$ (if it were mapped in $\mathcal{Z}$ then the preimage would contain it and thus we would have $d(\h x, \h x_k) < \delta$, which is a contradiction with our selection of $x$). Recall that the density for any point mapped to $z_k$ is infinite. That is the density of the associated points mapped to the point set is necessarily infinite and the density of points mapped elsewhere is necessarily smaller, namely 0, with the associated evidence $1$. Our selected $\h x$ then has no evidence in favor of it belonging to a class $k$ while any point in $\h x_k \in \mathcal{X}_k$ must have infinite evidence by our choice of a well-fit $\theta$.
\end{proof}

\begin{proof}[Proof of Corollary \ref{cor:NearOOD}]
Notice that we can choose two types of such $\theta$,
    \paragraph{Case 1}
    Let $\theta$ for class $k$ be chosen such that,
    \begin{equation}
        f(\h x) = \begin{cases}
            \h z_k \text{ if } \h x \in \mathcal{X}_k, \\
            0  \text{ otherwise}. \\
        \end{cases}
    \end{equation}

    \paragraph{Case 2}
    \begin{equation}
        f(x) = \begin{cases}
            \h z_k \text{ if } d(\h x, \hat {\h x})<\delta \text{ for any } \hat x \in \mathcal{X}_k, \\
            0  \text{ otherwise}. \\
        \end{cases}
    \end{equation}
 If $\h x\in\mathcal{X}_k$ is mapped to $\h z_k$ then it is endowed with infinite density, moreover, it is believed to be an ID node belonging to class $k$. Thus, the nearby OOD being detected for these UCE minimizers is determined by arbitrary choice.
\end{proof}

\begin{proof}[Proof of Theorem \ref{theorm:graph-structure}]
 First note that the ID nodes are mapped to have infinite evidence achieved at the points in the latent space $\mathcal{Z}_k$. As the representations of the OOD nodes are in $\mathcal{Z}_k$ they are also endowed with infinite evidence.
    That is graph layers can only help separate nodes by pulling them towards the center of their own classes w.r.t. to the representation space this is only helpful if their representations are separate to begin with.
\end{proof}


Lastly, we give a toy example showing heuristically that the proposed regularization yields a better separation of the OOD nodes from IDs, compared to the original GPN model without the distance-based regularization. 

\begin{example}
   Consider two ID classes (Class 1 and Class 2) and one OOD class with the following construction:
    \begin{enumerate}
        \item All nodes belonging to Class $1$ have feature values sampled from $\h x^{(1)} = [1, 0, 0]$.
        \item All nodes belonging to Class $2$ have feature values sampled from $\h x^{(2)} = [-1, 0, 0]$.
        \item All nodes belonging to the OOD class $2$ have feature values sampled from $\h x^{(\te{OOD})} = [0, 1, v]$.
        \item We sample $v$ from the uniform distribution $U(-1, 1)$ independently for each sample in each class.
        \item All nodes are connected to every node within their own class, leading to a graph of homophily 1.
        \item Suppose the density function is true density distribution
        \item Denote the PPR layer by $\hat h$ that uses the right normalized adjacency matrix $AD^{-1}$ rather than symmetrically normalized $D^{-1/2}AD^{-1/2}$, used in APPNP.
        \item Suppose $f_\theta$ is a linear function (i.e. no activation function) explicitly,
        $W = \begin{bmatrix}
            W_{11} & W_{12} \\
            W_{21} & W_{22} \\
            W_{31} & W_{32}
        \end{bmatrix}.$
    \end{enumerate}
    Then GPN with our regularization can learn an embedding that makes it possible to separate classes 1, 2, and OOD nodes. Without regularization, OOD nodes   lie between ID classes in the latent space. 
\end{example}

\begin{proof}
   A simple calculation for the project leads to
    \begin{equation}
       \h z_i = [XW]_i = \begin{cases}
            [W_{11}, W_{12}] &\te{ for class 1 nodes} \\
            [-W_{11}, -W_{12}] &\te{ for class 2 nodes} \\
            [W_{21}+vW_{31}, W_{22}+vW_{31}] &\te{ for OOD nodes}.
        \end{cases}
    \end{equation}
    Clearly, the values of $W_{31}$ and $ W_{32}$ would be smaller with the distance minimization term applied than without, as $v$ is selected randomly. Moreover neither $W_{31}$ nor $W_{32}$ affects the model's ability to separate the two classes as desired.
    We explicitly calculate both UCE and the distance-based regularization in  the objective function, while ignoring the Dirichlet regularization,  thus leading to the following objective function,
    \begin{equation}
        L(Z, \alpha, Y; \mathcal G) = \uce(\hat h(g_\phi(f_\theta(x))), Y) + R(Z; \mathcal G). 
    \end{equation}
    
    First, we explicitly work out the distance-based regularization term 
    \begin{align*}
       & \te{R}(Z; \mathcal G) = \sum_{(i, j) \in \mathcal{E}}\norm{\h z_i-\h z_j}^2 \\
        =& \sum_{(i, j) \in \mathcal{E}_1}\norm{[W_{11}, W_{12}]-[W_{11}, W_{12}]}^2 + \sum_{(i, j) \in \mathcal{E}_2}\norm{[-W_{11}, -W_{12}]-[-W_{1,1}, -W_{12}]}^2 \\
        &\quad + \sum_{(i, j) \in \mathcal{E}_\te{OOD}}\norm{[W_{21}+v^{(i)}W_{31}, W_{22}+v^{(i)}W_{32}]-[W_{21}+v^{(j)}W_{31}, W_{22}+v^{(j)}W_{32}]}^2 \\
        =&\sum_{(i, j) \in \mathcal{E}_\te{OOD}}\norm{(v^{(i)}-v^{(j)})[W_{31}, W_{32}]}^2, 
    \end{align*}
    which is minimized when $W_{31}, W_{32}$ go to zero. 

    Next, we consider the UCE loss portion. See that as we estimate the true density using $g$ we will have no overlap between the two distributions $\mathcal{Z}_1, \mathcal{Z}_2$. 
    We are left with $W = \begin{bmatrix}
        W_{11} & W_{12}\\
        W_{21} & W_{22}\\
        0 & 0
    \end{bmatrix},$
    \begin{equation}
        \label{example:results}
        Z_i = [XW]_i = \begin{cases}
            [W_{11}, W_{12}] \te{ for class 1 nodes} \\
            [-W_{11}, -W_{12}] \te{ for class 2 nodes} \\
            [W_{21}, W_{22}] \te{ for OOD nodes},
        \end{cases}
    \end{equation}
    If $W$ is to remain full rank this will necessarily require either $W_{21}$ or $W_{22}$ to be non-zero. Thus OOD nodes will be mapped as we see in  \eqref{example:results} to some distinct values - which can be separated after the application of APPNP as we expect APPNP to only average the values within each class. 
\end{proof}

\section{Additional Experimental Details} \label{sec:adddet}
\subsection{Descriptions of Baselines}
\noindent {\textbf{Graph-based Kernel Dirichlet distribution Estimation (GKDE)} \cite{zhao2020uncertainty}}: Based on the high homophily property of most graphs (neighboring nodes tend to share the same class label), GKDE derives the evidence with the help of the node-level distances (shortest path in the graph) with training nodes belonging to the same class. 

\noindent {\textbf{Label Propagation (LP)} \cite{stadler2021graph}}: 
 Following the idea of GKDE, LP collects the evidence by relying on the density of labeled nodes in neighborhoods rather than distance. 
An initial condition per class is defined and then a Personalized Page Rank is used as the diffusion.  

\noindent {\textbf{VGCN-Energy} \cite{liu2020energy}}: It is a GCN-based model with energy score as the uncertainty estimation which maps
each node to a single, non-probabilistic scalar called the energy. The energy score can be calculated as follows  
$$s^i_{\text{energy}} = -T \log \sum_{k=1}^K \exp ^{\frac{\bm{l}_i^k}{T}},$$
where $\bm{l}$ is the predicted logits of a neural network and  temperature parameter $T=1$.

\noindent{\textbf{GKDE-GCN} \cite{zhao2020uncertainty}}:  GKDE-GCN utilizes a GCN network to estimate the multisource uncertainty by a Dirichlet distribution and then sample probability as well as the class prediction. The evidence derived from the aforementioned GKDE is as a teacher of concentration parameters of Dirichlet Distribution, and another deterministic GCN predicting the probability is used as a teacher for sampled probability. The overall loss is composed of the KL divergence between these two teachers with the corresponding distribution and Bayes risk with respect to the squared loss of sampled class prediction. 

\noindent{\textbf{APPNP} \cite{gasteiger2018predict}}: Given that message passing neural network suffers from the over-smoothing problem that limits the depth of the neural network, APPNP proposed to decouple the prediction and propagation where the prediction depends on the node features and propagation depends on interactions between nodes through edges. APPNP first uses any kind of neural network to embed the input space and diffuses information with a personalized page rank. For large graphs, they use power iteration to approximate a topic-sensitive page rank. 

\noindent{\textbf{GPN} \cite{stadler2021graph}}: GPN applies a normalizing flow to estimate the density of each class in the latent space embedded with an encoding network and then propagates the scaled density as the evidence. 

\subsection{Description of Datasets}
We use three citation networks, labelled by CoraML, CiteSeer, Pubmed \cite{bojchevski2017deep}, two co-purchase Amazon datasets \cite{shchur2018pitfalls}, labeled by Computers and Photos, two coauthor datasets \cite{shchur2018pitfalls}, labeled by  CoauthorCS and Physics, and a large dataset OGBN Arxiv \cite{hu2020open}.  We use the same train/val/test split of 5/15/80 as \cite{stadler2021graph}. The details of the graphs and setups for the OOD detection are provided  in Table \ref{tab:dataset}.

\begin{table*}[htbp!]
\scriptsize
\centering
\caption{Dataset Description}
\begin{tabular}{c|ccc|cc|cc|c}
\hline
        & CoraML & CiteSeer & PubMed & Computers & Photos & Coauthor CS &  Coauthor Physics & OGBN-Arxiv \\ 
\hline
\#nodes             & 2,995 & 4,230 & 19,717 & 13,752 & 7,650 & 18,333 & 34,493 & 169,343 \\
\#edges & 16,316 & 10,674   & 88,648 & 491,722          & 238,162       & 163,788     & 495,924          & 2,315,598  \\
\#features          & 2879  & 602   & 500    & 767    & 745   & 6,805  & 8,415  & 128     \\
\#classes           & 7     & 6     & 3      & 10     & 8     & 15     & 5      & 40      \\
\# left-out-classes & 3     & 2     & 1      & 5      & 3     & 4      & 2      & 15      \\ 
\hline
\end{tabular}
\vspace{2mm}
\label{tab:dataset}
\vspace{-5mm}
\end{table*}

\subsection{Hyper-parameter tuning}
 We follow the same setting with \citep{stadler2021graph}. In detail, we use the Adam optimizer with a learning rate of 0.01. For VGCN-Energy, we use a temperature of $T = 1.0$.  We carefully tune three hyperparameters: the distance-based regularization weight, Dirichlet entropy weight, and activation functions. We select the best parameters for each dataset separately that returns the highest validation cross-entropy. The detailed hyperparameters configuration is as Table \ref{tab: hyper}.
\begin{table*}[htbp!]
\scriptsize
\centering
\caption{Hyperparameter configurations of proposed model}
\begin{tabular}{c|l|l|l}
\hline
 &
  \multicolumn{1}{c|}{Dirichlet Entropy Reg. Weight} &
  \multicolumn{1}{c|}{Graph Distance Reg. Weight} &
  \multicolumn{1}{c}{Activation function} \\ \hline
CoraML           &  $0      $ &  $10^{-4}  $        & GELU \\
CiteSeer         &  $10^{-4}$ &  $10^{-9.5}$        & LogSigmoid \\
PubMed           &  $10^{-5}$ &  $10^{-4}  $        & RELU \\
Computers &  $10^{-5}$ &  $10^{-4}  $        & RELU \\
Photos    &  $10^{-5}$ &  $10^{-11} $        & RELU \\
Coauthor CS      &  $0      $ &  $10^{-6}  $        & RELU \\
Coauthor Physics &  $10^{-4}$ &  $10^{-4.5}$        & LogSigmoid \\
OGBN-Arxiv       &  $10^{-5}$ &  $10^{-8}  $        & RELU \\
\hline
\end{tabular}
\vspace{2mm}
\label{tab: hyper}
\vspace{-5mm}
\end{table*}

 We also consider the following activation functions in the encoding network with element-wise operations,
\begin{align*}
    \sigma_\relu(x) &= \max(0, x), \\
    \sigma_\logsigmoid(x) &= \log\ren{(1+\exp(-x))^{-1}}, \\
    \sigma_\te{GeLU}(x) &= x \te{CDF}_\mathcal{N}(x)\, \\
    \sigma_\te{HardTanh}(x) &= 
    \begin{cases}
        & -1, x<-1\\
        & x, -1 \leq x \leq1 \\
        &1, x>1
    \end{cases}.
\end{align*}
ReLU is the most popular activation function used in the hidden layer of neural networks, which brings efficient computation by only activating neurons with positive outputs. Sigmoid is popularly used for probability prediction because its output is always in the range (0,1) with a smooth gradient. GeLU has better nonlinearity and is widely used in Natural Language processing and computer vision. HardTanh is a more computation-efficient version of Tanh.

\section{Additional Experiments} \label{sec:addres}
\subsection{Additional Experiments - OOD Detection}
For Amazon Photos, Amazon Computers, Coauthor CS, Coauthor Physics, and OGBN Arxiv dataset, the OOD Detection results are shown in Table \ref{tab:ood_cont}. 
\begin{table*}[htbp!]
\scriptsize
\centering
\caption{OOD Detection (Cont.)}
\begin{tabular}{c|c|c|ccc|ccc}
\hline
\multirow{2}{*}{Data} & \multirow{2}{*}{Model} & \multirow{2}{*}{ID-ACC}  & \multicolumn{3}{c|}{AUROC} & \multicolumn{3}{c}{AUPR} \\
                                            &              &               & Alea w/    & Epi w/   & Epi w/o  & Alea w/    & Epi w/   & Epi w/o  \\
                                            
\hline
\multirow{7}{*}{\begin{tabular}[c]{@{}c@{}}Amazon\\ Computers\end{tabular}} & LP           & 83.28          & \textbf{86.74 }         & 83.88          & n.a.           & \textbf{67.10}          & 63.08          & n.a.           \\
                                                                            & GKDE         & 71.41          & 75.14          & 73.58          & n.a.           & 49.21          & 47.68          & n.a.           \\
                                                                            & VGCN-Energy  & 88.95          & 82.76          & 83.43          & n.a.           & 57.49          & 60.64          & n.a.           \\
                                                                            & GKDE-GCN     & 82.73          & 77.03          & 70.32          & n.a            & 49.81          & 45.92          & n.a            \\
                                                                            & GPN & 88.48          & 82.49          & 87.63          & \textbf{74.55}          & 56.78          & 67.94          & \textbf{48.03 }         \\
                                                                            & Ours   & \textbf{89.88} & 83.56
                                                                            & \textbf{89.26} & 71.82          & 58.51 & \textbf{71.06} & 43.35          \\
\hline
\multirow{7}{*}{\begin{tabular}[c]{@{}c@{}}Amazon\\ Photos\end{tabular}}    & LP           & 89.27          & \textbf{94.24} & 90.26          & n.a.           & \textbf{90.24} & 85.55          & n.a.           \\
                                                                            & GKDE         & 85.94          & 76.51          & 60.83          & n.a.           & 66.72          & 59.09          & n.a.           \\
                                                                            & VGCN-Energy  & 94.24          & 82.44          & 79.64          & n.a.           & 72.60          & 71.71          & n.a.           \\
                                                                            & GKDE-GCN     & 89.84          & 73.65          & 69.09          & n.a            & 62.45          & 59.68          & n.a            \\
                                                                            & GPN & 94.10          & 82.72          & 91.98          & 76.57          & 74.55          & 86.29          & 64.00          \\
                                                                            & Ours   & \textbf{94.40} & 83.51          & \textbf{92.30}          & \textbf{78.10}          & 77.65          & \textbf{87.36 }         & \textbf{65.39 }        
\\
\hline
\multirow{7}{*}{\begin{tabular}[c]{@{}l@{}}Coauthor\\ CS\end{tabular}} & LP           & 86.40          & 83.78          & 80.86          & n.a.           & 74.8           & 71.15          & n.a            \\
                                                                            & GKDE         & 78.84          & 79.32          & 77.59          & n.a.           & 66.30          & 64.69          & n.a.           \\
                                                                            & VGCN-Energy  & 93.07          & \textbf{85.35} & 87.33          & n.a.           & \textbf{80.87} & 82.79          & n.a.           \\
                                                                            & GKDE-GCN     & \textbf{93.13} & 85.02          & 84.45          & n.a.           & 80.15          & 77.90          & n.a.           \\
                                                                            & GPN & 88.21          & 69.49          & \textbf{92.90}           & 88.84          & 55.41          & 90.28          & 86.54          \\
                                                                            & Ours   & 89.24          & 70.12          & 92.37          & \textbf{91.38}          & 56.20          & \textbf{91.17}          & \textbf{90.45}          \\
                                                                            \hline
\multirow{7}{*}{\begin{tabular}[c]{@{}l@{}}Coauthor\\ Physics\end{tabular}} & LP           & 95.39          & \textbf{91.78} & 90.03          & n.a.           & \textbf{70.58} & 69.63          & n.a.           \\
                                                                            & GKDE         & 93.30          & 87.02          & 84.64          & n.a.           & 57.00          & 52.49          & n.a.           \\
                                                                            & VGCN-Energy  & \textbf{97.96 }         & 90.29          & 91.08          & n.a.           & 63.63          & 69.41          & n.a.           \\
                                                                            & GKDE-GCN     & 97.95 & 87.38          & 84.62          & n.a.           & 57.97          & 56.30          & n.a.           \\
                                                                            & GPN & 97.40          & 85.20          & \textbf{94.51} & 89.63          & 61.89          & \textbf{83.73} & 66.44          \\
                                                                            & Ours   & 97.44          & 85.28          & 94.42          & \textbf{90.36} & 62.80          & 83.61          & \textbf{70.62} \\
\hline                                                                            
\multirow{7}{*}{\begin{tabular}[c]{@{}c@{}}OGBN\\ Arxiv\end{tabular}} & LP           & 66.84                & \textbf{80.04} & \textbf{75.22} & n.a.          & \textbf{65.21} & \textbf{67.69} & n.a.          \\
                                                                      & GKDE         & 51.51                & 68.12          & 65.80          & n.a.          & 47.22          & 45.23          & n.a.          \\
                                                                      & VGCN-Energy  & \textbf{75.61}       & 64.91          & 64.50          & n.a.          & 42.72          & 42.41          & n.a           \\
                                                                      & GKDE-GCN     & 73.89                & 68.84          & 72.44          & n.a.          & 49.71          & 52.23          & n.a.          \\
                                                                      & GPN & 73.84                & 66.33          & 74.82          & 62.17         & 46.35          & 58.71          & \textbf{43.01}         \\
                                                                      & Ours   & 71.30                & 66.98          & 74.52          & \textbf{62.75 }        & 47.48          & 56.97          & 41.48      \\  
\hline
\end{tabular}
\begin{tablenotes}\scriptsize
\centering
\item[*] Alea: Aleatoric, Epi.: Epistemic, w/: with propagation, w/o: without propagation 
\end{tablenotes}
\vspace{2mm}
\label{tab:ood_cont}
\vspace{-5mm}
\end{table*}
\subsection{Additional Experiments - Misclassification Detection}
For Amazon Photos, Amazon Computers, Coauthor CS, Coauthor Physics, and OGBN Arxiv dataset, the Misclassification Detection results are shown in Table \ref{tab:miscla_cont}.
\begin{table*}[htbp!]
\scriptsize
\centering
\caption{AUROC and AUPR for the Misclassification Detection (Cont.)}
\begin{tabular}{c|c|cc|cc}
\hline
\multirow{2}{*}{Data} & \multirow{2}{*}{Model}   & \multicolumn{2}{c|}{AUROC} & \multicolumn{2}{c}{AUPR} \\
                                    &              & Alea w/    & Epi w/     & Alea w/    & Epi w/    \\
                                    \hline
\multirow{5}{*}{\begin{tabular}[c]{@{}c@{}}Amazon\\ Computers\end{tabular}} & APPNP        & 79.75     & n.a.          & 45.10     & n.a.          \\
                                                                            & VGCN-Energy & 82.08     & n.a.         & 45.53     & n.a.         \\
                                                                            & GKDE-GCN     & 79.66     & 73.66         & 63.26     & 56.93         \\
                                                                            & GPN & \textbf{82.20}     & \textbf{77.58}         & 47.93     & 41.80         \\
                                                                            & Ours  & 80.75	& 74.87 &	\textbf{93.12}	& \textbf{90.11}        \\
\hline
\multirow{5}{*}{\begin{tabular}[c]{@{}c@{}}Amazon\\ Photos\end{tabular}}    & APPNP        & 85.74     & n.a.          & 37.00     & n.a.          \\
                                                                            & VGCN-Energy & \textbf{87.94}     & n.a.         & 48.35     & n.a.         \\
                                                                            & GKDE-GCN     & 84.11     & 75.07         & 54.35     & 45.43         \\
                                                                            & GPN & 87.21     & \textbf{83.38}         & 46.32     & 37.07         \\
                                                                            & Ours   & 84.42 &	81.61	& \textbf{96.89} &	\textbf{96.70}       \\
\hline
\multirow{5}{*}{\begin{tabular}[c]{@{}c@{}}Coauthor\\ CS\end{tabular}}      & APPNP        & \textbf{89.92}     & n.a.          & 37.98     & n.a.          \\
                                                                            & VGCN-Energy & 89.46     & n.a.         & 38.86     & n.a.         \\
                                                                            & GKDE-GCN     & 89.24     & 80.98         & 39.30     & 30.52         \\
                                                                            & GPN & 85.72     & 81.56         & 46.12     & 38.98         \\
                                                                            & Ours  & 86.21 &	\textbf{83.94} &	\textbf{97.34} &	\textbf{96.80}         \\
\hline
\multirow{5}{*}{\begin{tabular}[c]{@{}c@{}}Coauthor\\ Physics\end{tabular}}  & APPNP        & \textbf{93.27}     & n.a.          & 38.14     & n.a.          \\
                                                                            & VGCN-Energy & 92.86     & n.a.         & 37.19     & n.a.         \\
                                                                            & GKDE-GCN     & 92.77     & 86.12         & 37.08     & 25.13         \\
                                                                            & GPN & 91.14     & \textbf{89.63}         & 41.43     & 35.64         \\
                                                                            & Ours &  89.93	 & 88.83 &	\textbf{99.14} &	\textbf{99.10}	       \\

\hline
\multirow{6}{*}{\begin{tabular}[c]{@{}c@{}}OGBN\\ Arxiv\end{tabular}}       & APPNP        & 77.55     & n.a.          & 54.57     & n.a.          \\
                                                                            & VGCN-Energy & \textbf{77.89}     & n.a.         & 54.87     & n.a.         \\
                                                                            & GKDE-GCN     & 77.47     & \textbf{77.55}         & 61.62     & 62.33         \\
                                                                            & GPN & 75.44     & 72.71         & 55.64     & 52.99         \\
                                                                            & Ours   & 75.30     & 72.85         & \textbf{83.95}     & \textbf{81.54} \\
\hline
\end{tabular}
\begin{tablenotes}\scriptsize
\centering
\item[*] Alea: Aleatoric, Epi.: Epistemic, w/: with propagation 
\end{tablenotes}
\vspace{2mm}
\label{tab:miscla_cont}
\vspace{-5mm}
\end{table*}

\subsection{Graph Distance Minimization}
We plot the tSNE visualization of latent space with different distance-based regularization weights and symbol sizes denote the total evidence. We plot for coraML in Figure \ref{fig:cora}, CiteSeer in Figure \ref{fig:CiteSeer}, Coauthor CS in Figure \ref{fig:cs},  Coauthor Physics in Figure \ref{fig:cp}. With increasing weight, it tends to have a more separable latent representation for different categories while degenerate mappings occur when distance minimization is too large.
\begin{figure}[H]
    \centering
    \begin{tabular}{cc}
      \multicolumn{2}{c}{CoraML} \\
      \includegraphics[width=0.25\textwidth]{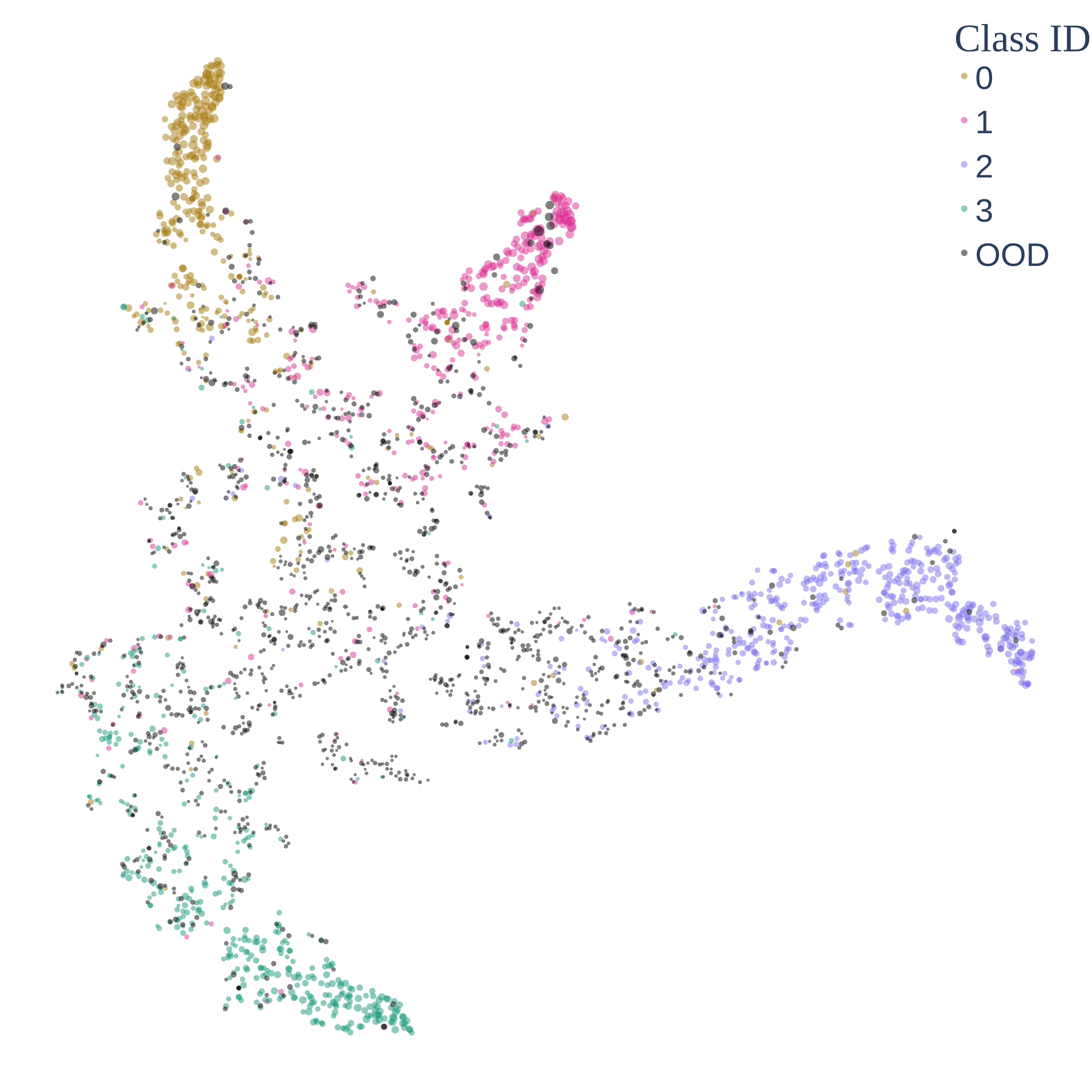} & 
      \includegraphics[width=0.25\textwidth]{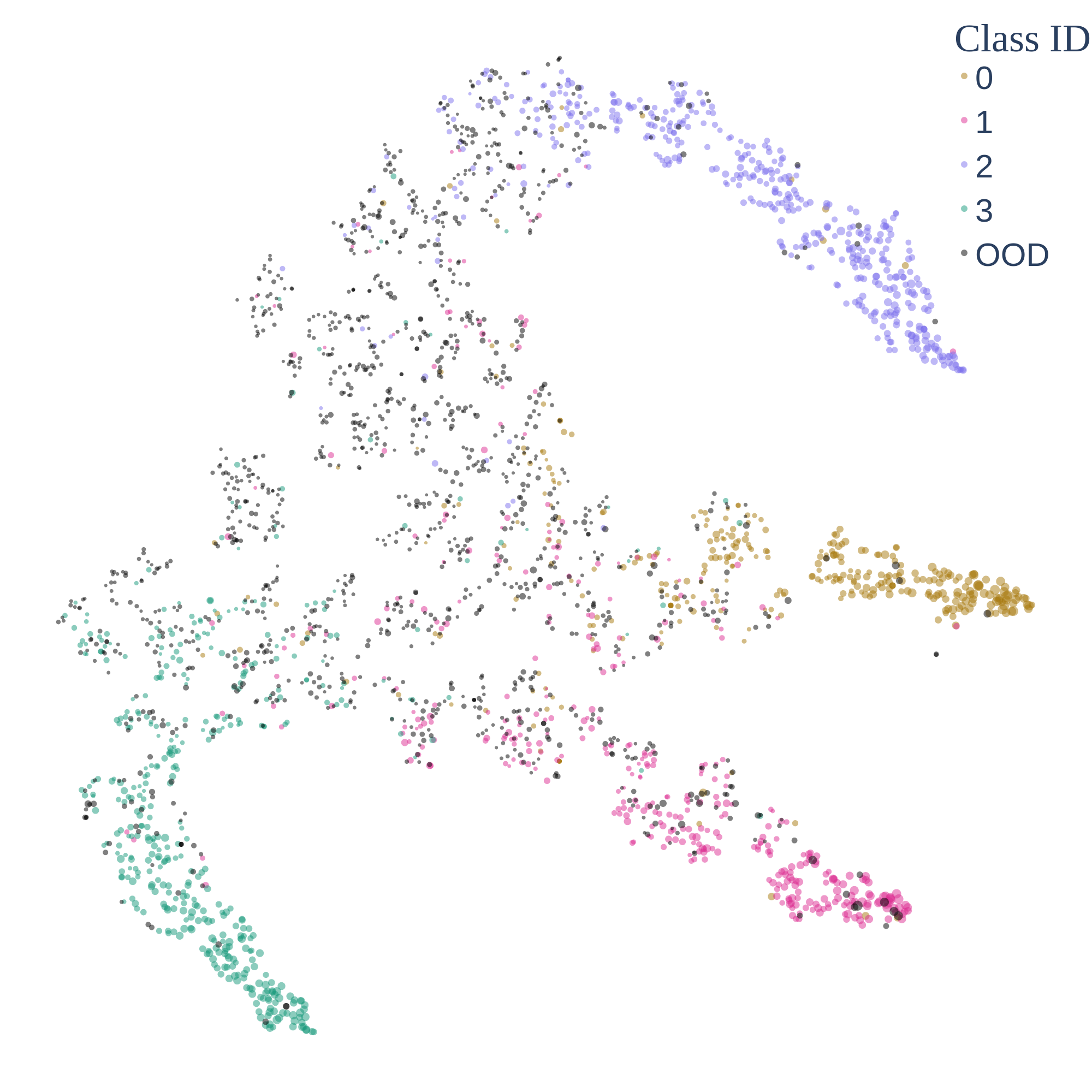} \\ 
      (a) $0$ & (b) $10^{-6}$ \\
      \includegraphics[width=0.25\textwidth]{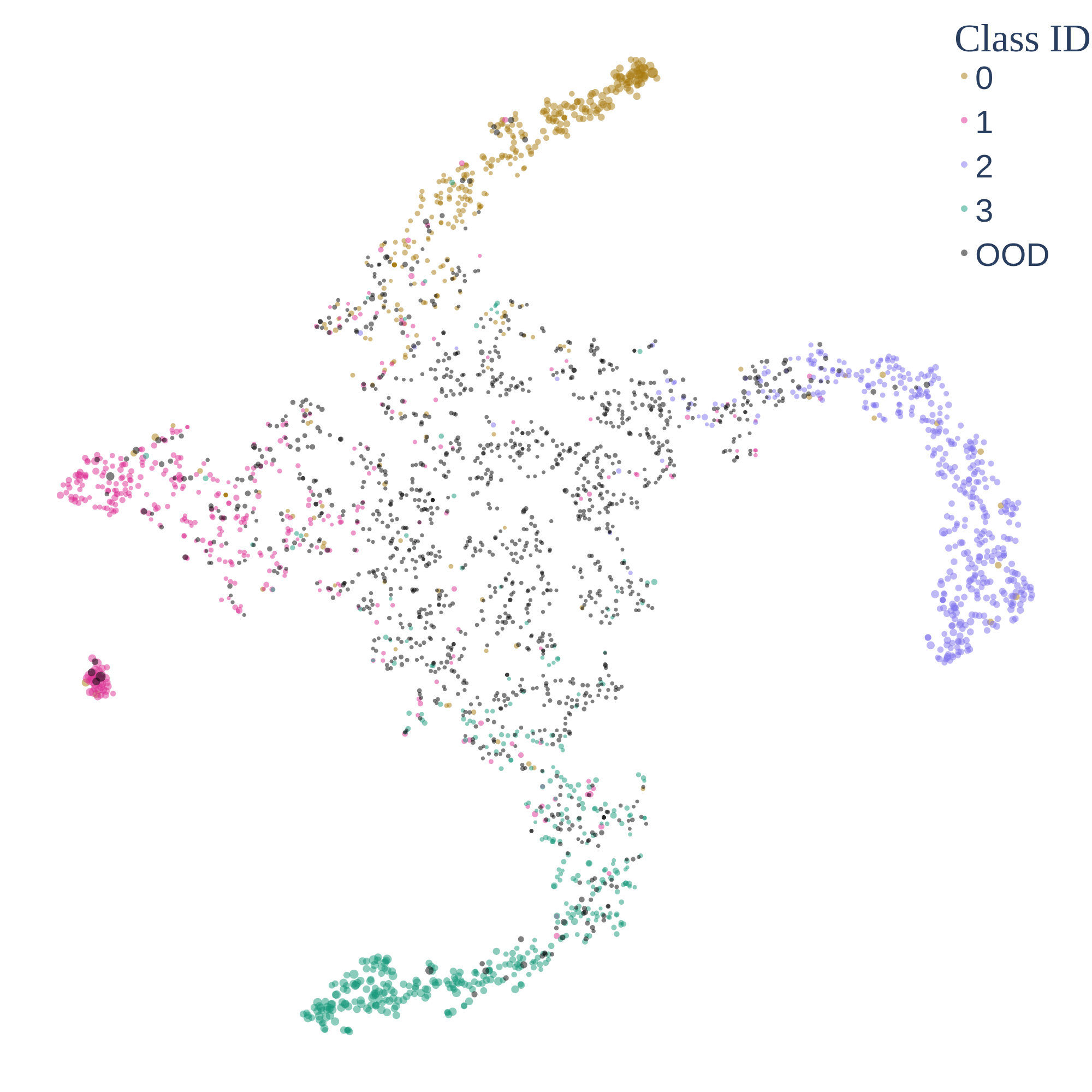} & 
      \includegraphics[width=0.25\textwidth]{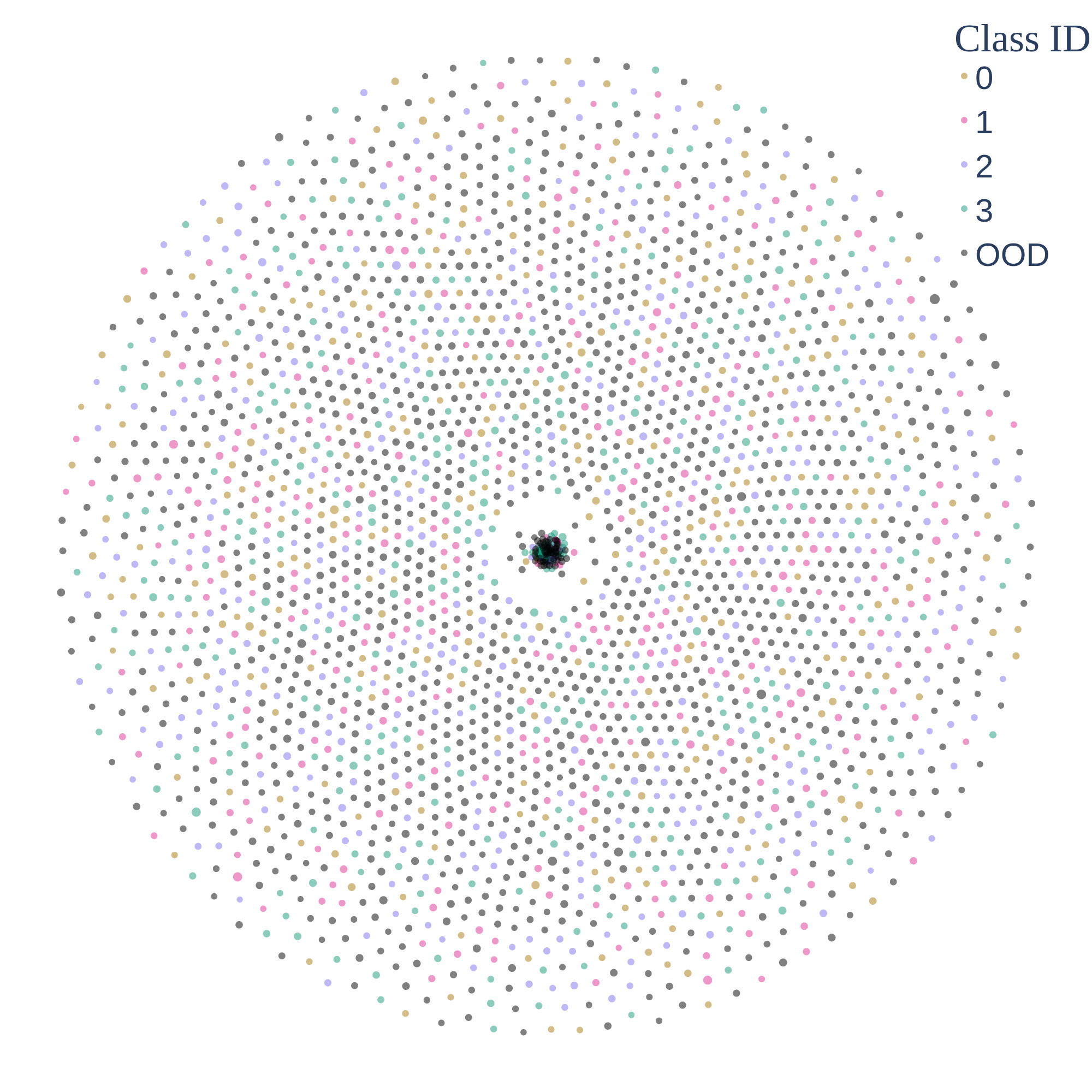} \\
      (c) $10^{-4}$ & (d) $10^{-2} $
    \end{tabular}
    \caption{latent representation for CoraML}
    \label{fig:cora}
\end{figure}

\begin{figure}[H]
    \centering
    \begin{tabular}{cc}
      \multicolumn{2}{c}{CiteSeer} \\
      \includegraphics[width=0.25\textwidth]{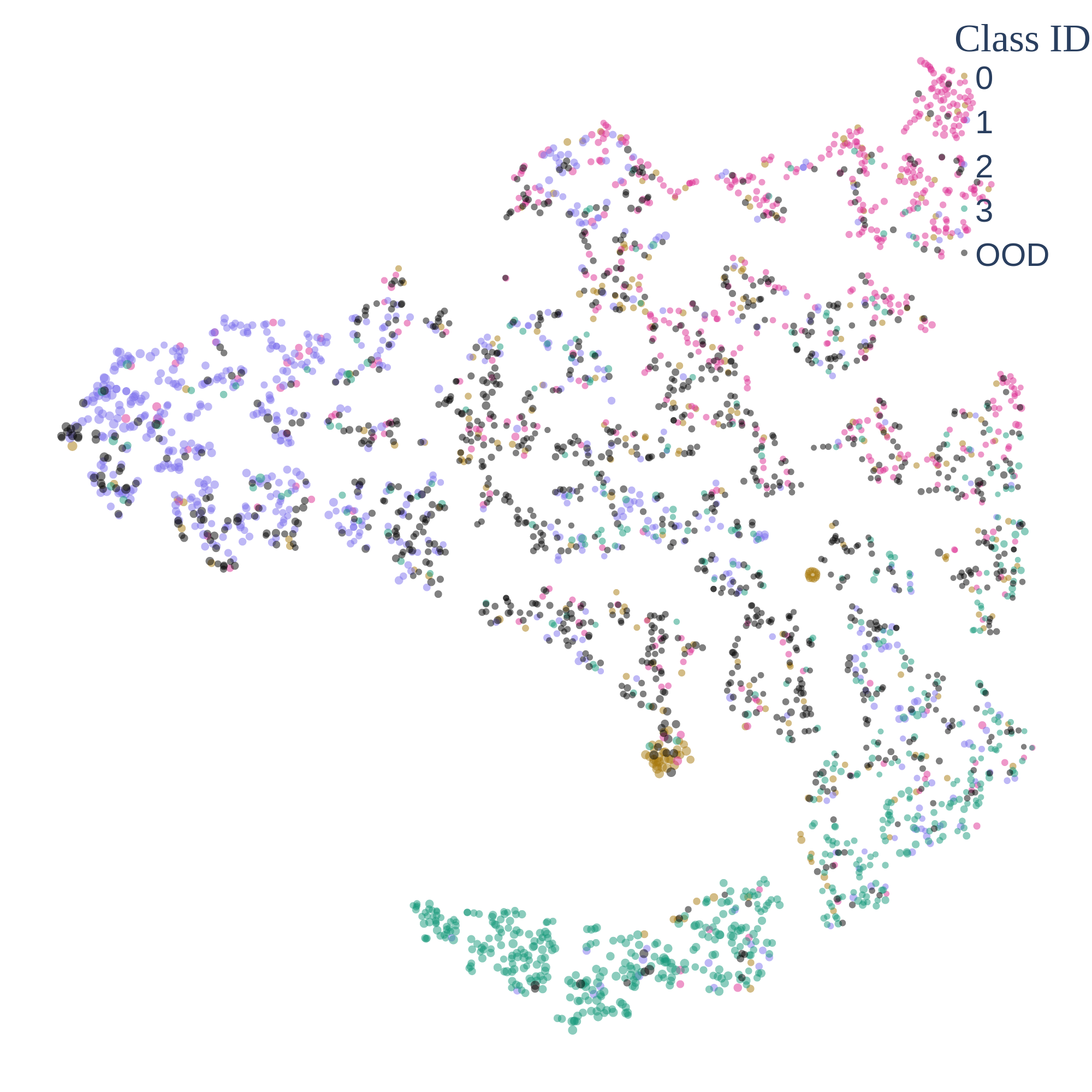} & 
      \includegraphics[width=0.25\textwidth]{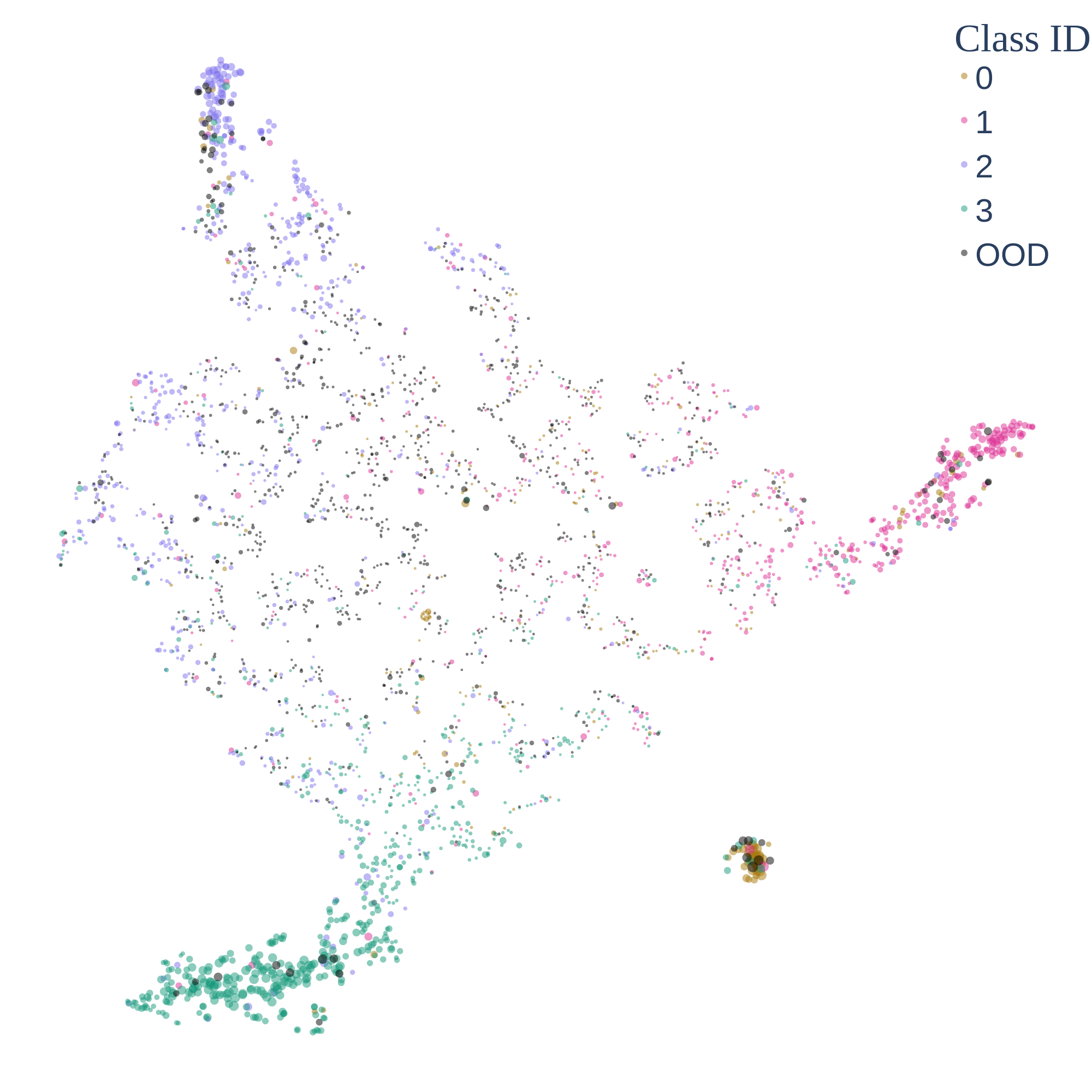} \\ 
      (a) $0$ & (b) $10^{-6}$ \\
      \includegraphics[width=0.25\textwidth]{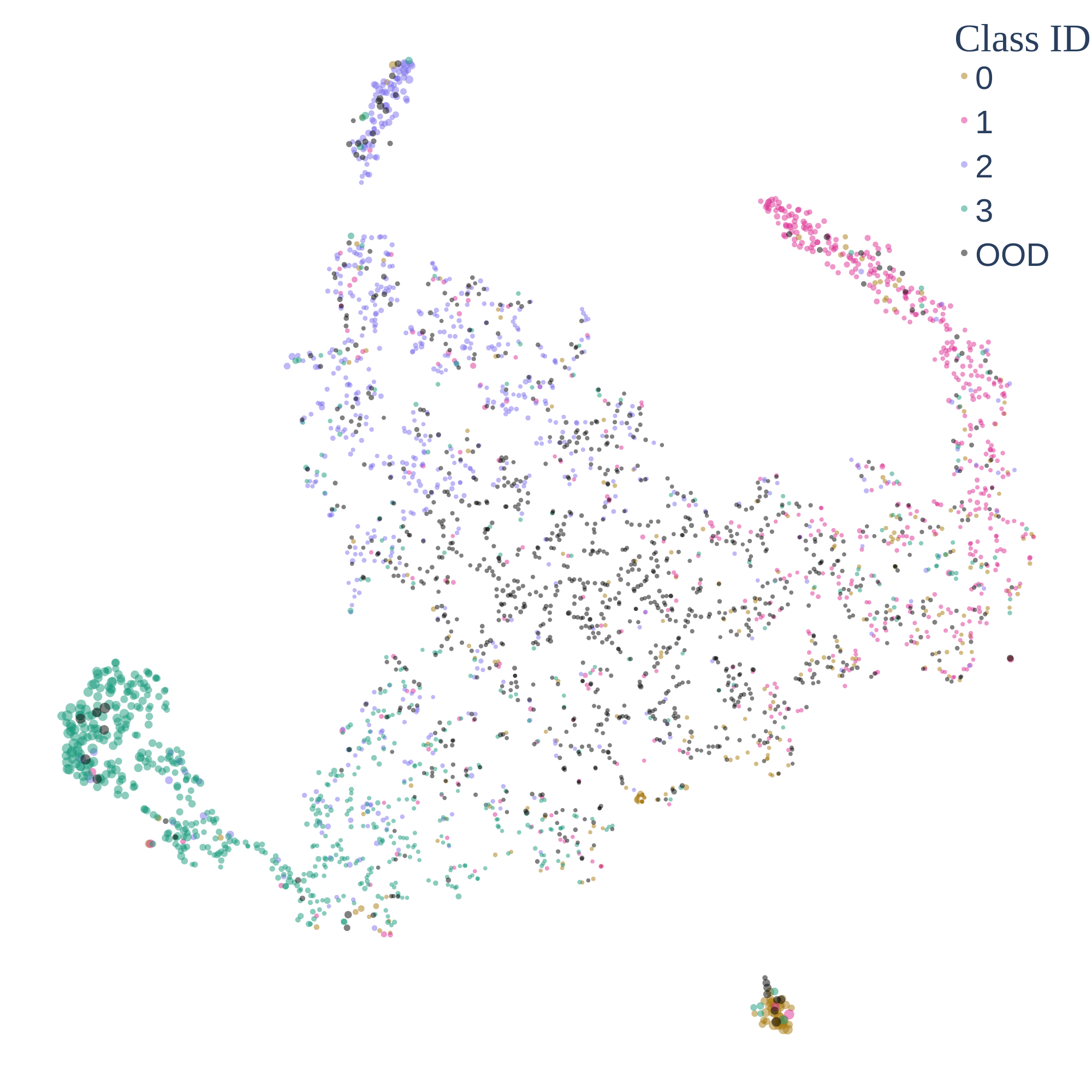} & \includegraphics[width=0.25\textwidth]{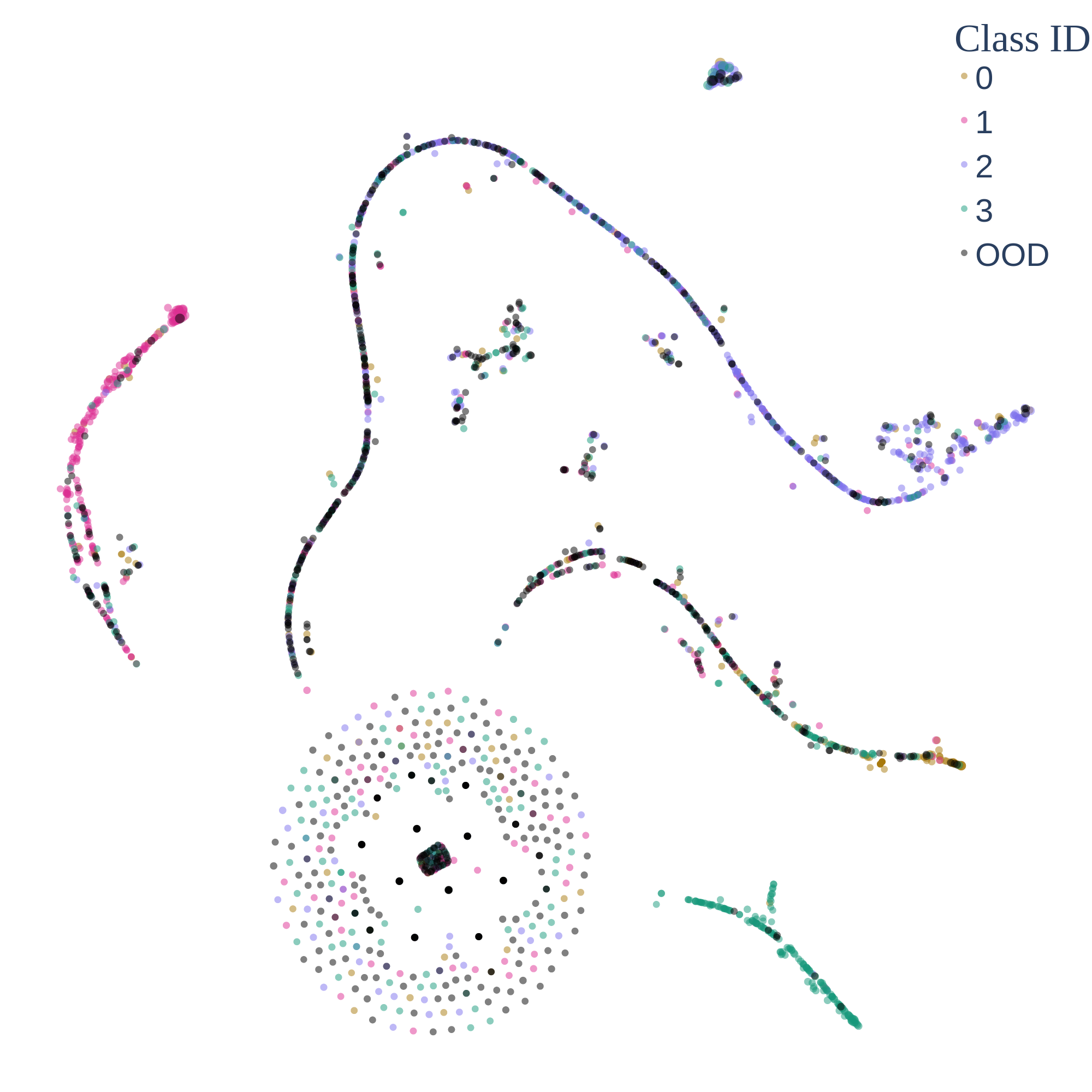} \\
      (c) $10^{-4}$ & (d) $10^{-2}$ 
    \end{tabular}    
    \caption{latent representation for CiteSeer}
    \label{fig:CiteSeer}
\end{figure}




\begin{figure}[H]
    \centering
    \begin{tabular}{cc}
          \multicolumn{2}{c}{CoauthorCS} \\
          \includegraphics[width=0.25\textwidth]{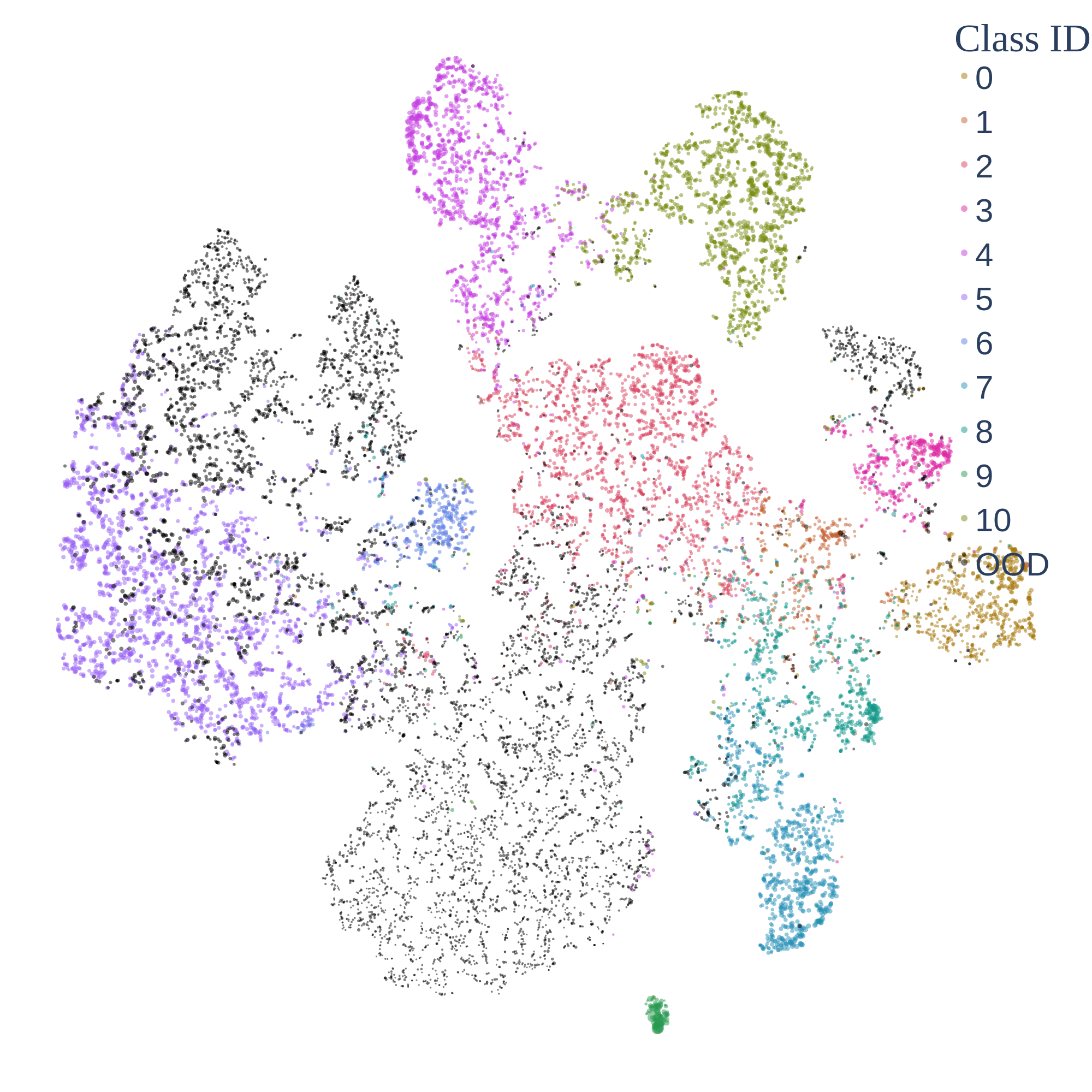} & 
          \includegraphics[width=0.25\textwidth]{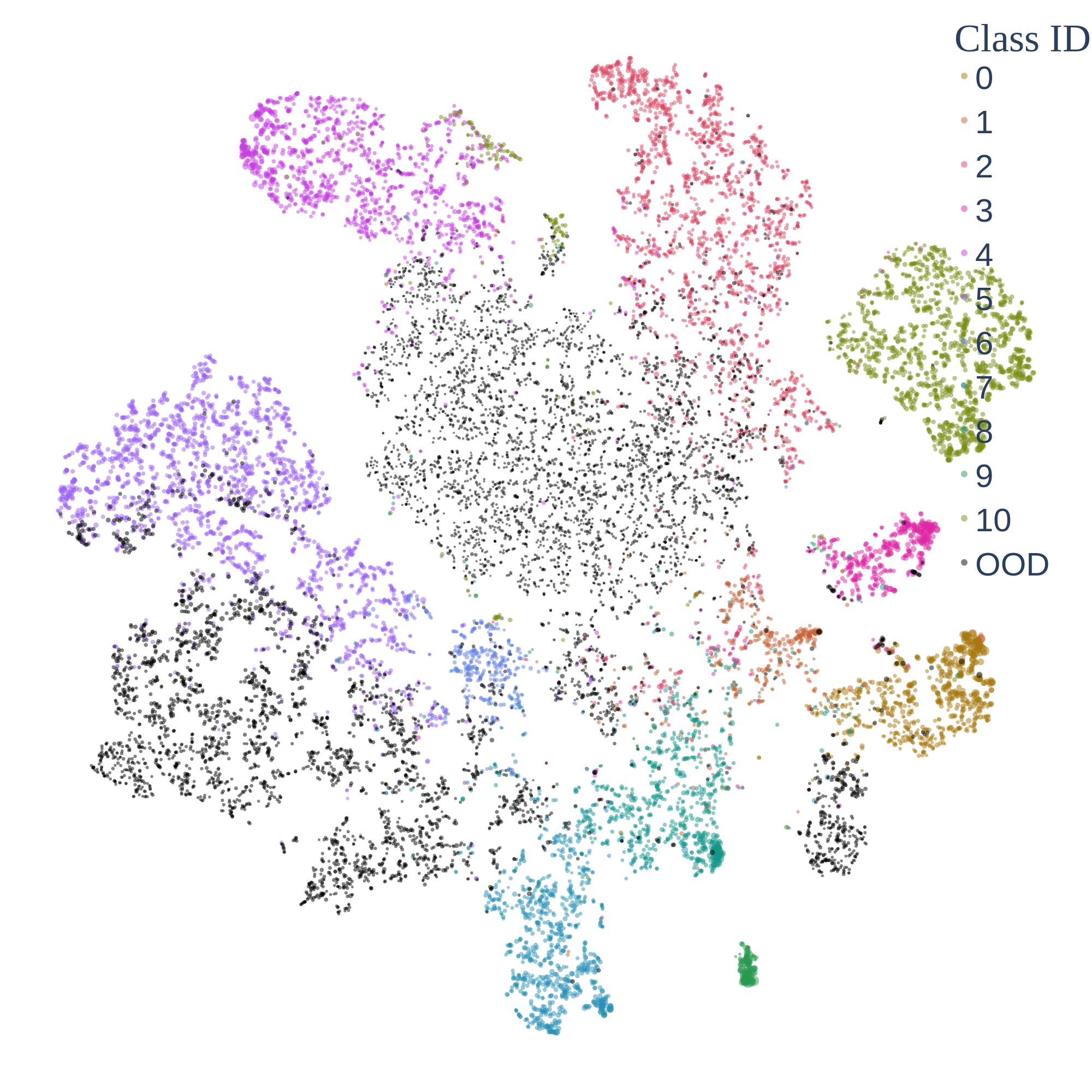} \\ 
          (a) $0$ & (b) $10^{-6}$ \\
          \includegraphics[width=0.25\textwidth]{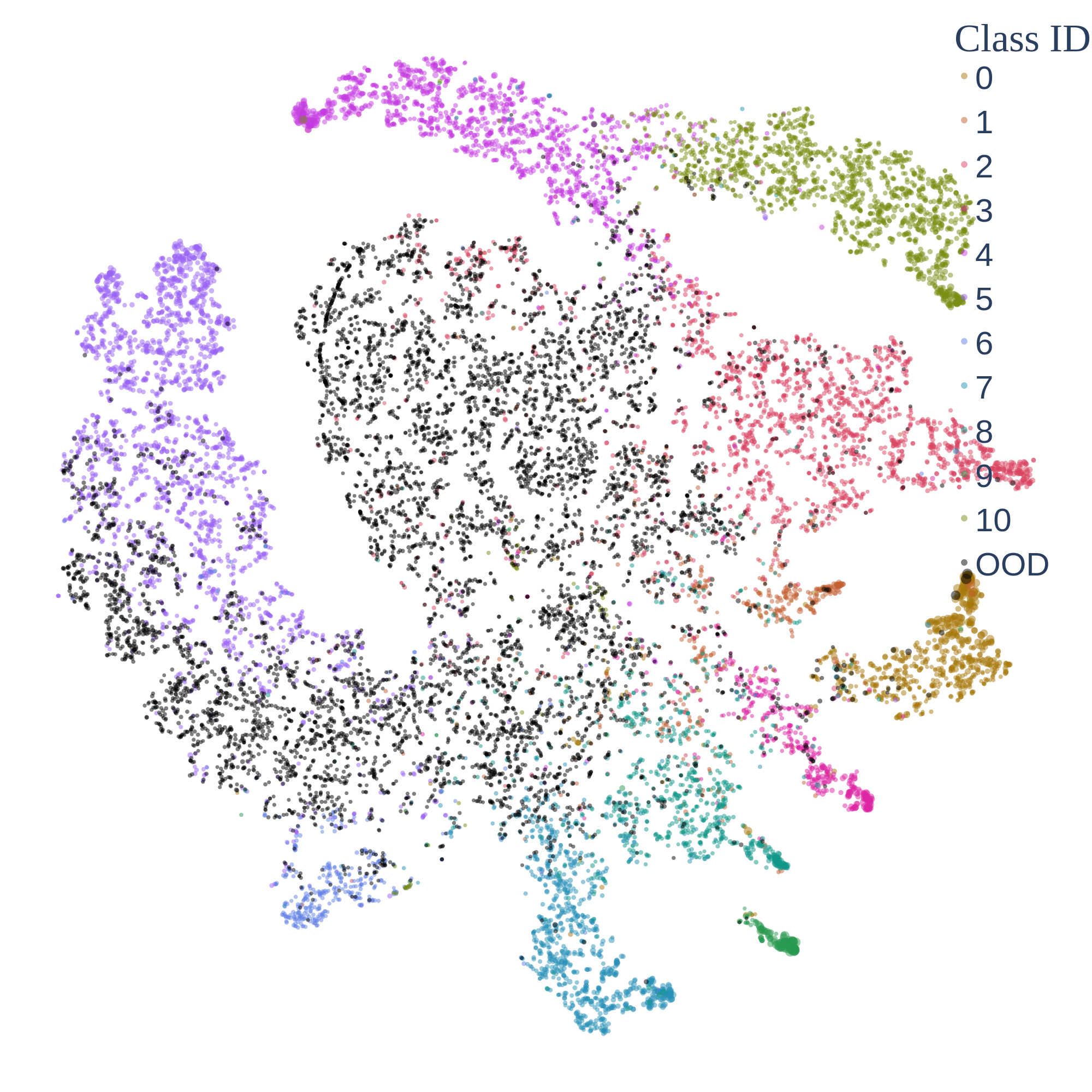} & \includegraphics[width=0.25\textwidth]{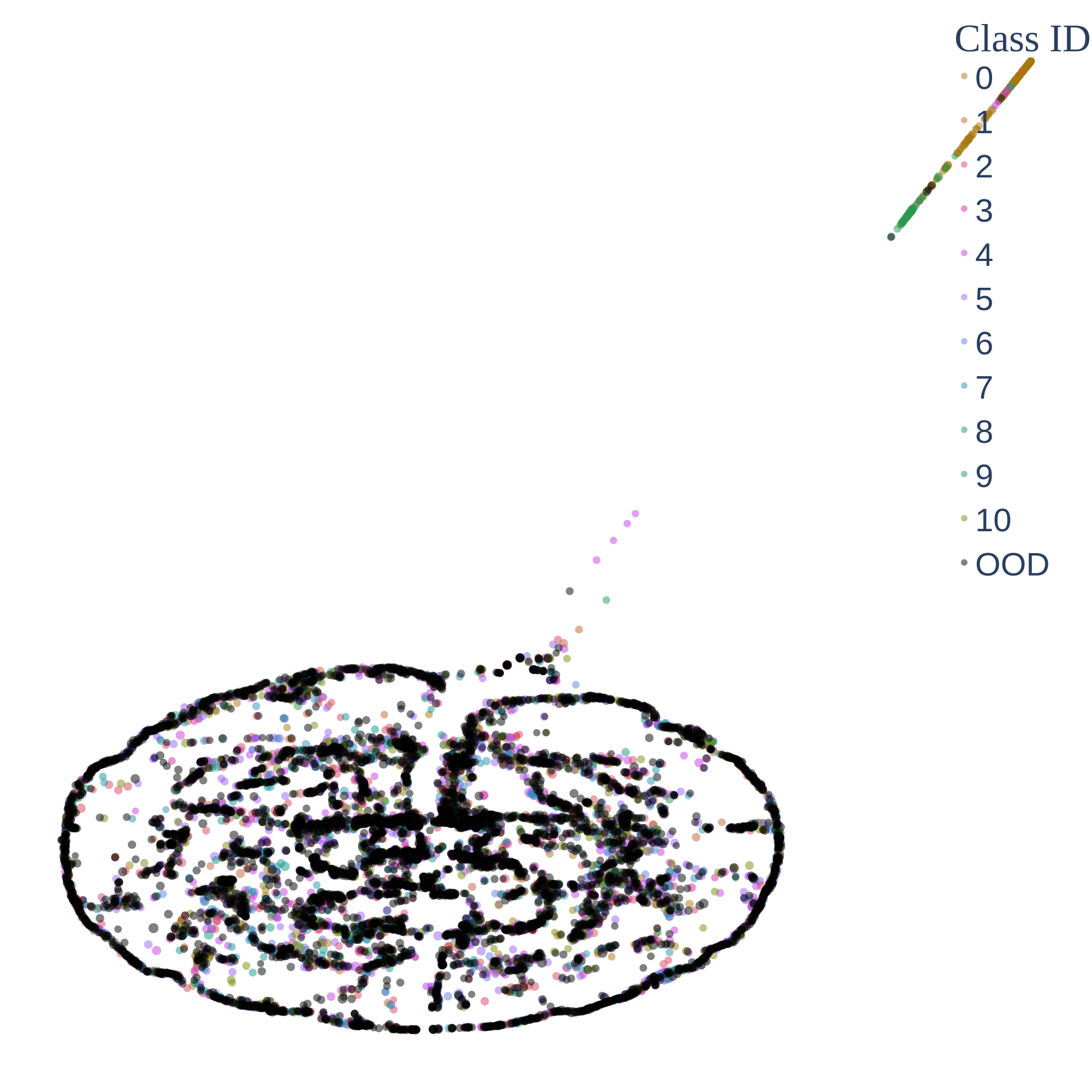} \\
          (c) $10^{-4}$ & (d) $10^{-2} $
    \end{tabular}
    \caption{latent representation for Coauthor CS}
    \label{fig:cs}
\end{figure}

\begin{figure}[H]
    \centering
    \begin{tabular}{cc}
  \multicolumn{2}{c}{CoauthorPhysics} \\
  \includegraphics[width=0.25\textwidth]{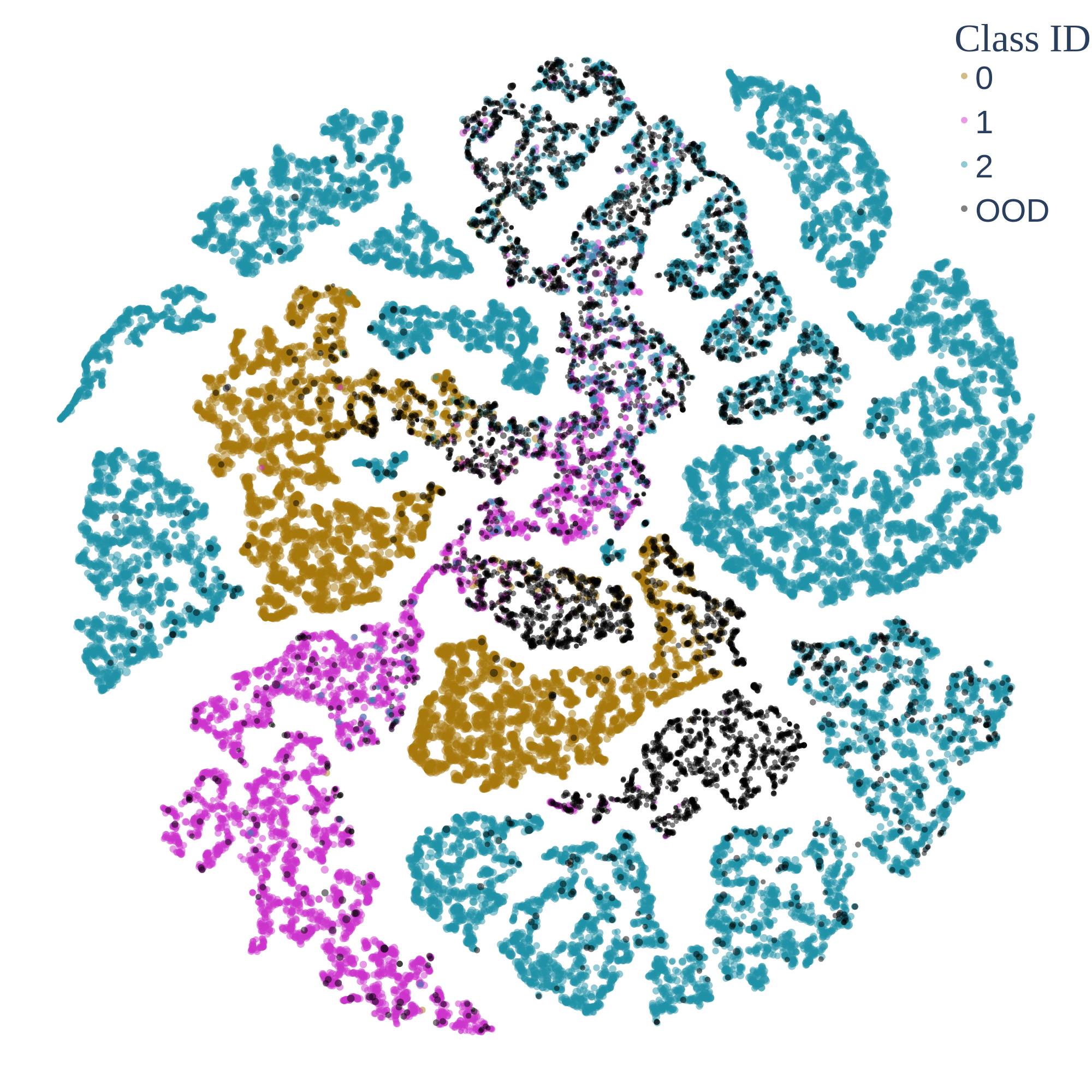} & 
  \includegraphics[width=0.25\textwidth]{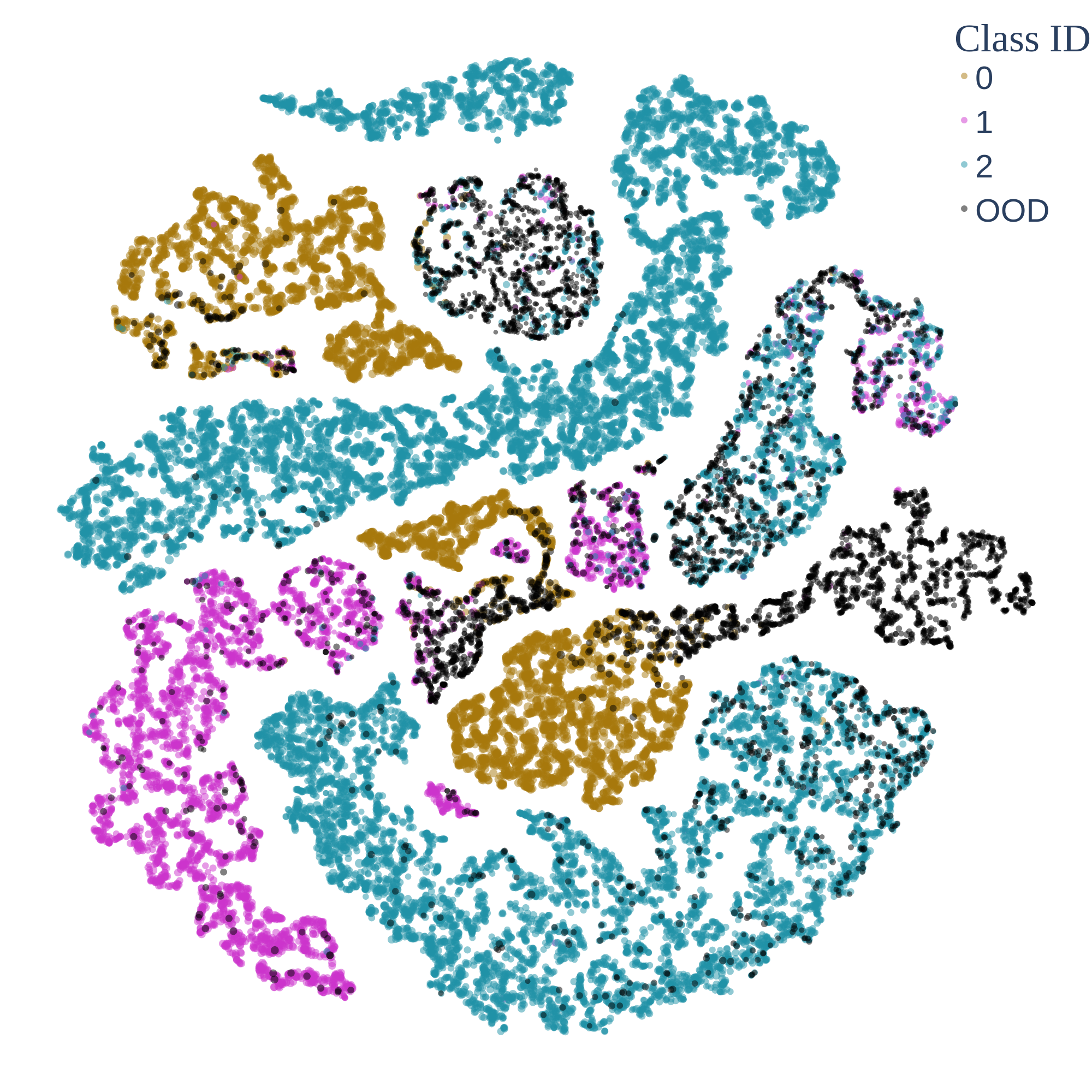} \\ 
  (a) $0$ & (b) $10^{-6}$ \\
  \includegraphics[width=0.25\textwidth]{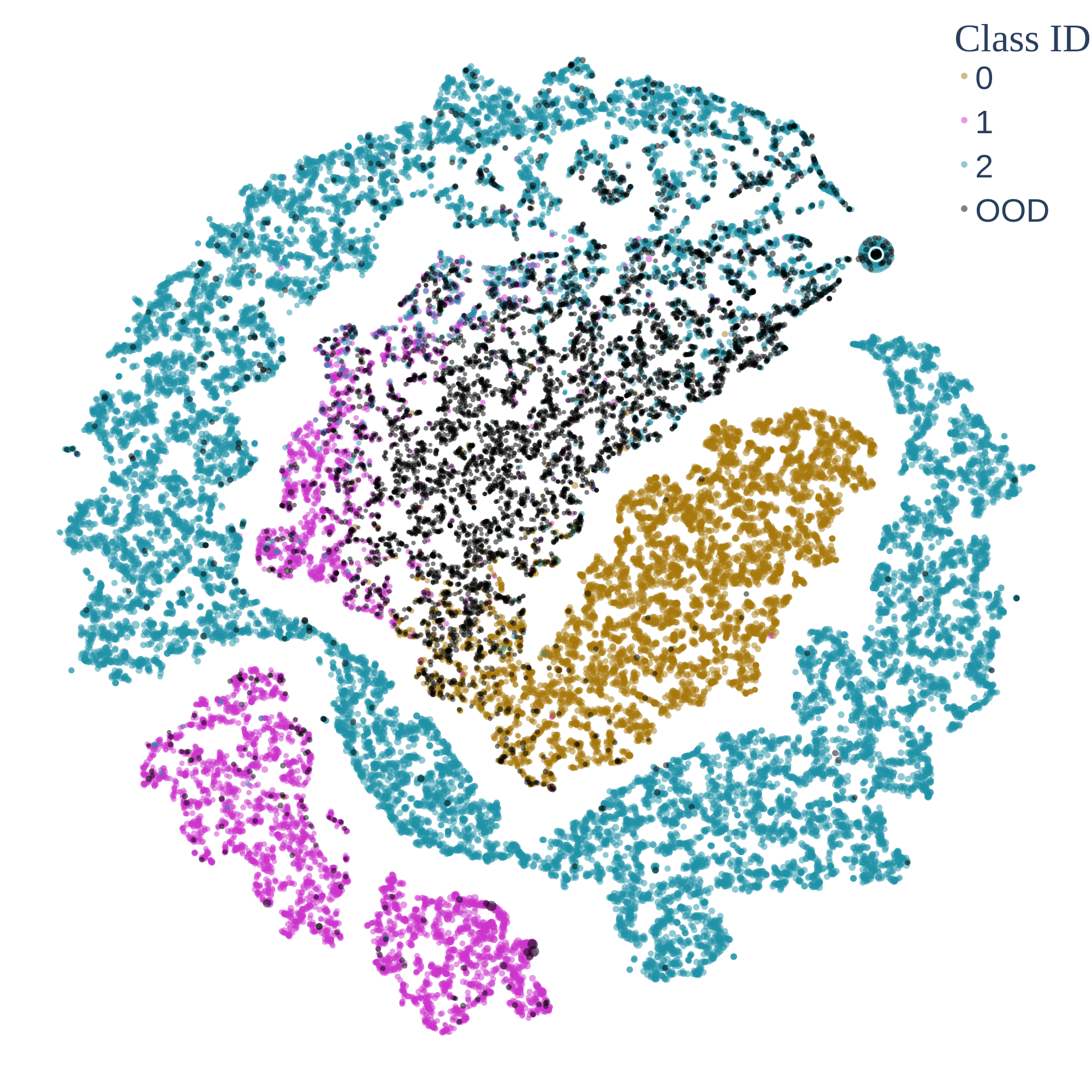} & \includegraphics[width=0.25\textwidth]{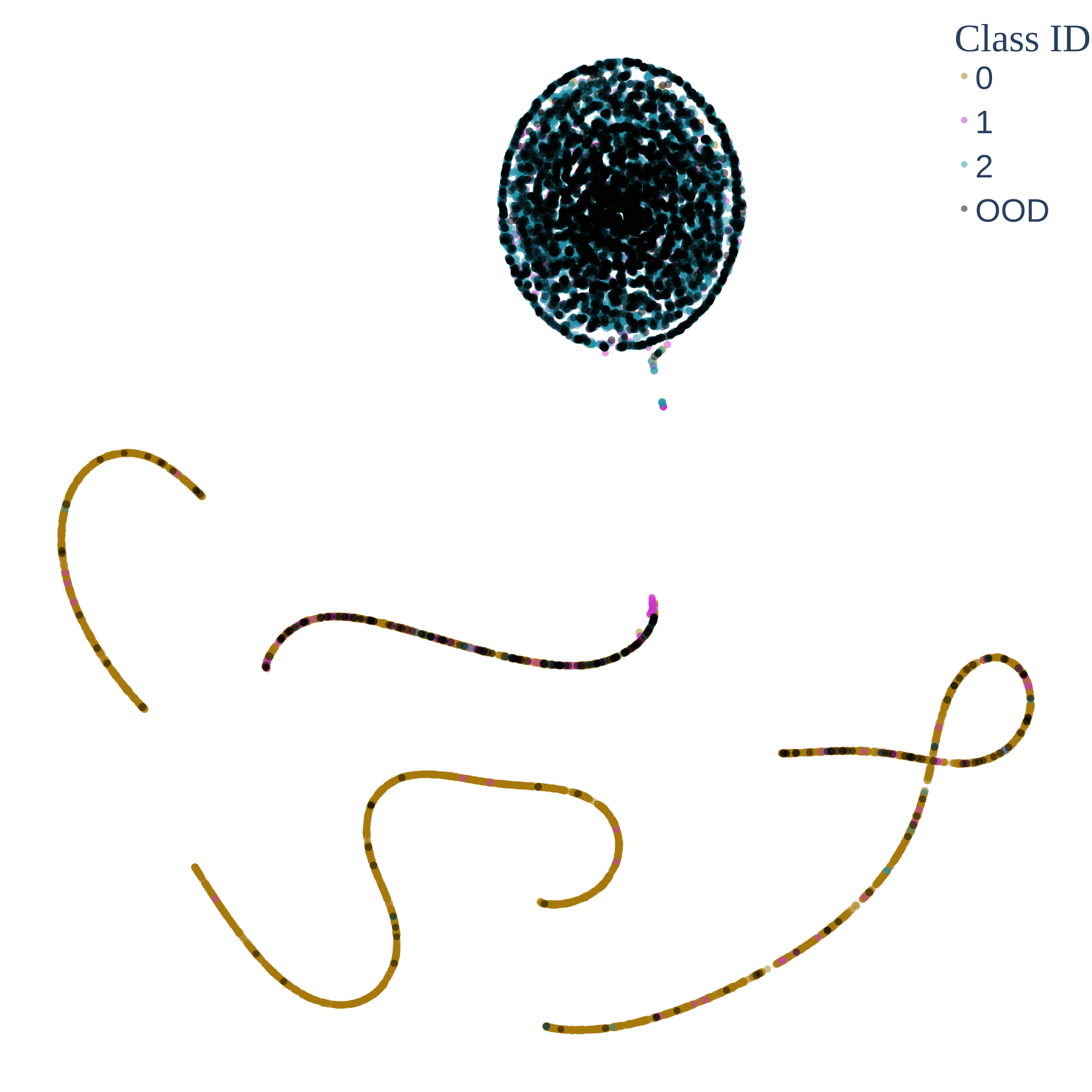} \\
  (c) $10^{-4}$ & (d) $10^{-2} $
\end{tabular}    
    \caption{latent representation for Coauthor Physics}
    \label{fig:cp}
    \end{figure}

\subsection{Graph Activation}
In this subsection, we present the t-SNE visualizations of the learned representational space for various datasets in the following figures, without applying distance regularization. Instead, we introduce different activation functions. It is worth noting the notable distinction in quality when using the LogSigmoid activation function, which appears to be the smoothest among the activation functions employed on CiteSeer and Amazon Computers datasets. Once again, the size of each node corresponds to the square root of the learned evidence. Additionally, the color black indicates out-of-distribution (OOD) instances across all datasets, while distinct colors represent different classes.
\begin{figure}[H]
    \centering
    \begin{tabular}{ccc}
      \multicolumn{3}{c}{CoraML} \\
      \includegraphics[width=0.33\textwidth]{TSNE/CoraML/RELU} & \includegraphics[width=0.33\textwidth]{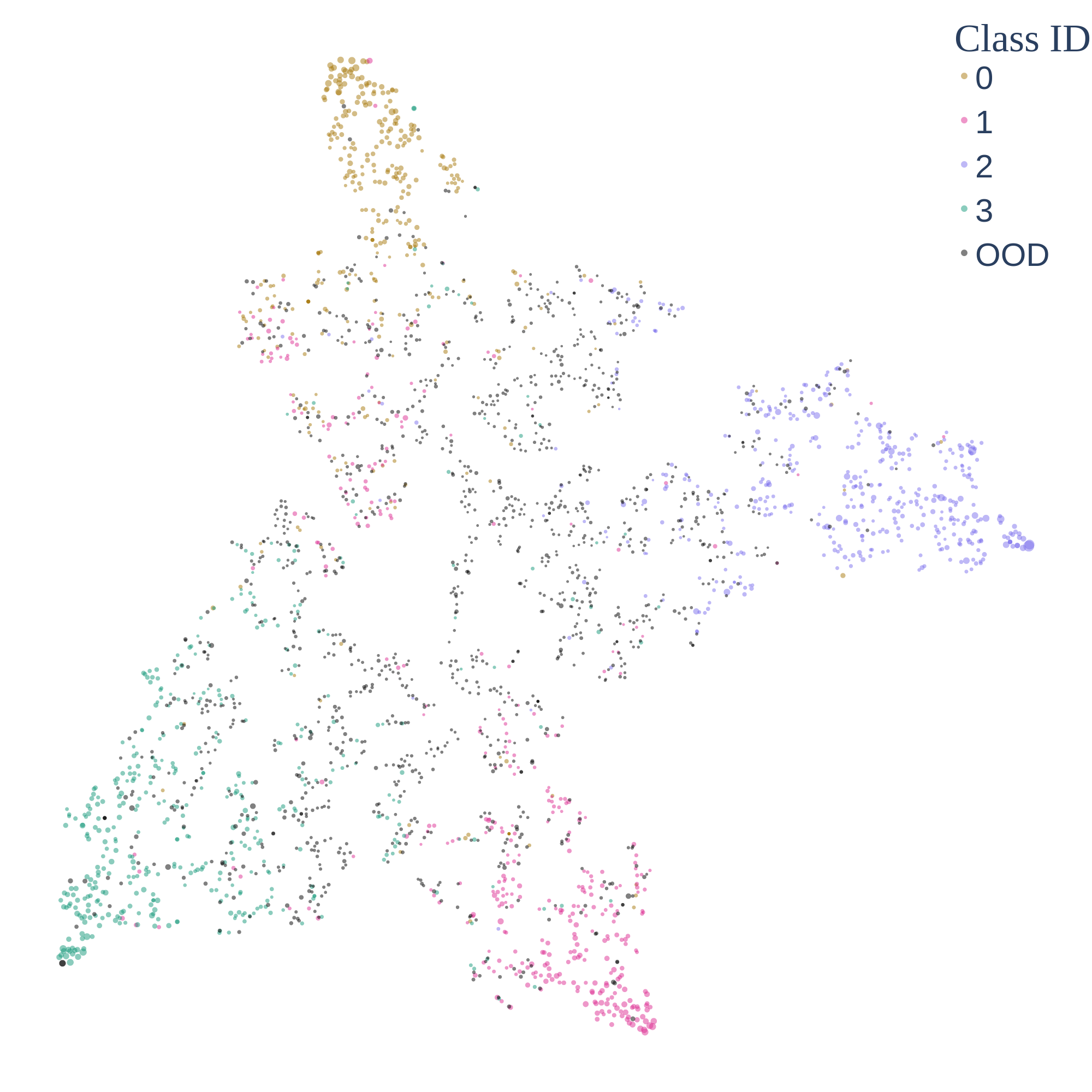} & \includegraphics[width=0.33\textwidth]{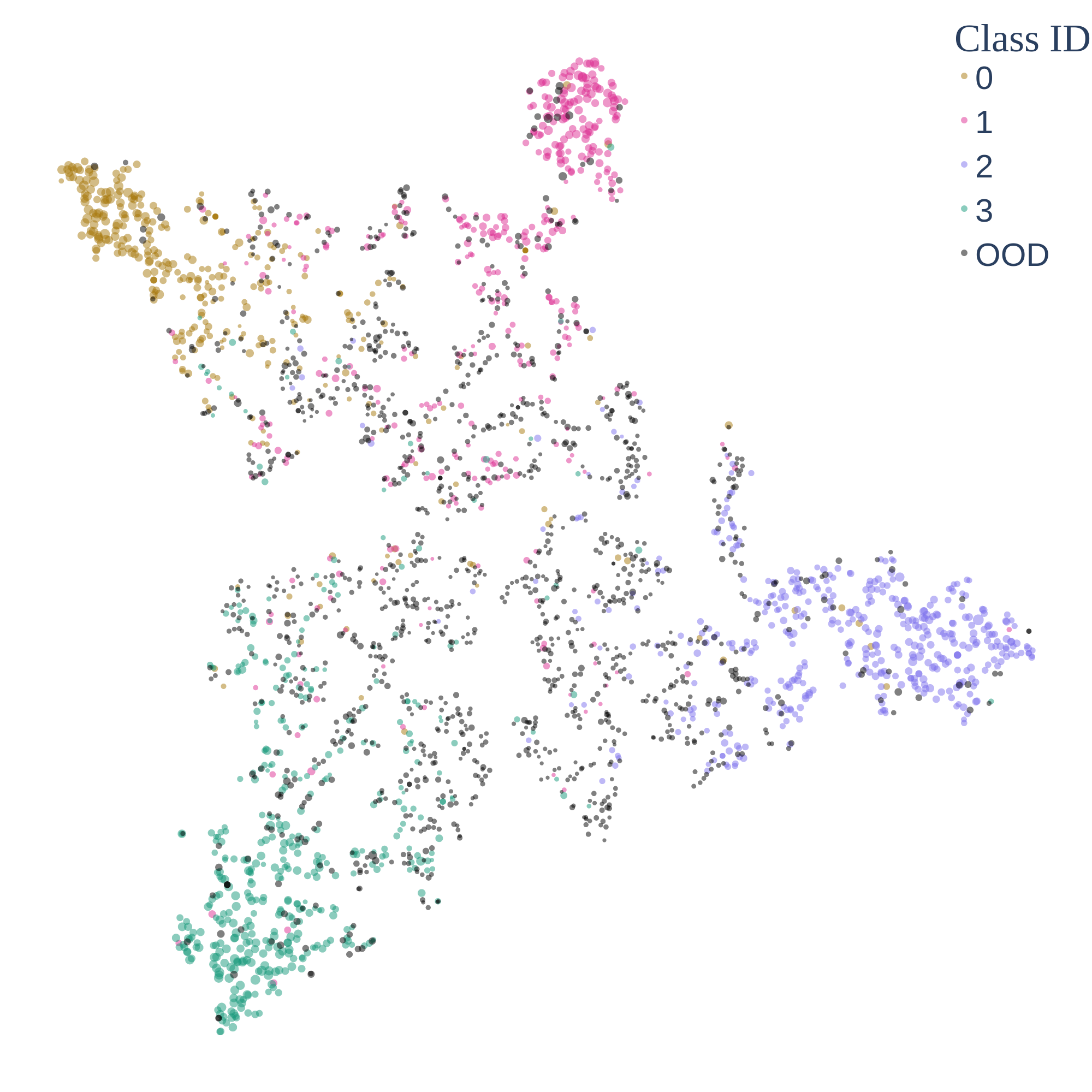} \\
      \multicolumn{3}{c}{CiteSeer} \\
      \includegraphics[width=0.33\textwidth]{TSNE/CiteSeer/RELU} & \includegraphics[width=0.33\textwidth]{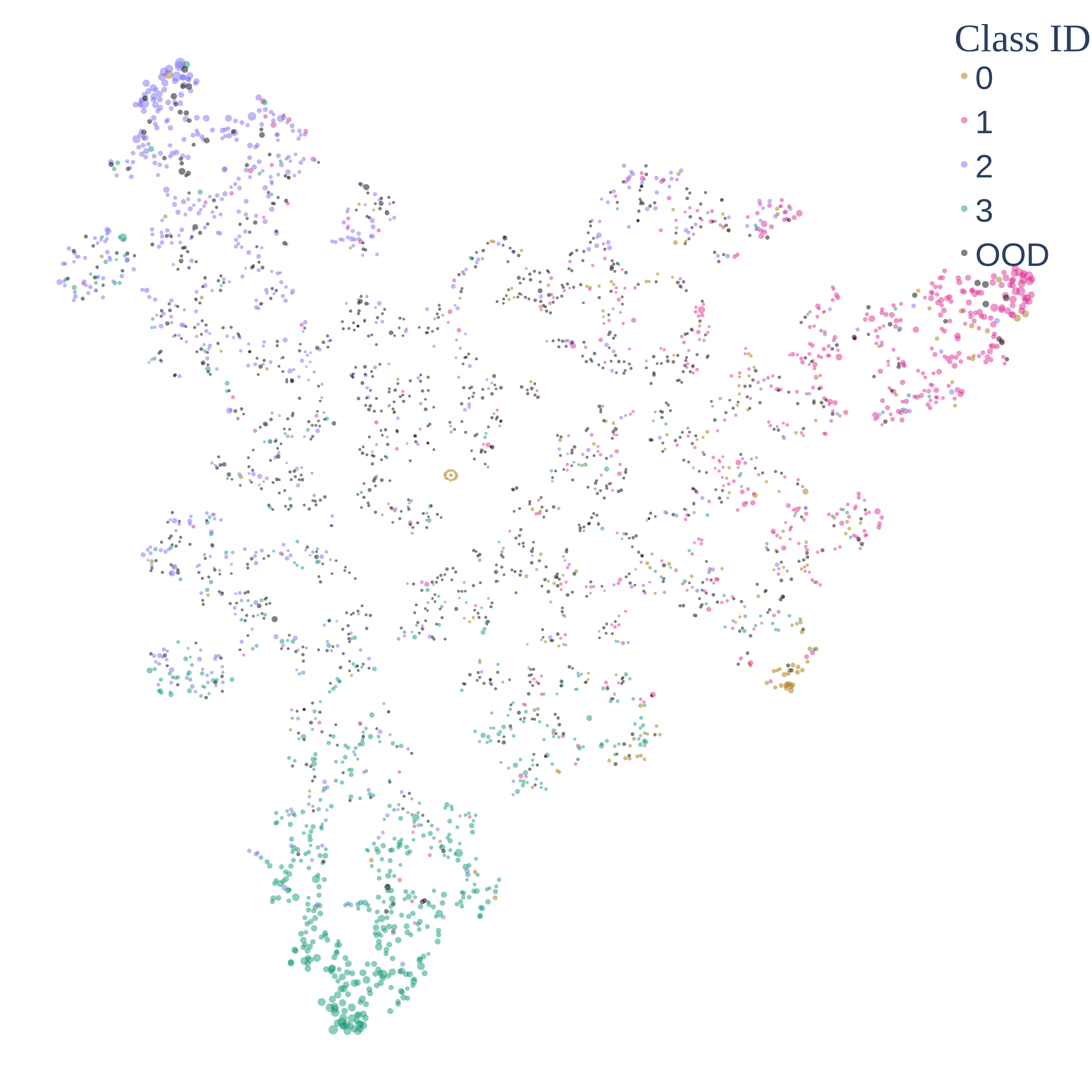} & \includegraphics[width=0.33\textwidth]{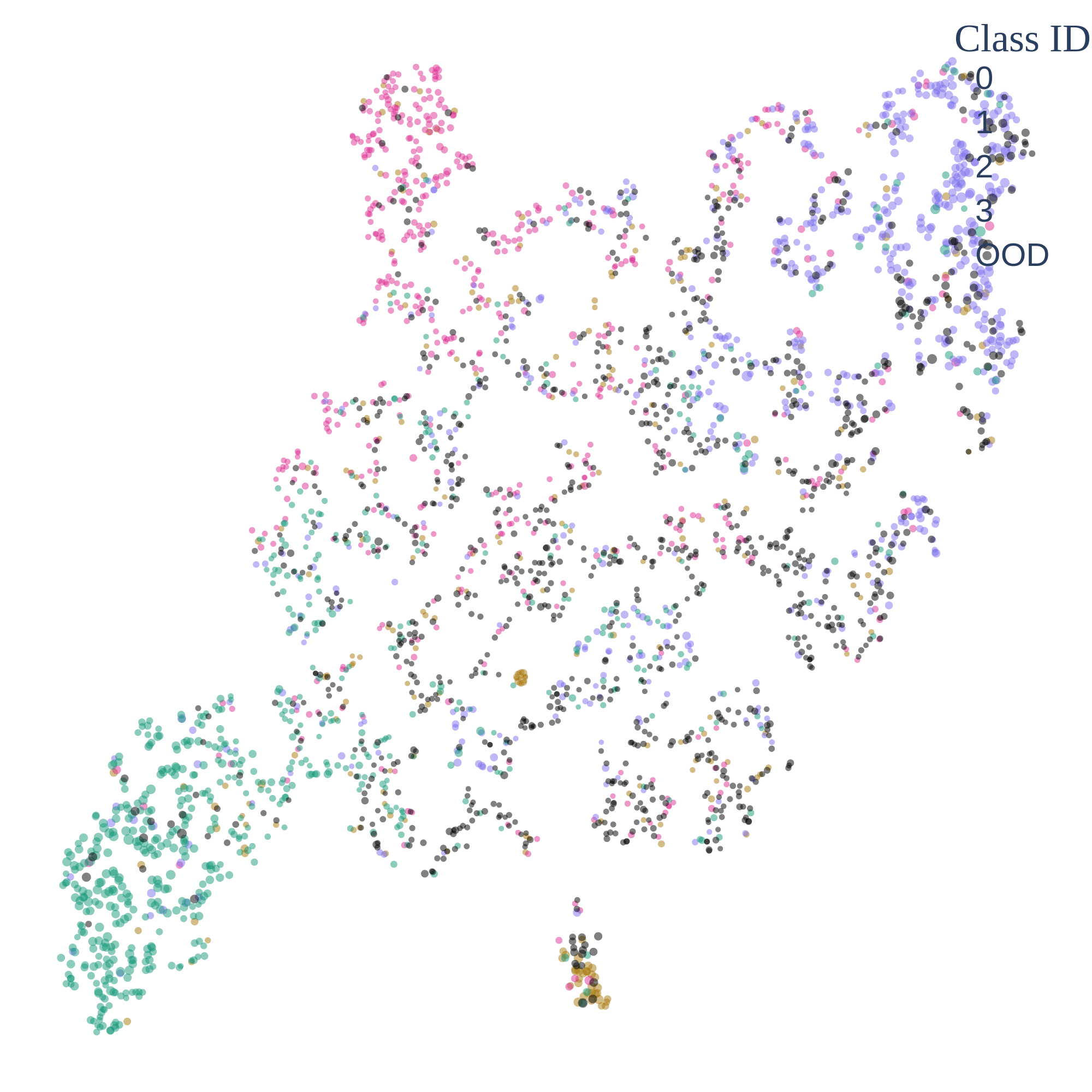} \\
      \multicolumn{3}{c}{PubMed} \\
      \includegraphics[width=0.33\textwidth]{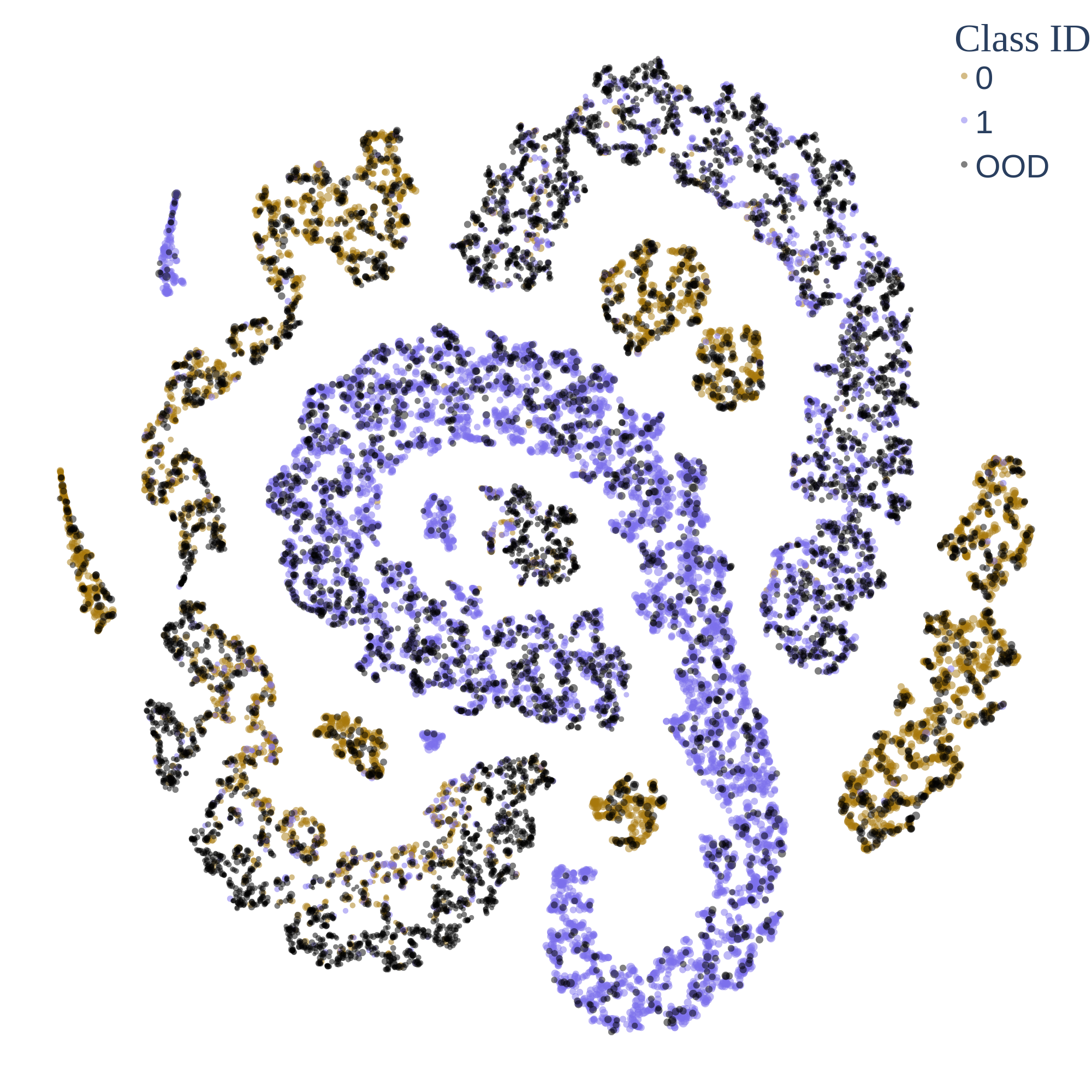} & \includegraphics[width=0.33\textwidth]{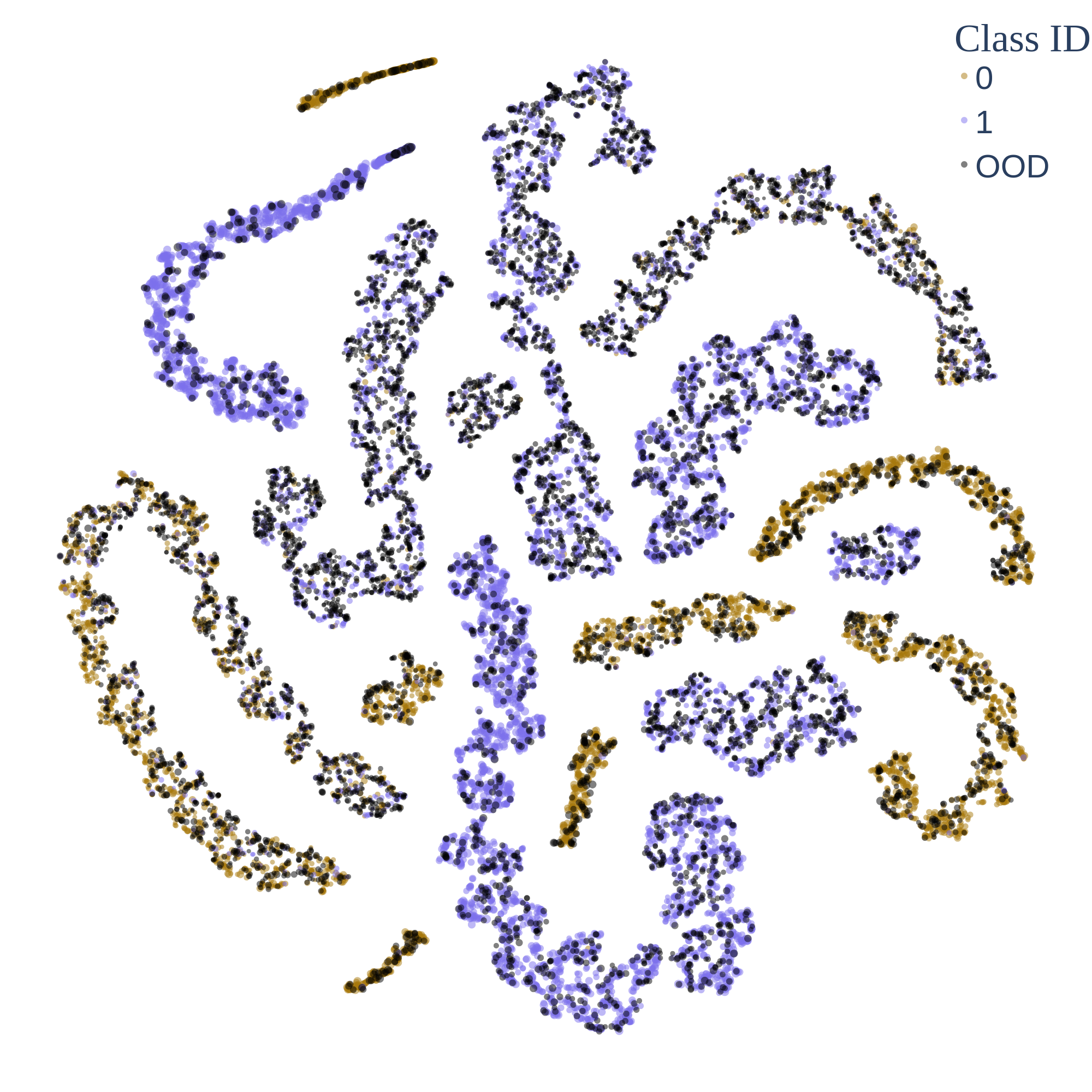} & \includegraphics[width=0.33\textwidth]{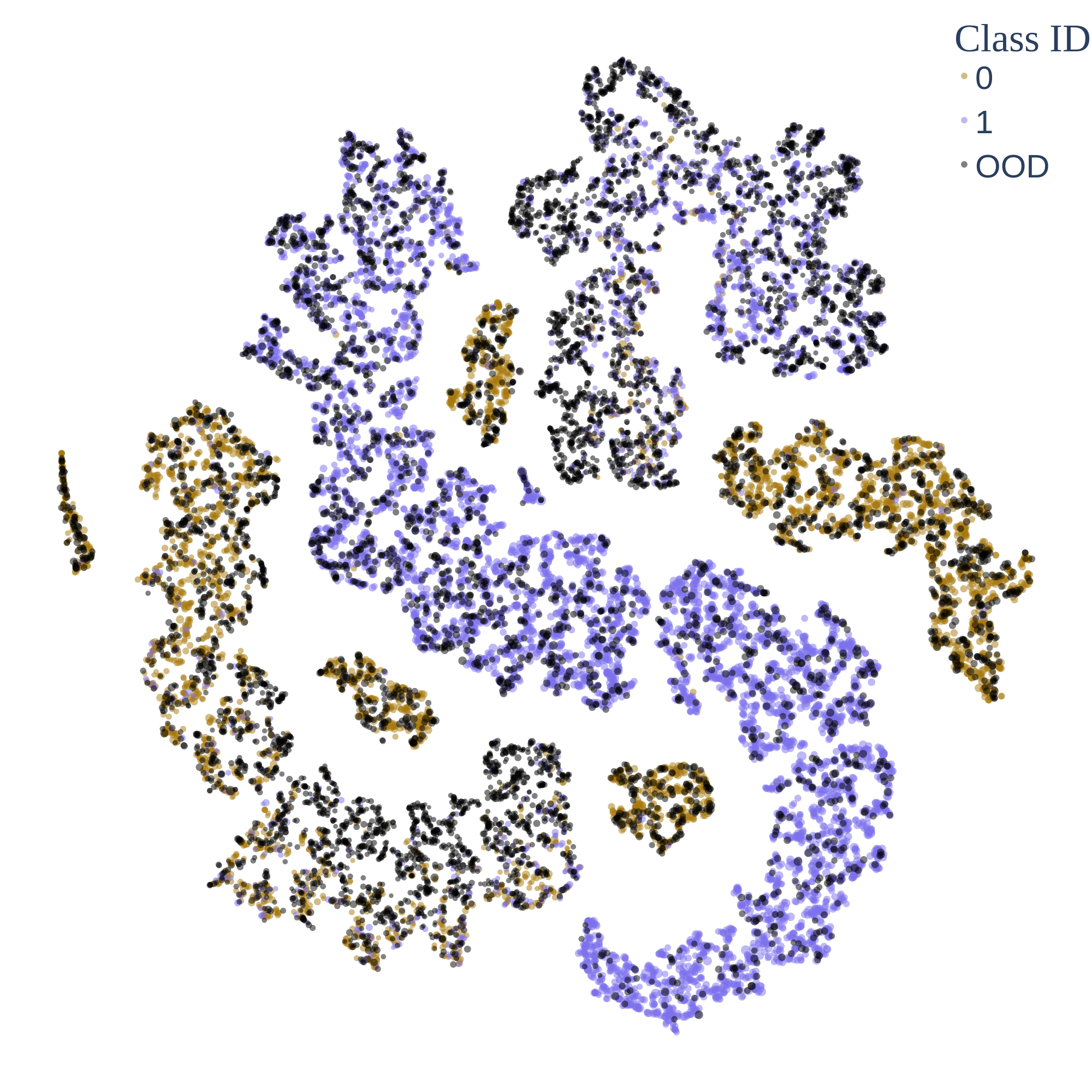} \\
    (a) RELU & (b) LogSigmoid & (c) HardTanh \\
    \end{tabular}    
    \caption{Latent representation for CoraML, CiteSeer and PubMed on different graph activation functions: RELU, LogSigmoid, and HardTanh.}
\end{figure}

\begin{figure}[H]
    \centering
    \begin{tabular}{ccc}
      \multicolumn{3}{c}{AmazonPhotos} \\
      \includegraphics[width=0.33\textwidth]{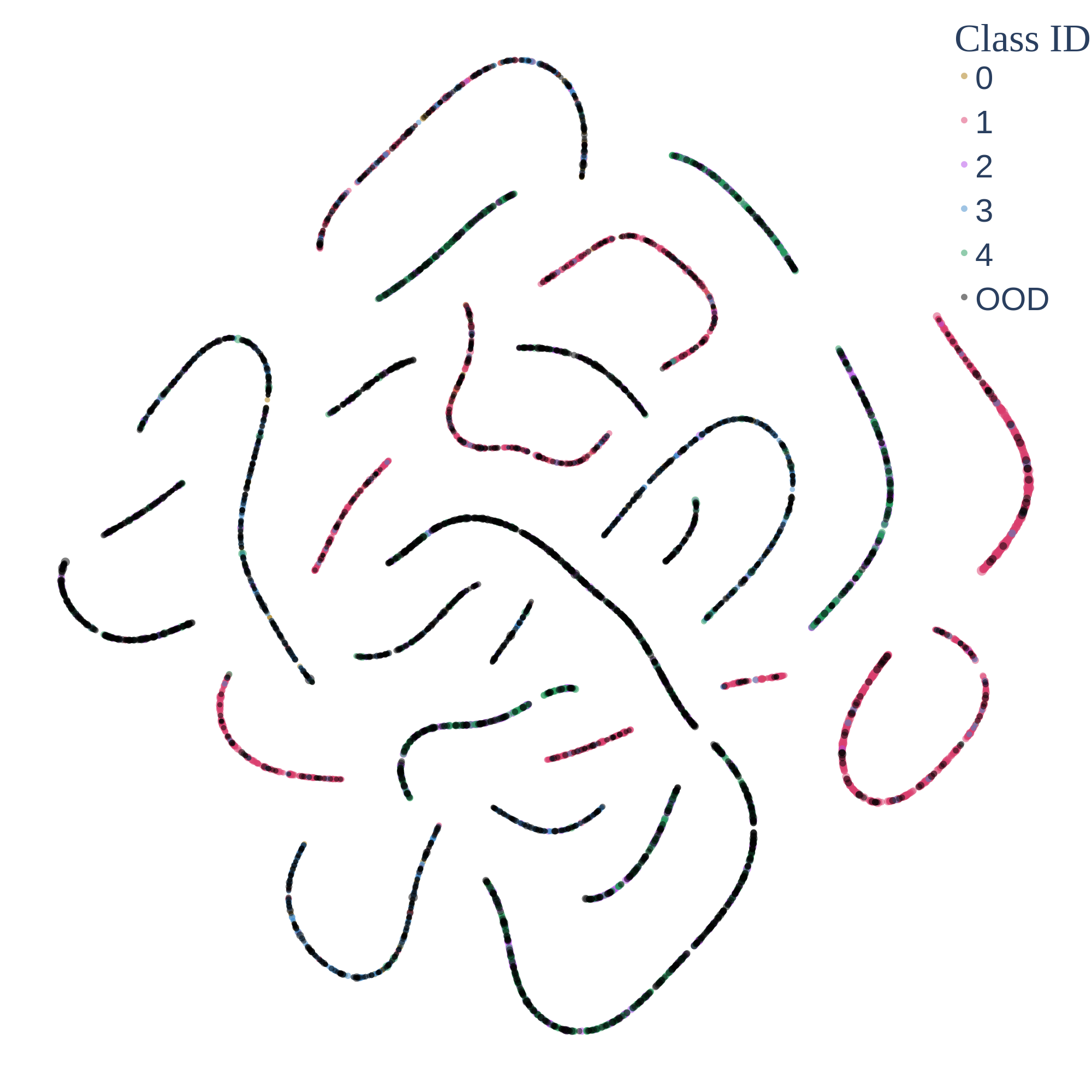} & \includegraphics[width=0.33\textwidth]{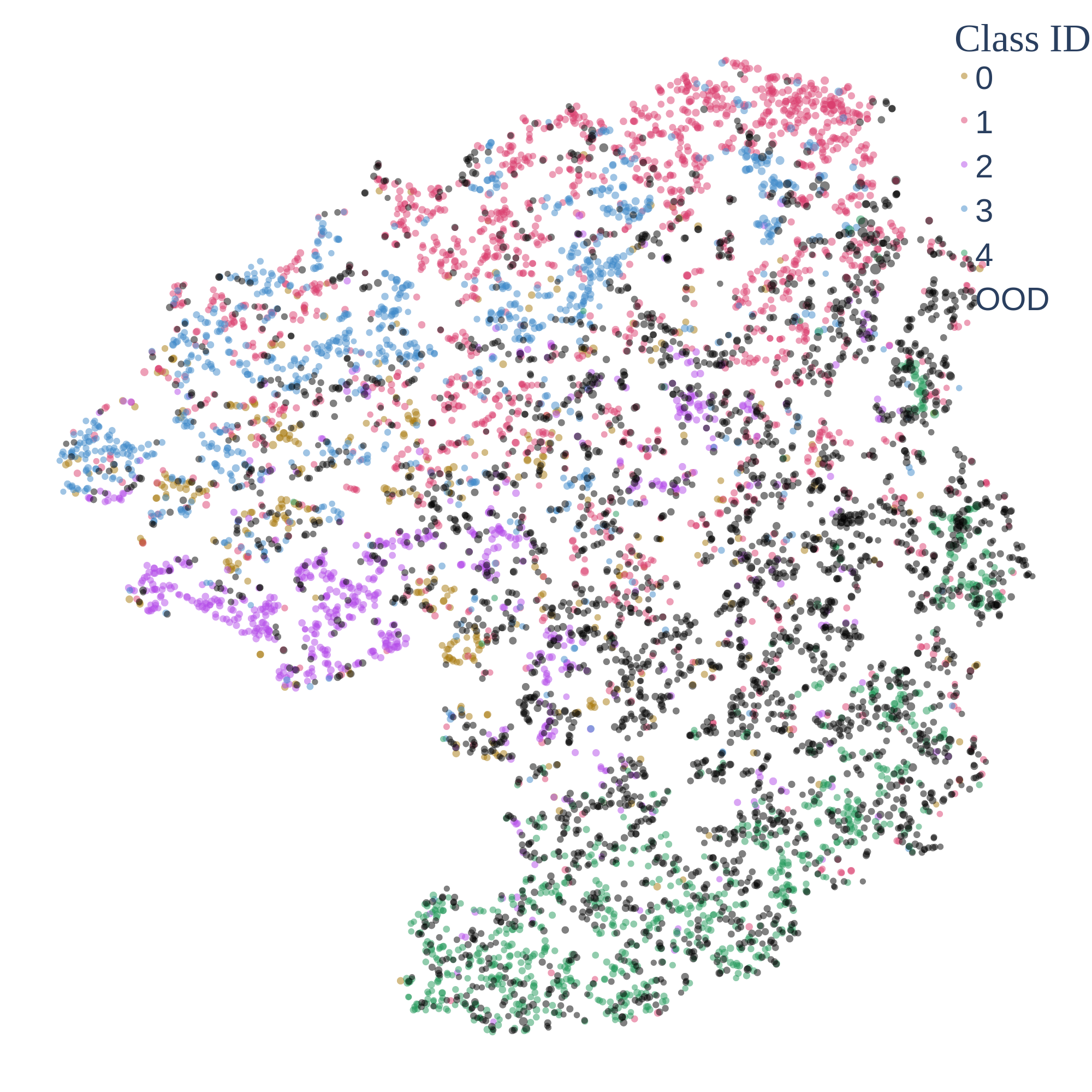} & \includegraphics[width=0.33\textwidth]{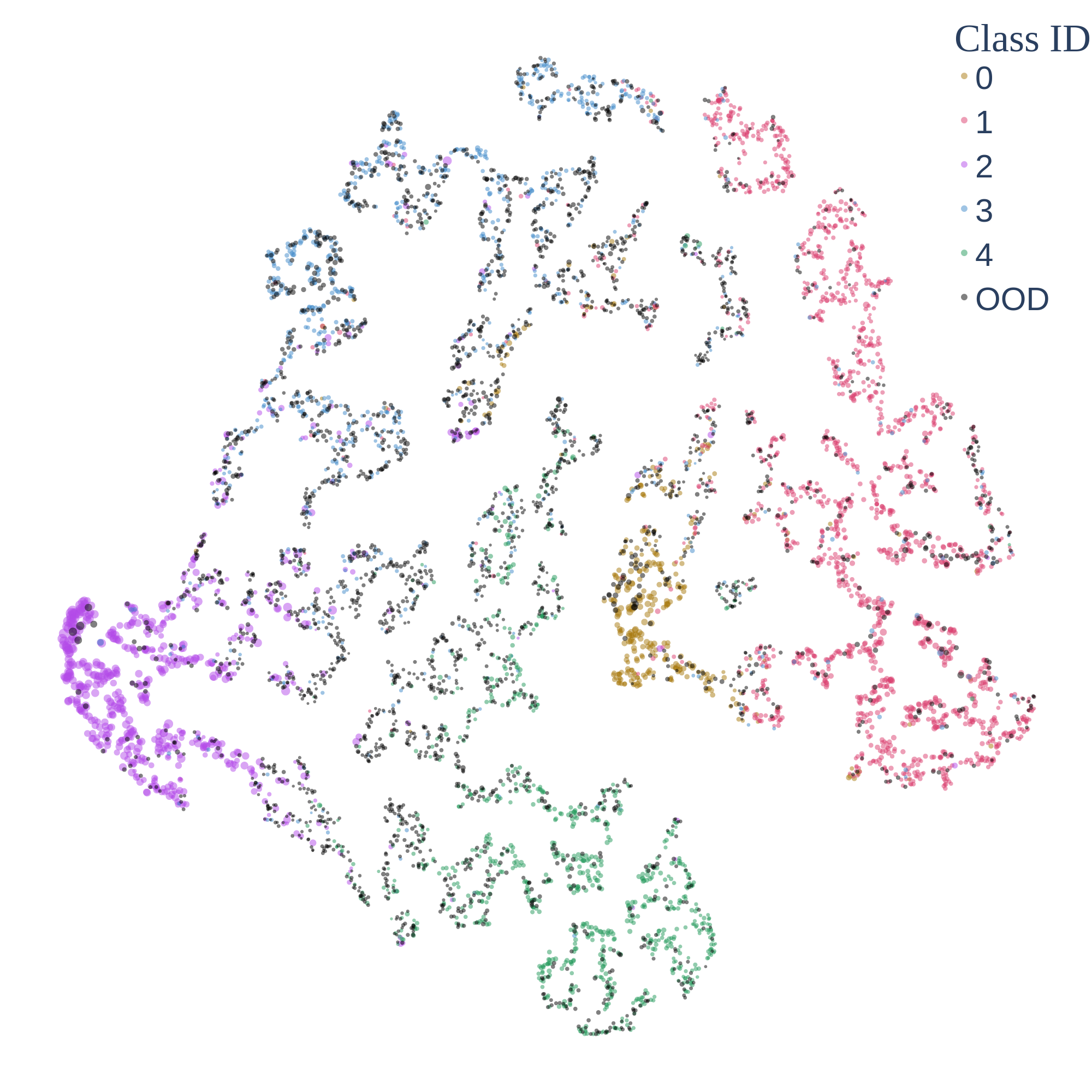} \\
      \multicolumn{3}{c}{AmazonComputers} \\
      \includegraphics[width=0.33\textwidth]{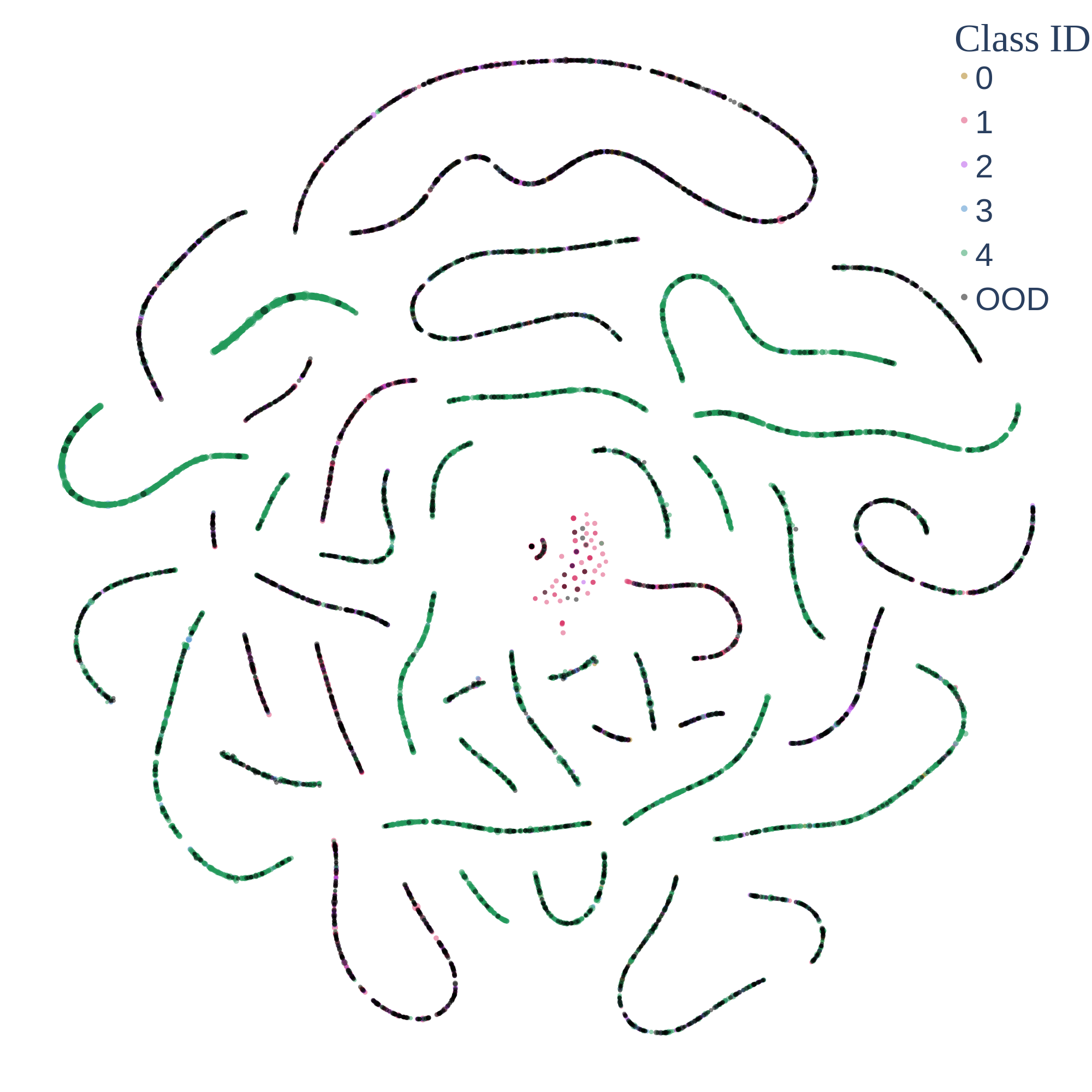} & \includegraphics[width=0.33\textwidth]{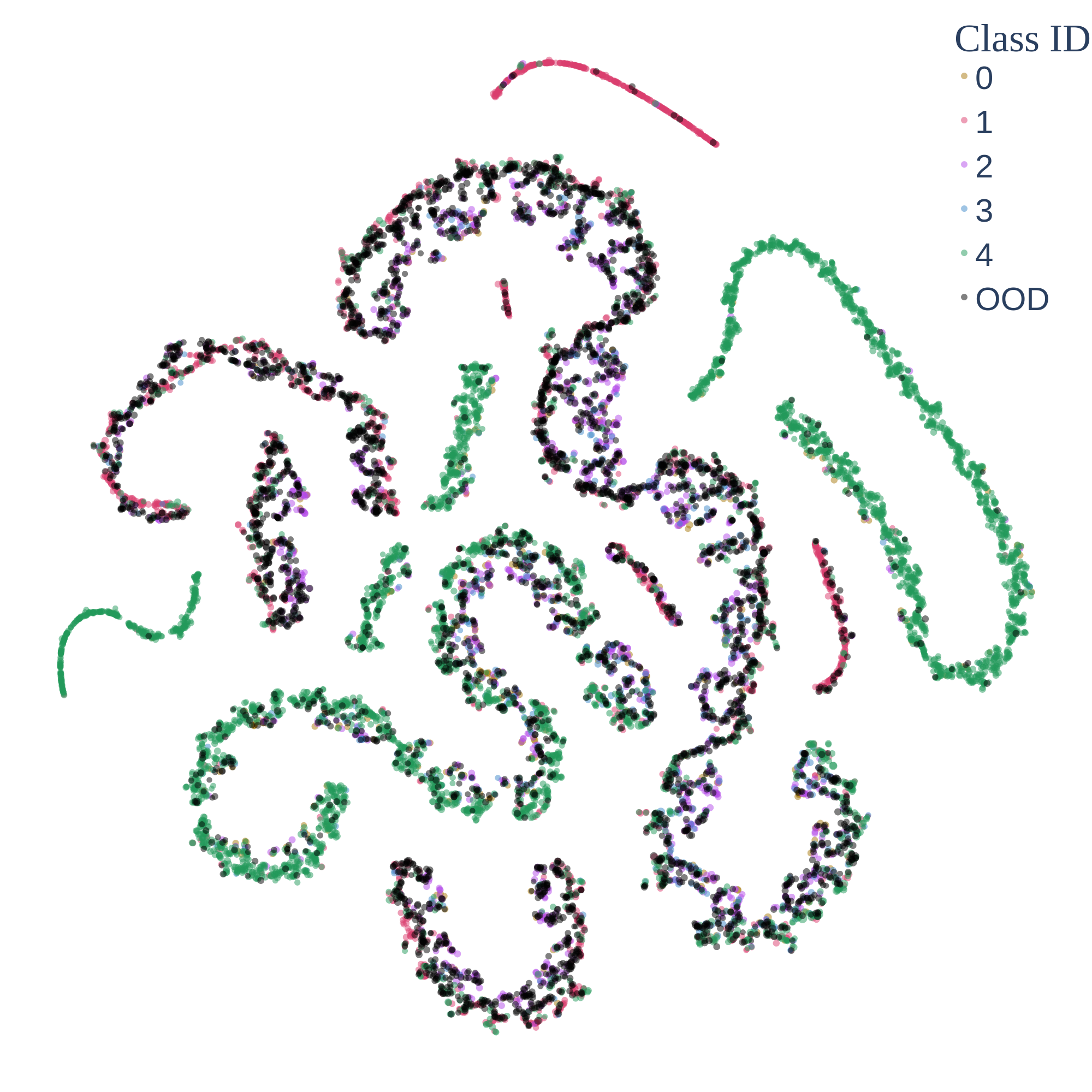} & \includegraphics[width=0.33\textwidth]{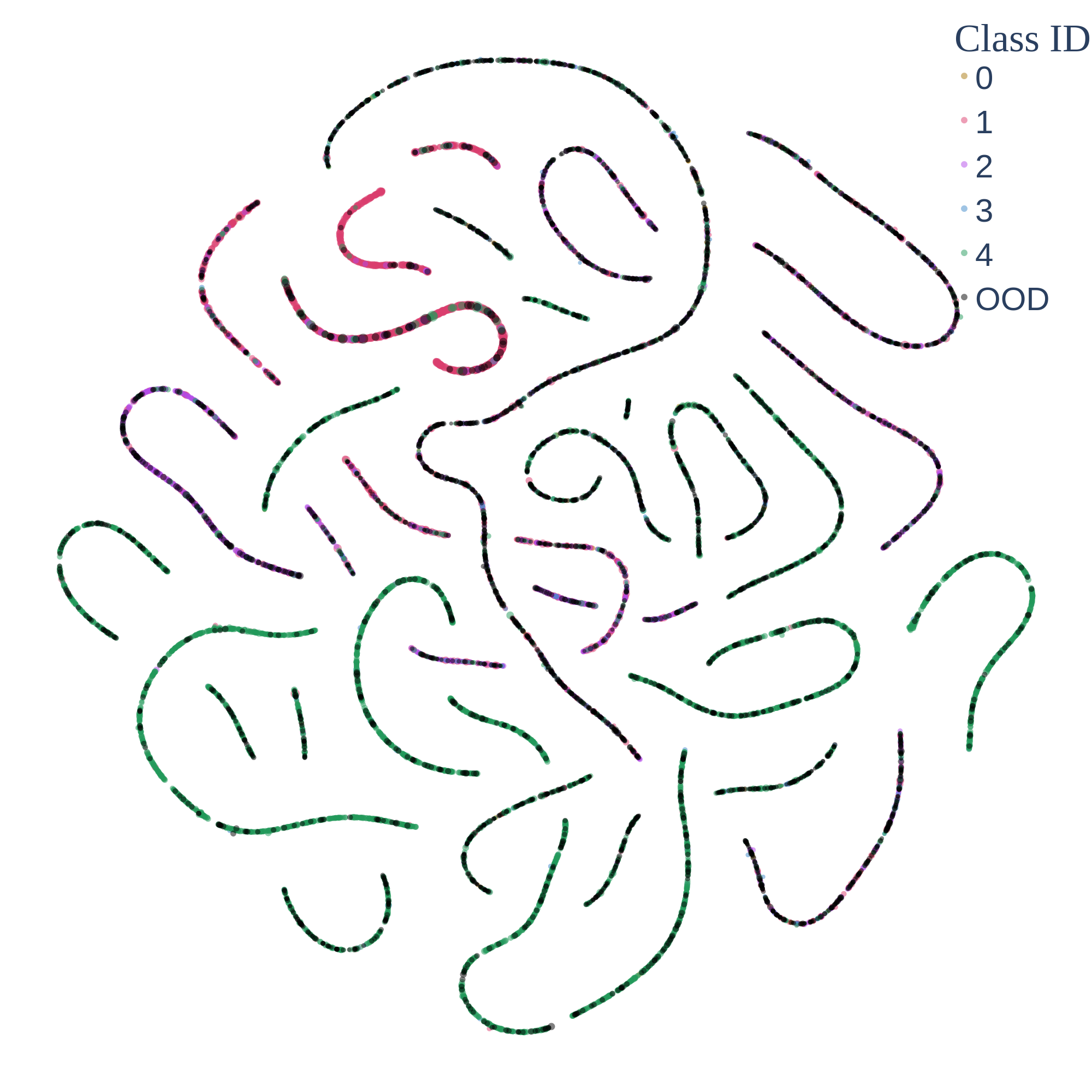} \\
      \multicolumn{3}{c}{CoauthorCS} \\
      \includegraphics[width=0.33\textwidth]{TSNE/CoauthorCS/RELU} & \includegraphics[width=0.33\textwidth]{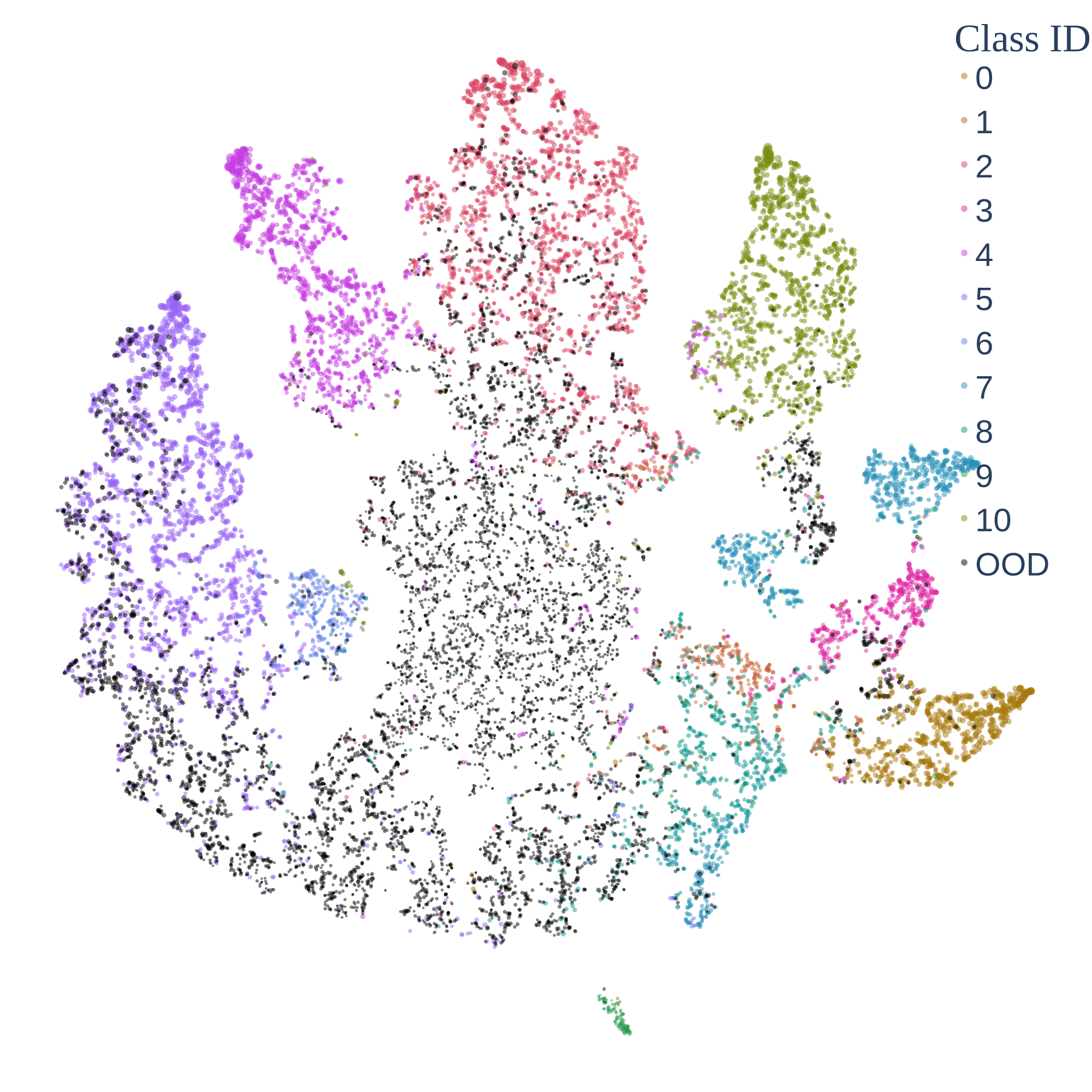} & \includegraphics[width=0.33\textwidth]{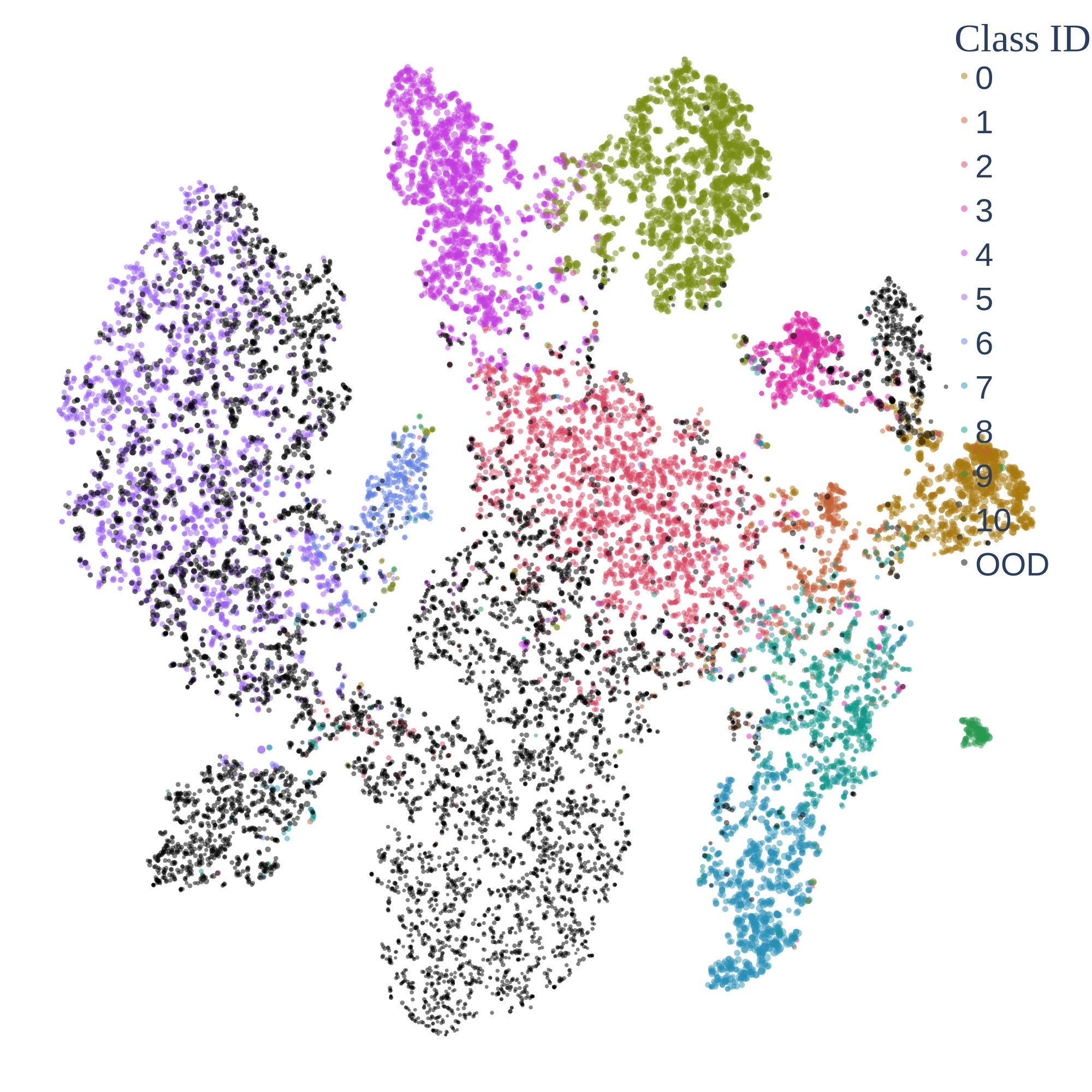} \\
        \multicolumn{3}{c}{CoauthorPhysics} \\
      \includegraphics[width=0.33\textwidth]{TSNE/CoauthorPhysics/RELU} & \includegraphics[width=0.33\textwidth]{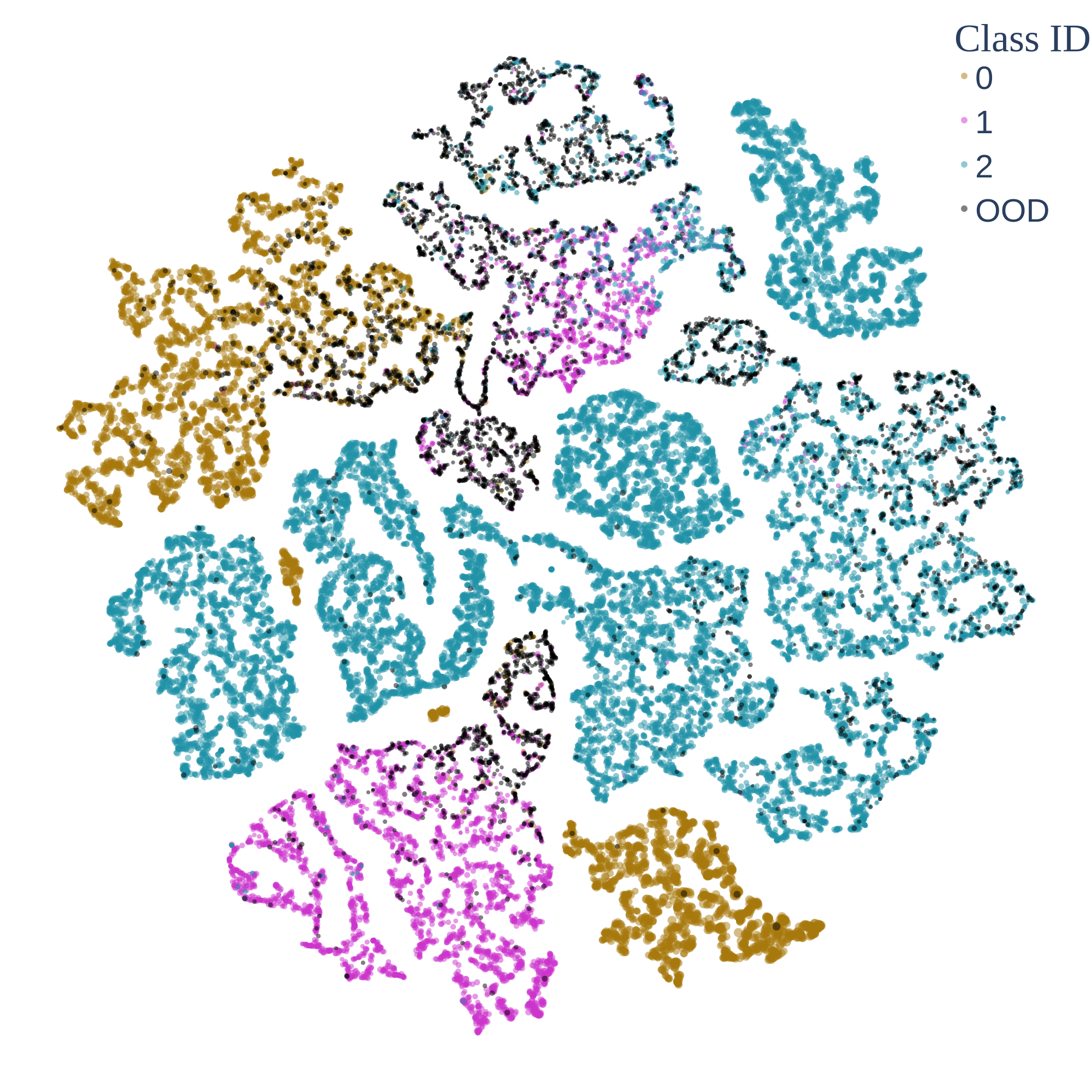} & \includegraphics[width=0.33\textwidth]{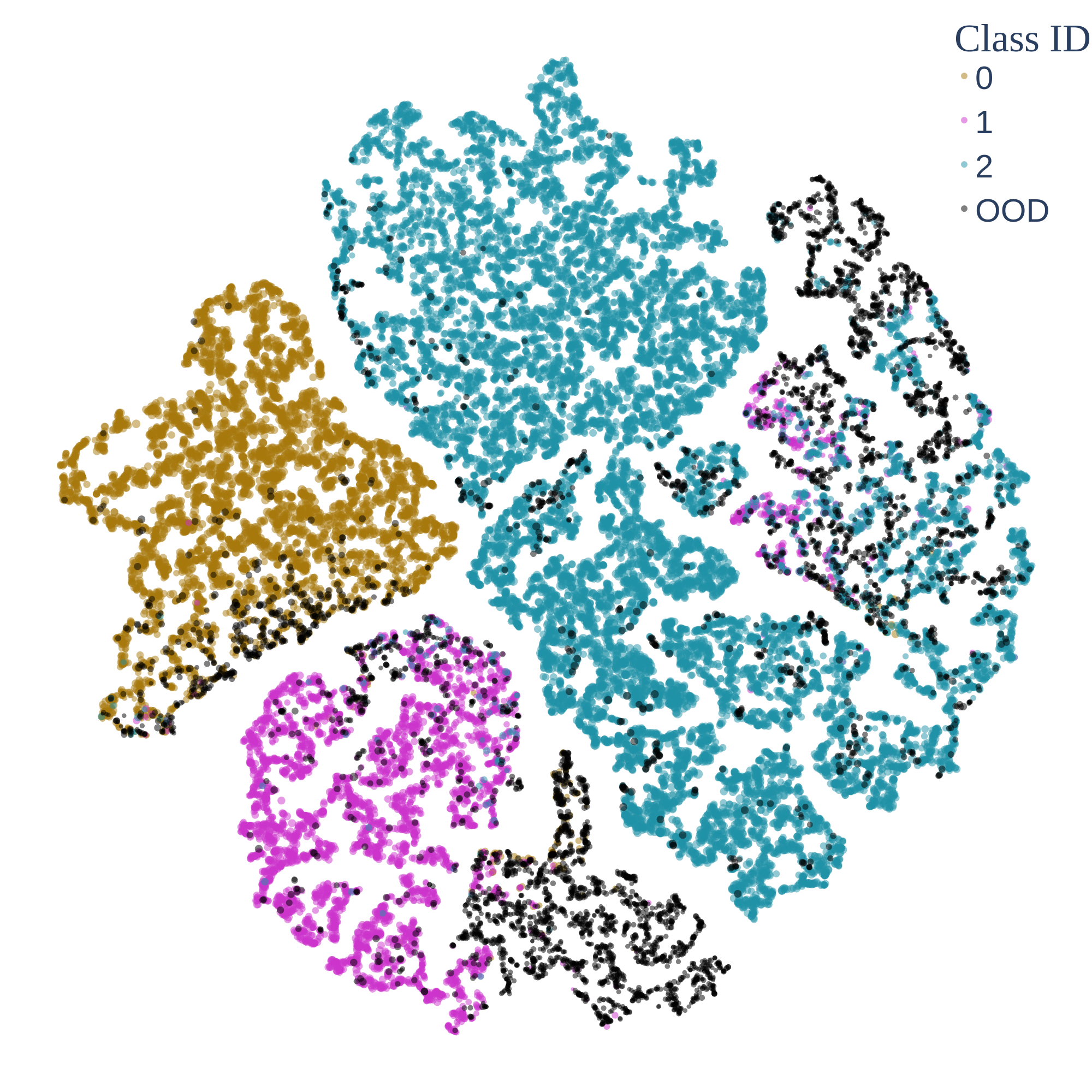} \\
    (a) RELU & (b) LogSigmoid & (c) HardTanh \\
    \end{tabular}    
    \caption{Latent representation for AmazonPhotos, AmazonComputers, CoauthorCS and CoauthorPhysics on different graph activation functions: RELU, LogSigmoid, and HardTanh.}
\end{figure}




\subsection{Ablation Study}
We show the full ablation study on three datasets: CoraML, CiteSeer and PubMed in Table \ref{tab:ablation_cont}.
\begin{table*}[th!]
\scriptsize
\centering
\vspace{-3mm}
\caption{Ablation Study with OOD Detection task (cont.)}
\begin{threeparttable}
\begin{tabular}{c|c|c|ccc|ccc}
\hline
\multirow{2}{*}{Data} & \multirow{2}{*}{Model} & \multirow{2}{*}{ID-ACC}  & \multicolumn{3}{c|}{AUROC} & \multicolumn{3}{c}{AUPR} \\
                                            &              &               & Alea w/    & Epi w/   & Epi w/o  & Alea w/    & Epi w/   & Epi w/o  \\
\hline
\multirow{4}{*}{CoraML}                     & GPN & 88.51                & 83.25          & 86.28          & \textbf{80.95}          & 75.79          & 79.97          & 72.81          \\
                                            & GPN-CE  &89.31	&82.58&	83.91&	80.88&	\textbf{76.54}&	77.60&	\textbf{76.05 }      \\
                                            & GPN-CE-ACT  & 89.87                & 83.34          & 86.96          & 75.60          & 74.96          & 79.74          & 62.73          \\
                                            & GPN-CE-ACT-GD         & \textbf{90.06}       & \textbf{83.94} & \textbf{87.20} & 76.12          & 76.26          & \textbf{80.36} & 63.32          \\
\hline
\multirow{4}{*}{Citeseer}
                                            & GPN & 69.79                & 72.46          & 70.74          & 66.65          & 55.14          & 50.52          & 44.93          \\
                                            & GPN-CE  & 70.98 &	74.20	&73.75	&68.41&	58.12	&53.55	& 46.60 \\
                                            & GPN-CE-ACT  & 71.96                & 74.72          & 77.97          & 72.28          & 60.41          & 56.04          & 50.73          \\
                                            & GPN-CE-ACT-GD       & \textbf{72.51}       & \textbf{75.22}          & \textbf{78.98} & \textbf{73.21} & \textbf{62.30}          & \textbf{58.63}          & \textbf{52.73}          \\
\hline
\multirow{3}{*}{PubMed}                    & GPN & \textbf{94.08}                & 71.84          & 73.91          & 71.2           & 57.92          & 67.19          & 59.72          \\
                                            & GPN-CE  &  93.84                & 74.19          & 78.32          & 74.50          & 59.85          & 74.11          & 64.55          \\
                                            & GPN-CE-ACT  & 93.84                & 74.19          & 78.32          & 74.50          & 59.85          & 74.11          & 64.55          \\
                                            & GPN-CE-ACT-GD       & 93.84                & \textbf{75.23 }         & \textbf{81.76} & \textbf{77.79} & \textbf{60.75}          & \textbf{78.16} & \textbf{69.19} \\
 \hline
\end{tabular}
\begin{tablenotes}\scriptsize
\centering
\item[*] Alea: Aleatoric, Epi.: Epistemic, w/: with propagation\\
\end{tablenotes}
\begin{tablenotes}\scriptsize
\item
GPN is the original results from the GPN paper with default hyperparameters and ReLU as the middle activation function, GPN-CE is the original GPN model with re-tuned dirichlet entropy regularization weight; GPN-CE-ACT is the original GPN model with re-tuned entropy regularization weight and activation function; GPN-CE-ACT-GD/(Ours) add the distance-based regularization term and tuned the two weights and activation function.
\end{tablenotes}
\end{threeparttable}
\label{tab:ablation_cont}
\end{table*}

\end{document}